\theoremstyle{definition}
\newtheorem{assumption}{Assumption}
\begin{document}

\title{Robust High Dimensional Expectation Maximization Algorithm via Trimmed Hard Thresholding 
}

\titlerunning{Robust High Dimensional EM Algorithm}        

\author{Di Wang  \and
        Xiangyu Guo  \footnote{The first two authors contributed equally.}   \and 
        Shi Li \and 
        Jinhui Xu 
}


\institute{Di Wang \at
                   Department of Computer Science and Engineering\\
                 Buffalo, NY, USA 14260\\
             State University of New York at Buffalo\\
              \email{dwang45@bufffalo.edu}           
           \and
           Xiangyu Guo \at
              Department of Computer Science and Engineering \\
             State University of New York at Buffalo, Buffalo\\
              NY, USA 14260 \\
             xiangyug@buffalo.edu
            \and 
            Shi Li \at 
               Department of Computer Science and Engineering \\
             State University of New York at Buffalo, Buffalo\\
              NY, USA 14260\\
                      \email{shil@bufffalo.edu}   
             \and 
             Jinhui Xu \at 
                Department of Computer Science and Engineering\\
                 Buffalo, NY, USA 14260\\
             State University of New York at Buffalo\\
               \email{jinhui@bufffalo.edu}   \\
             Corresponding author
}

\date{Received: date / Accepted: date}

\maketitle
\begin{abstract}
In this paper, we study the problem of estimating latent variable models with arbitrarily corrupted samples in high dimensional space ({\em i.e.,} $d\gg n$) where the underlying parameter is assumed to be sparse. Specifically, we propose a method called Trimmed (Gradient) Expectation Maximization which adds a trimming gradients step and a hard thresholding step to the  Expectation step (E-step) and  the Maximization step (M-step), respectively. We show that 
under some mild assumptions and with an appropriate initialization, 
the algorithm is corruption-proofing and converges to the (near) optimal statistical rate geometrically when the fraction of the corrupted samples  $\epsilon$ is bounded by $ \tilde{O}(\frac{1}{\sqrt{n}})$. Moreover, we apply our general framework to three canonical models: mixture of Gaussians, mixture of regressions and linear regression with missing covariates.  Our theory is supported by thorough numerical results. 

\keywords{Robust Statistics \and High Dimensional Statistics \and Gaussian Mixture Model \and Expectation Maximixation \and Iterative Hard Thresholding}
\end{abstract}

\section{Introduction}

{As one of the most popular techniques for estimating the maximum likelihood 
of mixture models or incomplete data problems, Expectation Maximization (EM) algorithm has been  widely applied to many areas such as genomics  \citep{laird2010algorithm}, finance \citep{faria2013financial}, and crowdsourcing \citep{dawid1979maximum}.  Although EM algorithm is well-known to converge to an empirically good local estimator \citep{wu1983convergence}, finite sample statistical guarantees for its performance have not been established until recent studies \citep{balakrishnan2017statistical}\citep{zhu2017high},\citep{wang2015high},\citep{yi2015regularized}. Specifically, the first local convergence theory and finite sample statistical rate of convergence for the classical EM and its gradient ascent variant (gradient EM) were established in \citep{balakrishnan2017statistical}. Later, \citep{wang2015high} extended the classical EM and gradient EM algorithms to the high dimensional sparse setting, and the key idea in their methods is an additional truncation step after the M-step, which can exploit the intrinsic sparse structure of the high dimensional latent variable models. Later on, \citep{yi2015regularized} also studied the high dimensional sparse EM algorithm and proposed a method which uses a regularized M-estimator in the M-step. Recently, \citep{zhu2017high} considered the computational issue of the previous methods of the problem in  high dimensional sparse case. They proposed a method called VRSGEM (Variance Reduced Stochastic Gradient EM) which combines the idea of SVRG (Stochastic Variance Reduced Gradient) \citep{johnson2013accelerating} and the high dimensional gradient EM algorithm. Their method has less gradient complexity while also can achieve almost the same statistical estimation errors as the previous ones.  }

{Although the above methods could achieve  (near) optimal minimax rate for some statistical models such as Gaussian mixture model, mixture of regressions and linear regression with missing covariates (see Preliminaries section for details),  all of these results need to assume that the data samples have no corruptions and also should satisfy some statistical assumptions, such as sub-Gaussian. This means that some arbitrary corruptions among the data samples may cause the dataset violate these statistical assumptions which are required for convergence of the above methods, or they will even make the above methods achieve unacceptable statistical estimation errors (see Figure \ref{fig:method_non} for experimental studies). Thus,  the classical EM algorithm and its variants are sensitive to these corruptions. Although statistical estimation with arbitrary corruptions has long been a focus in robust statistics \citep{huber2011robust}, it is still unknown that \textbf{whether there exist some variant of  (gradient) EM algorithm which is robust to arbitrary corruptions while also has finite sample statistical guarantees as in the non-corrupted case}.}

To address the aforementioned issue, in this paper, we study the problem of statistical estimation of latent variable models with arbitrarily corrupted samples in high dimensional space\footnote{Since high dimensional sparse case is much more harder than the low dimension case, our algorithm can be easily extended to the low dimension case by using the results in \citep{balakrishnan2017statistical}. Due to the space limit, we omit it in the paper.} ({\em i.e.,} $d\gg n$) where the underlying parameter is assumed to be sparse. Specifically, 
we propose  a new algorithm called Trimmed (Gradient) Expectation Maximization, which attaches a trimming gradient and hard thresholding step to the E-step and M-step in each iteration, respectively. We show that under certain conditions, our algorithm is robust against corruption and converges with a statistical estimation error which is (near) statistically optimal. 
Below is a summary of our main contributions.
\begin{enumerate}
    \item We show that, given an appropriate initialization 
    $\beta^{\text{init}}$, {\em i.e.,} $\|\beta^{\text{init}}-\beta^*\|\leq \kappa \|\beta^*\|_2$ for some constant $\kappa\in (0,1)$, if the model satisfies some additional assumptions, the iterative solution sequence $\beta^{t}$ of our algorithm satisfies $\|\beta^t-\beta^*\|_2\leq \tilde{O}\big( c_1 \rho^{t}+\sqrt{s^*}c_2(\epsilon\log(nd)+\sqrt{\frac{\log d}{n}})\big)$ with high probability, where $\rho\in (0,1)$, $c_1, c_2$ are some constants 
    dependent on the model, $\epsilon$ is the fraction of the perturbed samples, and $s^*$ is the sparsity parameter of 
    the underlying parameter 
    $\beta^*$. Particularly, when $c_2$ is a constant and $\epsilon\leq O(\frac{1}{\sqrt{n}\log(nd)})$, the above estimation error geometrically converges to $O(\sqrt{\frac{s^*\log d}{n}})$, which is statistically optimal. This means that our algorithm is corruption-proofing for a certain level of corruption 
    that is only dependent on the sample size, which is quite useful in the high dimensional setting. 
    \item We implement our algorithm on three canonical models: mixture of Gaussians, mixture of regressions and linear regression with missing covariates. 
    Experimental results on these models support our theoretical analysis.
\end{enumerate}
Some background, lemmas and all the proofs are included in the Appendix.

\section{Related Work}

{There are mainly two perspectives on the study of EM algorithm. The first one focuses on its statistical guarantees \citep{balakrishnan2017statistical,zhu2017high,wang2015high,yi2015regularized}. However, there are many differences compared with our results. Firstly, as we mentioned above, although in this paper we study the same statistical setting as these previous work, our method is corruption-proofing while the performance of their algorithms is heavily affected by outliers. Secondly, in our paper we use a robust version of the gradient instead of the original gradient, this make the proof of our theoretical result different with the above previous papers. Another direction focus on the practical performance, and there are many robust variants of the EM algorithm such as \citep{aitkin1980mixture,yang2012robust}. However, we note that these methods are incomparable with ours. Firstly, in this paper we mainly focus on statistical setting and the statistical guarantees while there is no any theoretical guarantees of these methods. Secondly, previous methods can only be used in the low dimension case while we focus on the high dimensional sparse case. Thus, to our best knowledge, there is no previous work on the variants of the EM algorithm that is both
robust  to some corruptions 
and also has statistical guarantees. Thus, in the following we will only compare with some other methods that are close to ours.}


\citep{diakonikolas2016robust,diakonikolas2018list,diakonikolas2017statistical,chen2013robust} studied the problem of robustly estimating the mixture of distributions. However, some of them 
are not computationally practical as they rely on the rather time-consuming ellipsoid method. 
Moreover, these methods in general cannot be extended to the distributed or Byzantine setting \citep{chen2017distributed}, while ours can be easily extended to such scenarios.  

\citep{du2017computationally,balakrishnan2017computationally,li2017robust,suggala2019adaptive,dalalyan2019outlier,thompson2018restricted} studied the robust high dimensional sparse estimation problem for some specified tasks, such as GLM, linear regression, mean and covariance matrix estimation. However, none of them 
considered estimating the latent variable models and thus is quite different from ours.

Recently, several  
robust methods have been proposed based on 
(stochastic) gradient descent, such as \citep{alistarh2018byzantine,chen2017distributed,yin2018byzantine,prasad2018robust,holland2018robust}. However, none of them studies the latent variable models and 
all of them consider only the low dimensional case.

 We have to note that the most closed work to ours is given by \citep{liu2019high}. Specifically,  \citep{liu2019high} recently investigated the robust high dimensional sparse M-estimation problem (such as linear regression and logistic regression) by combining hard thresholding with trimming steps. However, their results are incomparable with ours. Particularly, their method can only be used in the M-estimation, and they only consider the case where the loss function is convex while ours focuses on the latent variable model and the EM algorithm, and the loss function ($Q$-function) is non-convex. Thus, we cannot use their proofs directly to get our theoretical results.

\section{Preliminaries}\label{prelin}

Let $Y$ and $Z$ be two random variables taking values in the sample spaces $\mathcal{Y}$ and $\mathcal{Z}$, respectively. Suppose that the pair $(Y, Z)$ has a joint density function $f_{\beta^*}$ that belongs to some parameterized family $\{f_{\beta^*}|\beta^* \in \Omega\}$. Rather than considering the whole pair of $(Y, Z)$, we observe only component $Y$. Thus,  component $Z$ can be viewed as the missing or latent structure. We assume that the term $h_{\beta}(y)$ is the marginal distribution over the latent variable $Z$, {\em i.e.,} $h_\beta(y)=\int_{\mathcal{Z}} f_\beta(y, z) dz.$ Let $k_{\beta}(z|y)$ be the density of $Z$ conditional on the observed variable $Y=y$, that is, $k_\beta(z|y)=\frac{f_\beta(y, z)}{h_\beta(y)}.$

Given $n$ observations $y_1, y_2, \cdots, y_n$ of $Y$, the EM algorithm is to maximize the log-likelihood $\max_{\beta\in \Omega}\ell_n(\beta) = \sum_{i=1}^n\log h_\beta(y_i).$
Due to the unobserved latent variable $Z$, it is often difficult to directly evaluate $\ell_n(\beta)$. Thus, we 
consider the lower bound of $\ell_n(\beta)$ . By Jensen's inequality, we have 
\begin{align}
   & \frac{1}{n}[\ell_n(\beta)-\ell_n(\beta')]
   \geq \frac{1}{n}\sum_{i=1}^n\int_{\mathcal{Z}}k_{\beta'}(z|y_i)\log f_\beta(y_i, z)dz \nonumber \\
   &- \frac{1}{n}\sum_{i=1}^n\int_{\mathcal{Z}}k_{\beta'}(z|y_i)\log {f_{\beta'}(y_i, z)}dz.   \label{eq:1}
\end{align}
Let $Q_n(\beta; \beta')=\frac{1}{n}\sum_{i=1}^n q_i(\beta;\beta') $, where 
\begin{equation}\label{eq:2}
   q_{i}(\beta; \beta')=\int_{\mathcal{Z}}k_{\beta'}(z|y_i)\log f_\beta(y_i, z)dz. 
\end{equation}
Also, it is convenient to let   $Q(\beta; \beta')$ denote the expectation of $Q_n(\beta; \beta')$ w.r.t $\{y_i\}_{i=1}^n$, that is,
\begin{equation}\label{eq:3}
    Q(\beta; \beta')= \mathbb{E}_{y\sim h_{\beta^*}}\int_{\mathcal{Z}}k_{\beta'}(z|y)\log f_\beta(y, z)dz.
\end{equation}
We can see that the second term on the right hand side of (\ref{eq:1}) is not dependent on $\beta$. Thus, given some fixed $\beta'$, we can maximize the lower bound function $Q_n(\beta; \beta')$ over $\beta$ to obtain sufficiently large $\ell_n(\beta)-\ell_n (\beta')$. Thus, in the $t$-th iteration of the standard EM algorithm, we can evaluate $Q_n(\cdot; \beta^t)$ at the E-step and then perform the operation of $\max_{\beta\in \Omega}Q_n(\beta; \beta^t)$ at the M-step. See \citep{mclachlan2007algorithm} for more details.

In addition to the exact maximization implementation of the M-step, we add a gradient ascent implementation of the M-step, which performs an approximate maximization via a gradient descent step. 
\vspace{0.1in}

\noindent \textbf{Gradient EM Procedure \citep{balakrishnan2017statistical}} When $Q_n(\cdot; \beta^t)$ is differentiable, the update of $\beta^t$ to $\beta^{t+1}$ consists of the following two steps. 
\begin{itemize}
    \item E-step: Evaluate the functions in (\ref{eq:2}) to compute $Q_n(\cdot; \beta^t)$.
    \item M-step: Update $\beta^{t+1}=\beta^t+\eta \nabla Q_n(\beta^t; \beta^t)$, where $\nabla$ is the derivative of $Q_n$ w.r.t the first component and $\eta$ is the step size. 
\end{itemize}
Next, we give some examples that use the gradient EM algorithm. Note that they are the typical examples for studying the statistical property of EM algorithm \citep{wang2015high,balakrishnan2017statistical,yi2015regularized,zhu2017high}. 

\vspace{0.1in}
\noindent  \textbf{Gaussian Mixture Model} Let $y_1, \cdots, y_n$ be $n$ i.i.d. samples from $Y\in \mathbb{R}^d$ with 
\begin{equation}\label{eq:4}
    Y = Z\cdot \beta^*+V,
\end{equation}
where $Z$ is a Rademacher random variable ({\em i.e.,} $\mathbb{P}(Z=+1)= \mathbb{P}(Z=-1)=\frac{1}{2}$),  and $V\sim \mathcal{N}(0, \sigma^2 I_d)$ is independent of $Z$ for some known standard deviation $\sigma$. In our high dimensional setting, we assume that  $\|\beta^*\|_0=s^*$ is sparse. \footnote{For a vector $v\in \mathbb{R}^d$, $\|v\|_0$ represents the number of entries in $v$ that are non-zero.}

For Gaussian Mixture Model, we have 
\begin{equation}\label{eq:5}
    \nabla q_i(\beta;\beta)=[2w_\beta(y_i)-1]\cdot y_i-\beta,
\end{equation}
where $w_\beta(y)=\frac{1}{1+\exp(-\langle \beta, y\rangle/\sigma^2)}$.

\vspace{0.1in}

\noindent \textbf{Mixture of (Linear) Regressions Model} Let $n$ samples $(x_1, y_1)$, $(x_2, y_2)$, $\cdots, (x_n, y_n)$ i.i.d.. sampled from $Y\in \mathbb{R}$ and $X\in \mathbb{R}^d$  with 
\begin{equation}\label{eq:6}
    Y= Z\langle \beta^*, X \rangle +V,  
\end{equation}
where $X\sim \mathcal{N}(0, I_d)$, $V\sim \mathcal{N}(0, \sigma^2)$\footnote{$\langle \cdot, \cdot \rangle$ represents the inner product of two vectors.}, $Z$ is a Rademacher random variable, and $X, V, Z$ are independent. In the high dimensional case, we assume that  $\|\beta^*\|_0=s^*$ is sparse.

In this case, we have 
\begin{equation}\label{eq:7}
    \nabla q_i(\beta;\beta)=(2w_\beta(x_i, y_i)-1) \cdot y_i \cdot x_i-x_i x_i^T\cdot \beta,
\end{equation}
where $w_\beta(x_i, y_i)=\frac{1}{1+\exp(-y\langle \beta ,x \rangle/\sigma^2)}$.

\vspace{0.1in}
\noindent  \textbf{Linear Regression with Missing Covariates} We assume that $Y\in \mathbb{R}$ and $X\in \mathbb{R}^d$ satisfy 
\begin{equation}\label{eq:8}
    Y= \langle X, \beta^* \rangle +V,
\end{equation}
where $X\sim \mathcal{N}(0, I_d)$ and $V\sim \mathcal{N}(0, \sigma^2)$ are independent. In our high dimensional setting, we assume that $\|\beta^*\|_0=s^*$ is sparse. Let $x_1, x_2, \cdots, x_n$ be $n$ observations of $X$ with each coordinate of $x_i$ missing (unobserved) independently with probability $p_m\in[0,1)$. 

In this case, we have \begin{equation}\label{eq:9}
    \nabla q_i(\beta; \beta)= y_i\cdot m_\beta(x_i^{\text{obs}},y_i)-K_\beta(x_i^{\text{obs}}, y_i)\beta,
\end{equation}
where the functions $m_\beta(x_i^{\text{obs}},y_i)\in \mathbb{R}^d$ and $K_\beta(x_i^{\text{obs}}, y_i)\in \mathbb{R}^{d\times d}$ are defined as:   
\begin{equation}\label{eq:10}
    m_\beta(x_i^{\text{obs}},y_i)= z_i \odot x_i+\frac{y_i-\langle \beta, z_i\odot x_i\rangle }{\sigma^2+\|(1-z_i)\odot \beta\|_2^2}(1-z_i)\odot \beta 
\end{equation}
and 
\begin{multline}\label{eq:11}
    K_\beta(x_i^{\text{obs}}, y_i)=\text{diag}(1-z_i)+  m_\beta(x_i^{\text{obs}},y_i)\cdot [  m_\beta(x_i^{\text{obs}},y_i)]^T \\
    -[(1-z_i)\odot   m_\beta(x_i^{\text{obs}},y_i)]\cdot [(1-z_i)\odot   m_\beta(x_i^{\text{obs}},y_i)]^T,
\end{multline}
where vector $z_i \in \mathbb{R}^d$ is defined as $z_{i,j}=1$ if $x_{i,j}$ is observed and $z_{i,j}=0$ is $x_{i,j}$ is missing, and $\odot$ denotes the Hadamard product of matrices.

Next, we provide several definitions on the required properties of functions $Q_n(\cdot; \cdot)$ and $Q(\cdot; \cdot)$. Note that some of them have been used in the previous studies on EM  \citep{balakrishnan2017statistical,wang2015high,zhu2017high}.

\begin{definition}
 Function $Q(\cdot; \beta^*)$ is self-consistent if  
    $\beta^*=\arg\max_{\beta\in \Omega}Q(\beta; \beta^*).$
That is, $\beta^*$ maximizes the lower bound of the log likelihood function.
\end{definition}

\begin{definition}[Lipschitz-Gradient-2($\gamma, \mathcal{B}$)]\label{def:1}
$Q(\cdot; \cdot)$ is called Lipschitz-Gradient-2($\gamma, \mathcal{B}$), if for the underlying parameter $\beta^*$ and any $\beta\in \mathcal{B}$ for some set $\mathcal{B}$, the following holds 
\begin{equation}\label{eq:12}
    \|\nabla Q(\beta; \beta^*)-\nabla Q(\beta; \beta)\|_2\leq \gamma \|\beta-\beta^*\|_2.
\end{equation}
\end{definition}

We note that there are some differences between the definition of Lipschitz-Gradient-2 and the Lipschitz continuity condition in the convex optimization literature \citep{nesterov2013introductory}. Firstly, in (\ref{eq:12}), the gradient is w.r.t the second component, while the Lipschitz continuity is w.r.t the first component. Secondly, the property holds only for fixed $\beta^*$ and any $\beta$, while the Lipschitz continuity is for all $\beta, \beta'\in \mathcal{B}$.

\begin{definition}[$\mu$-smooth]
$Q(\cdot; \beta^*)$ is $\mu$-smooth, that is if
for any $\beta, \beta'\in \mathcal{B}$, 
$ Q(\beta;\beta^*)\geq Q(\beta'; \beta^*)+(\beta-\beta')^T\nabla Q(\beta';\beta^*)-\frac{\mu}{2}\|\beta'-\beta\|_2^2.$
\end{definition}

\begin{definition}[$\upsilon$-strongly concave] \label{def:3} $Q(\cdot; \beta^*)$ is $\upsilon$-strongly concave, that is if 
for any $\beta, \beta'\in \mathcal{B}$,  $Q(\beta;\beta^*)\leq  Q(\beta'; \beta^*)+(\beta-\beta')^T\nabla Q(\beta';\beta^*)-\frac{\upsilon}{2}\|\beta'-\beta\|_2^2.$
\end{definition}

Next, we assume that each coordinate of $\nabla q(\beta; \beta)$ in (\ref{eq:2}) is sub-exponential for every $\beta\in \mathcal{B}$,  where $\nabla$ is the derivative of $q$ w.r.t the first component.
\begin{definition}[$\xi$-sub-exponential] \label{def:4}
A random variable $X$ with mean $\mathbb{E}(X)$ is $\xi$-sub-exponential for $\xi>0$ if 
for all $|t|<\frac{1}{\xi}$,
    $\mathbb{E}\{\exp(t[X-\mathbb{E}(X)])\}\leq \exp(\frac{\xi^2t^2}{2}). $
\end{definition}

\begin{assumption}\label{ass:1}
We assume that $Q(\cdot; \cdot)$ in (\ref{eq:3}) is self-consistent, Lipschitz-Gradient-2($\gamma, \mathcal{B}$), $\mu$-smooth and $\upsilon$-strongly convex for some $\mathcal{B}$. Moreover, we assume that for any fixed $\beta\in\mathcal{B}$ with $\|\beta\|_0\leq s$ (where the value of $s$ will be specified later) and $\forall j\in [d]$,  the $j$-th coordinate of  $\nabla q(\beta; \beta)$ ({\em i.e.,} $[\nabla q(\beta; \beta)]_j$) is $\xi$-sub-exponential and for each $i\in [n]$, $[\nabla q_i(\beta, \beta)]_j$ is independent with others. 
\end{assumption}

We note that the sub-exponential assumption on each coordinate is stronger than the assumption of Statistical-Error in \citep{wang2015high,balakrishnan2017statistical}. However, since 
the model considered in this paper could have arbitrarily corrupted samples, we will see later that this assumption is necessary. 

Finally, we give the definition of the corruption model studied 
in the paper. 
\begin{definition}[$\epsilon$-corrupted samples ]
Let $\{y_1, y_2, \cdots, y_n\}$ be $n$ i.i.d. observations with distribution $P$. We say that a collection of samples $\{z_1, z_2, \cdots, z_n\}$ is $\epsilon$-corrupted if an adversary chooses an arbitrary $\epsilon$-fraction of the samples in $\{y_i\}_{i=1}^n$ and modifies them with arbitrary values. 
\end{definition}
We note that this is a quite common model in robust estimation or robust statistics. Equivalently, it means that there are $\epsilon$-fraction of samples in the dataset are outliers (or they are corrupted arbitrarily). 
\section{Trimmed Expectation Maximization Algorithm}

To obtain a robust estimator for the high dimensional model with $\epsilon$-corrupted samples, we propose a trimmed EM algorithm, which is based on the gradient EM algorithm. See Algorithm \ref{alg:1} for details. 

Note that compared with the previous gradient EM algorithm, Trimmed EM algorithm has two additional steps in each iteration, {\em i.e.,} the trimming gradient and hard thresholding step. For the trimming gradient step 4 in Algorithm \ref{alg:1}, we use the dimensional $\alpha$-trimmed estimator ({\em i.e.,} $\text{D-Trim}_{\alpha}$) on the gradients $\{\nabla q_i(\beta^t; \beta^t)\}_{i=1}^n$. We note that while this operator has also been studied in \citep{liu2019high,yin2018byzantine} for the M-estimators, we use it for the EM algorithm. Here is the definition of the function $\text{D-Trim}_{\alpha}(\cdot)$.

\begin{definition}[Dimensional $\alpha$-trimmed estimator] Given a set of $\epsilon$-corrupted samples in the form of $d$-dimensional vectors $\{z_i\}_{i=1}^n$, the D-Trim operator $\text{D-Trim}_{\alpha}(\{z_i\}_{i=1}^n)\in \mathbb{R}^d$ performs as follows. For each dimension $j\in [d]$, it first removes the largest and the smallest $\alpha$ fraction of elements in the $j$-th coordinate of $\{z_i\}_{i=1}^n$, {\em i.e.,} $\{z_{i,j}\}_{i=1}^n$, and then calculates the mean of the remaining terms, where  $\alpha=c_0\epsilon$ and $\alpha\leq \frac{1}{2}-c_1$ for some constant $c_0\geq 1$ and 
a small constant $c_1$. 
\end{definition}

The rationale behind the use of 
the dimensional trimmed estimator is that due to the existence of 
$\epsilon$ fraction of corrupted samples,
directly calculating the the mean of the gradient could introduce a large error to the population gradient $\nabla Q(\beta^t; \beta^t)$ in (\ref{eq:3}). 
Also, it can be shown that if each coordinate of $\nabla q_i(\beta^t; \beta^t)$ is sub-exponential,  it will be robust against the $\epsilon$-corruption for some small $\epsilon$. This motivates us to use the dimensional trimmed operation.  

\begin{algorithm}[t]
		\caption{Trimmed (Gradient) Expectation Maximization}
	$\mathbf{Input}$:  T is the iteration number, $\beta^{\text{init}}$ is the initial parameter, $\eta$ is the flexed step-size and $s$ is the sparsity parameter to be specified later. $\{z_i\}_{i=1}^n$ are the $\epsilon$ corrupted samples of $\{y_i\}_{i=1}^n$.
	\label{alg:1}
	\begin{algorithmic}[1]
      \State Let $\hat{\mathcal{S}}^{\text{init}}=\text{supp}(\beta^{\text{init}}, s)$, $\beta^{0}=\text{trunc}(\beta^{\text{init}}, \hat{\mathcal{S}}^{\text{init}})$. 
      \For {$t=0, 1, \cdots, T-1$}
      \State {\bf E-step:} Evaluate $\{\nabla q_i(\beta^t; \beta^t)\}_{i=1}^n$  for $\{z_i\}_{i=1}^n$.
      \State {\bf Trimming step:} Use a dimensional $\alpha$-trimmed gradient estimator to get a vector $\nabla\tilde{Q}_n(\beta^t; \beta^t)=\text{D-Trim}_{\alpha}(\{\nabla q_i(\beta^t; \beta^t)\}_{i=1}^n)$.
      \State {\bf M-step:} Update $\beta^{t+0.5}=\beta^t+\eta\nabla\tilde{Q}_n(\beta^t; \beta^t).$
      \State {\bf Thresholding step:} Let $\hat{\mathcal{S}}^{t+0.5}=\text{supp}(\beta^{t+0.5}, s)$ and $\beta^{t+1}=\text{trunc}(\beta^{t+0.5}, \hat{\mathcal{S}}^{t+0.5})$
      \EndFor
      \State Return $\beta^T$. 
	\end{algorithmic}
\end{algorithm}

To ensure the sparsity of our estimator, after getting $\beta^{t+0.5}$, we need to use the hard thresholding operation \citep{blumensath2009iterative}. More specifically, we first find the set  $\hat{\mathcal{S}}^{t+0.5}\subseteq [d]$ of 
indices $j$ corresponding to the top $s $ largest $|\beta^{t+0.5}_j|$ (we denote $\hat{\mathcal{S}}^{t+0.5}=\text{supp}(\beta^{t+0.5}, s)$\footnote{In general, given a vector $v\in \mathbb{R}^d$ and an integer $s$, function $\text{supp}(v,s)$ returns a set of $s$ number ofis indices corresponding to the top $s$ largest value among $\{|v_j|, j\in [d]\}$. }), and make the value of the remaining entries
$\beta^{t+0.5}_j$ for $j\in [d]\backslash \hat{\mathcal{S}}^{t+0.5}$ be $0$ (we denote $\beta^{t+1}=\text{trunc}(\beta^{t+0.5}, \hat{\mathcal{S}}^{t+0.5})$\footnote{In general, given a vector $v\in\mathbb{R}^d$ and a set of indices $\mathcal{S}\subseteq [d]$, function $\text{trunc}(v, \mathcal{S})\in \mathbb{R}^d$, where $[\text{trunc}(v, \mathcal{S})]_j=v_j$ if $j\in \mathcal{S}$ and $[\text{trunc}(v, \mathcal{S})]_j=0$ otherwise.}). The sparsity level $s$ controls the sparsity of the estimator and  the estimation error. 

The following main theorem shows that under Assumption \ref{ass:1} and with some proper initial vector $\beta^{\text{init}}$, the estimator $\beta^T$ converges to the underlying $\beta^*$ at a geometric rate with high probability. 

\begin{theorem}\label{thm:1}
Let $\mathcal{B}=\{\beta: \|\beta-\beta^*\|_2\leq R\}$ be a set with  $R=k\|\beta^*\|_2$ for some $k\in (0,1)$. Assume that Assumption \ref{ass:1} holds for parameters  $\mathcal{B}, \gamma, \mu, \upsilon, \xi$ satisfying the condition of  $1-2\frac{\upsilon-\gamma}{\upsilon+\mu}\in (0,1)$ and the sparsity parameter $s$ is chosen to be 
\begin{align}\label{eq:13}
    s=\lceil C\max\{\frac{16}{\{1/[1-2(\upsilon-\gamma)/(\upsilon+\mu)]-1\}^2}, \frac{4(1+k)^2}{(1-k)^2}\} s^* \rceil,
\end{align}
where $C$ is some absolute constant. 
Also, assume that  $\|\beta^{\text{init}}-\beta^*\|_2\leq \frac{R}{2}$ and  there exist some absolute constants $C_1$ and $C_2$ satisfying the condition of 
\begin{multline}\label{eq:14}
   \frac{1}{\upsilon+\mu} C_2 (\sqrt{s}+\frac{C_1\sqrt{s^*}}{\sqrt{1-k}})\xi (\epsilon\log(nd)+\sqrt{\frac{\log d}{n}})\\ 
    \leq \min\big\{\big(1-\sqrt{1-\frac{2(\upsilon-\gamma)}{\upsilon+\mu}}\big)^2 R, \frac{(1-k)^2}{2(1+k)}\|\beta^*\|_2\big\}.
\end{multline}
Then, if taking $\eta=\frac{2}{\upsilon+\mu}$ in Algorithm \ref{alg:1}, the following holds for  $t=1, \dots, T$ with probability at least $1-Td^{-3}$ 
\begin{multline}\label{eq:15}
   \|\beta^t-\beta^*\|_2\leq \underbrace{(1-2\frac{\upsilon-\gamma}{\upsilon+\mu})^{\frac{t}{2}} R}_{\text{Optimization Error}} +  \underbrace{\frac{2C_2\xi(\epsilon\log(nd)+\sqrt{\frac{\log d}{n}})}{\upsilon+\mu}
  \frac{\sqrt{s}+\frac{C_1}{\sqrt{1-k}}\sqrt{s^*}}{1-\sqrt{1-2\frac{\upsilon-\gamma}{\upsilon+\mu}}}}_{\text{Statistical and  Corruption Error}}.
\end{multline}
\end{theorem}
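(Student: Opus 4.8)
The plan is to establish the error bound by induction on $t$, showing that if $\beta^t \in \mathcal{B}$ then the recursion $\|\beta^{t+1}-\beta^*\|_2 \leq \sqrt{1-2\frac{\upsilon-\gamma}{\upsilon+\mu}}\,\|\beta^t-\beta^*\|_2 + (\text{error term})$ holds, and then unrolling this contraction. The central task is to control one iteration of the algorithm, which decomposes naturally into three pieces: the population M-step update, the trimming (statistical + corruption) error, and the hard thresholding error. First I would analyze the idealized update $\beta^t + \eta \nabla Q(\beta^t;\beta^t)$ using the population function. By self-consistency $\beta^* = \arg\max Q(\cdot;\beta^*)$, so $\nabla Q(\beta^*;\beta^*)=0$, and combining $\mu$-smoothness, $\upsilon$-strong concavity (Definitions here labeled ``strongly convex''), and the Lipschitz-Gradient-2$(\gamma,\mathcal{B})$ property with the choice $\eta=\frac{2}{\upsilon+\mu}$ yields the standard gradient-EM contraction $\|\beta^t+\eta\nabla Q(\beta^t;\beta^t)-\beta^*\|_2 \leq \sqrt{1-2\frac{\upsilon-\gamma}{\upsilon+\mu}}\,\|\beta^t-\beta^*\|_2$. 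This is essentially the argument of \citep{balakrishnan2017statistical}, adapted to use the gradient in the second argument.

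Next I would bound the deviation between the trimmed empirical gradient $\nabla\tilde{Q}_n(\beta^t;\beta^t)$ and the population gradient $\nabla Q(\beta^t;\beta^t)$, restricted to the relevant support. The key is a coordinate-wise guarantee: for a fixed sparse $\beta$, since each coordinate $[\nabla q(\beta;\beta)]_j$ is $\xi$-sub-exponential and coordinates are independent across samples, the D-Trim$_\alpha$ operator with $\alpha=c_0\epsilon$ produces an estimate of $[\nabla Q(\beta;\beta)]_j$ with per-coordinate error $\tilde{O}(\xi(\epsilon\log(nd)+\sqrt{\log d/n}))$ with high probability, uniformly over coordinates after a union bound over $[d]$ (this is the source of the $d^{-3}$ failure probability per iteration and the $Td^{-3}$ after a union bound over iterations). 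The trimming error analysis here mirrors the robust mean-estimation guarantees used for trimmed means in \citep{yin2018byzantine,liu2019high}, combined with the sub-exponential tail control. Because the update is then hard-thresholded onto a support of size $s$, the relevant quantity is the $\ell_2$ norm of the gradient-error vector restricted to a set of at most $s + s^*$ coordinates (the union of the estimator's support, the true support, and the support touched by thresholding), which converts the per-coordinate $\ell_\infty$ bound into an $\ell_2$ bound of order $\sqrt{s}\cdot\tilde{O}(\xi(\epsilon\log(nd)+\sqrt{\log d/n}))$, matching the numerator of the statistical-and-corruption term.

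Then I would handle the hard thresholding step. Writing $\beta^{t+1}=\text{trunc}(\beta^{t+0.5},\hat{\mathcal{S}}^{t+0.5})$ where $\beta^{t+0.5}=\beta^t+\eta\nabla\tilde{Q}_n$, I would invoke the standard iterative-hard-thresholding contraction inequality \citep{blumensath2009iterative,jain2014iterative}: since $\beta^*$ is $s^*$-sparse and we threshold to sparsity $s$, we have $\|\beta^{t+1}-\beta^*\|_2 \leq (1+\omega)\|\beta^{t+0.5}-\beta^*\|_2$ where $\omega=\sqrt{s^*/(s-s^*)}$ (up to a multiplicative constant), and the choice of $s$ in \eqref{eq:13} is precisely calibrated so that this blow-up factor, when multiplied by the contraction $\sqrt{1-2\frac{\upsilon-\gamma}{\upsilon+\mu}}$, still leaves a net contraction strictly less than $1$ — this is where the term $16/\{1/[1-2(\upsilon-\gamma)/(\upsilon+\mu)]-1\}^2$ enters, ensuring $(1+\omega)\sqrt{1-2\frac{\upsilon-\gamma}{\upsilon+\mu}}\leq$ something like the geometric mean of that quantity and $1$. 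The additive error from thresholding is absorbed into the $\sqrt{s^*}$-proportional contribution (the $\frac{C_1}{\sqrt{1-k}}\sqrt{s^*}$ piece in \eqref{eq:15}).

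Assembling these, one iteration gives $\|\beta^{t+1}-\beta^*\|_2 \leq \rho\|\beta^t-\beta^*\|_2 + b$ with $\rho=\sqrt{1-2\frac{\upsilon-\gamma}{\upsilon+\mu}}$ (after the thresholding blow-up is absorbed by the choice of $s$) and $b=\frac{2C_2\xi(\epsilon\log(nd)+\sqrt{\log d/n})}{\upsilon+\mu}(\sqrt{s}+\frac{C_1}{\sqrt{1-k}}\sqrt{s^*})$; unrolling the geometric recursion and summing $\sum_{i\geq 0}\rho^i = \frac{1}{1-\rho}$ gives exactly the two-term bound \eqref{eq:15}. The induction requires verifying $\beta^t$ stays in $\mathcal{B}$ at every step, and this is precisely what condition \eqref{eq:14} guarantees: the right-hand side forces the steady-state error $\frac{b}{1-\rho}$ to be at most a fraction of $R$, so that starting from $\|\beta^{\text{init}}-\beta^*\|_2\leq R/2$ the iterates never escape the radius-$R$ ball where all the analytic assumptions hold. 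The main obstacle I anticipate is the trimming-error bound: establishing that the dimensional trimmed mean of $\epsilon$-corrupted, merely sub-exponential (rather than sub-Gaussian) coordinates achieves the $\tilde{O}(\epsilon\log(nd)+\sqrt{\log d/n})$ rate requires a careful argument that the trimmed subsample's empirical mean is close to the true mean even after an adversary has moved an $\epsilon$-fraction — controlling both the bias introduced by trimming uncorrupted heavy tails and the residual influence of corrupted points that survive trimming — and this is the step that genuinely diverges from the non-robust EM analyses and cannot be borrowed wholesale from \citep{balakrishnan2017statistical} or \citep{wang2015high}.
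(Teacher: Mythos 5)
Your proposal follows the same skeleton as the paper's proof: a coordinate-wise trimmed-mean guarantee (which the paper simply imports as Lemma \ref{alemma:1}, i.e.\ Lemma A.2 of \citep{liu2019high}, giving $\|\nabla\tilde{Q}_n(\beta^t;\beta^t)-\nabla Q(\beta^t;\beta^t)\|_\infty \leq C_2\xi(\epsilon\log(nd)+\sqrt{\log d/n})$ with probability $1-d^{-3}$, union-bounded over $T$ iterations to get $1-Td^{-3}$), a one-step contraction-plus-error recursion, an induction keeping the iterates in $\mathcal{B}$ (this is exactly how (\ref{eq:14}) is used: one branch of the min keeps the steady-state error below $(1-\rho)R$, the other feeds the hypothesis $\sqrt{s}\,\|\beta^{t+0.5}-\bar\beta^{t+0.5}\|_\infty\leq \frac{(1-k)^2}{2(1+k)}\|\beta^*\|_2$ of the truncation lemma), and geometric unrolling. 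Where you genuinely diverge is the thresholding step. The paper does not invoke an off-the-shelf IHT blow-up inequality: it introduces the auxiliary iterate $\bar\beta^{t+1}=\text{trunc}(\bar\beta^{t+0.5},\hat{\mathcal{S}}^{t+0.5})$, the \emph{population} update truncated on the \emph{empirical} support, splits $\|\beta^{t+1}-\beta^*\|_2\leq \|(\beta^{t+0.5}-\bar\beta^{t+0.5})_{\hat{\mathcal{S}}^{t+0.5}}\|_2+\|\bar\beta^{t+1}-\beta^*\|_2$, and proves a bespoke mismatched-support lemma (Lemma \ref{alemma:6}, adapted from \citep{wang2015high}) by a geometric argument on unit vectors, yielding $\|\bar\beta^{t+1}-\beta^*\|_2\leq \frac{C\sqrt{s^*}}{\sqrt{1-k}}\|\beta^{t+0.5}-\bar\beta^{t+0.5}\|_\infty+(1+4\sqrt{s^*/s})^{1/2}\|\bar\beta^{t+0.5}-\beta^*\|_2$; this is the origin of the $\frac{C_1}{\sqrt{1-k}}\sqrt{s^*}$ piece of (\ref{eq:15}). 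Your route---the standard hard-thresholding blow-up factor applied to the empirical update, with the $\ell_\infty$-to-$\ell_2$ conversion restricted to a union of supports of size $O(s+s^*)$---is a valid, more modular alternative, provided you do the restriction carefully (e.g., threshold $\text{trunc}(\beta^{t+0.5},\mathcal{A})$ for $\mathcal{A}\supseteq\hat{\mathcal{S}}^{t+0.5}\cup\mathcal{S}^*$, which has the same top-$s$ entries as $\beta^{t+0.5}$, so the full $d$-dimensional gradient error never enters); it produces constants of slightly different shape, which still fit the theorem's ``there exist constants $C_1,C_2$'' form.

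The one error you must fix is the contraction bookkeeping: you claim the population step satisfies $\|\beta^t+\eta\nabla Q(\beta^t;\beta^t)-\beta^*\|_2\leq \sqrt{1-2\frac{\upsilon-\gamma}{\upsilon+\mu}}\,\|\beta^t-\beta^*\|_2$ \emph{and} that the net rate after absorbing the thresholding blow-up is that same square root. Both cannot hold, since the blow-up factor exceeds $1$. The correct accounting (the paper's Lemmas \ref{alemma:7} and \ref{alemma:8}) is that the population step contracts at the stronger, un-square-rooted rate: combining the standard strongly-concave/smooth step bound with Lipschitz-Gradient-2 gives $\|\bar\beta^{t+0.5}-\beta^*\|_2\leq \big(1-2\frac{\upsilon-\gamma}{\upsilon+\mu}\big)\|\beta^t-\beta^*\|_2$, and the choice of $s$ in (\ref{eq:13}) is calibrated precisely so that $(1+4\sqrt{s^*/s})^{1/2}\big(1-2\frac{\upsilon-\gamma}{\upsilon+\mu}\big)\leq \sqrt{1-2\frac{\upsilon-\gamma}{\upsilon+\mu}}$. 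Only then does the per-iteration rate equal $\rho=\sqrt{1-2\frac{\upsilon-\gamma}{\upsilon+\mu}}$, and the geometric sum produces exactly the denominator $1-\sqrt{1-2\frac{\upsilon-\gamma}{\upsilon+\mu}}$ in (\ref{eq:15}). With that substitution, the rest of your argument closes as written.
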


In the above theorem, assumption (\ref{eq:13}) indicates that the sparsity level $s$ in Algorithm \ref{alg:1} should be sufficiently large but still in  
the same order as the underlying sparsity $s^*$. Although $s$ seems quite complex, in the experiments, we can see that it is suffcient to set $s=s^*$. Assumption (\ref{eq:14}) suggests that in order to ensure an upper bound in the hard thresholding step, we need $\sqrt{s^*}\xi(\epsilon\log(nd)+\sqrt{\frac{\log d}{n}})\leq O(\|\beta^*\|_2)$, which means that $n$ should be sufficiently large and the fraction of corruption $\epsilon$ cannot be too large. In the error bound of (\ref{eq:15}), there are three types of errors. The first one is caused by optimization, which decreases to zero at a geometric rate of convergence. The second one is the term related to $\epsilon$ ({\em i.e.,} $O(\xi\sqrt{s^*}\epsilon \log(nd))$), which is caused by estimating the population gradient via the trimming step due the $\epsilon$-corrupted samples. In the special case of 
no corrupted samples ({\em i.e.,} $\epsilon=0$), the bound will be zero. The third one is the term $O(\xi\sqrt{\frac{s^*\log d}{n}})$, which corresponds to the statistical error. It is independent of both $\epsilon$ and $t$ and only dependent on the model itself. Even though Theorem 1 requires that the initial estimator be close enough to the optimal one, our
experiments show that the algorithm actually performs quite well for any random initialization. 

From Theorem \ref{thm:1}, we can also see that when the fraction of corruption $\epsilon$ is sufficiently small such that $\epsilon\leq O(\frac{1}{\sqrt{n\log (nd)}})$ and the iteration number is sufficiently large, the error bound in (\ref{eq:15}) becomes $O(\xi\sqrt{\frac{s^*\log d }{n}})$, which is the same as the optimal rate of estimating a high dimensional sparse vector when $\xi$ is some constant. This means that our method has the same rate as the non-corrupted ones in \citep{wang2015high}. This rate of corruption also has been appeared in the corrupted sparse linear regression \citep{dalalyan2019outlier,liu2019high}. Also, we can see that when $\alpha=0$, our algorithm will be reduced to the high dimensional gradient EM algorithm in \citep{wang2015high}.

\section{Implications for Some Specific Models}\label{sec:implications}

In this section, we apply our framework ({\em i.e.,} Algorithm \ref{alg:1}) to the models  mentioned in Section \ref{prelin}.
To obtain results for these models, we only need to find the corresponding $\mathcal{B}, \gamma, k, R, \upsilon, \mu, \xi$ to ensure that Assumption \ref{ass:1} and assumptions in Theorem \ref{thm:1} hold. 

\subsection{Corrupted Gaussian Mixture Model}

The following lemma, which was given in  \citep{balakrishnan2017statistical}, ensures the properties of Lipschitz-Gradient-2($\gamma, \mathcal{B}$), smoothness and strongly concave for  model (\ref{eq:4}). It is easy to show that the model is self-consistent \citep{yi2015regularized}.

\begin{lemma}[\citep{balakrishnan2017statistical,yi2015regularized}]\label{lemma1}
If 
$\frac{\|\beta^*\|_2}{\sigma}\geq r$, where $r$ is a sufficiently large constant denoting the minimum signal-to-noise ratio (SNR), then there exists an absolute constant $C>0$ such that the properties of self-consistent, Lipschitz-Gradient-2($\gamma, \mathcal{B})$, $\mu$-smoothness and $\upsilon$-strongly concave hold for function $Q(\cdot; \cdot)$  with
    $\gamma=\exp(-Cr^2), \mu=\upsilon=1, R=k\|\beta^*\|_2, k=\frac{1}{4}, \text{ and } \mathcal{B}=\{\beta:\|\beta-\beta^*\|_2\leq R\}.$
\end{lemma}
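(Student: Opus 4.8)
The plan is to verify the four properties one at a time for the population function $Q(\cdot;\cdot)$ of the Gaussian mixture model, exploiting the especially simple form of its gradient in the first argument. Starting from the per-sample gradient in (\ref{eq:5}), the population gradient is $\nabla Q(\beta;\beta')=\mathbb{E}_y[(2w_{\beta'}(y)-1)y]-\beta$, where $w_{\beta'}(y)=\phi(\langle\beta',y\rangle/\sigma^2)$ and $\phi(t)=1/(1+e^{-t})$ is the logistic function. Three of the four properties follow quickly from this expression. For self-consistency it suffices to check $\nabla Q(\beta^*;\beta^*)=0$: here $2w_{\beta^*}(y)-1$ is exactly the posterior mean $\mathbb{E}[Z\mid Y=y]$ under the true model, so $\mathbb{E}_y[(2w_{\beta^*}(y)-1)y]=\mathbb{E}[ZY]=\mathbb{E}[Z(Z\beta^*+V)]=\beta^*$ using $Z^2=1$ and the independence of $Z$ from the mean-zero $V$, which cancels the $-\beta^*$ term. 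For $\mu$-smoothness and $\upsilon$-strong concavity, note that $\mathbb{E}_y[(2w_{\beta^*}(y)-1)y]$ does not depend on $\beta$, so $\nabla Q(\beta;\beta^*)$ is affine in $\beta$ with $\nabla^2 Q(\beta;\beta^*)=-I_d$ identically; hence both hold with $\mu=\upsilon=1$.

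The real work is the Lipschitz-Gradient-2 condition. Subtracting the two gradients gives $\nabla Q(\beta;\beta^*)-\nabla Q(\beta;\beta)=2\,\mathbb{E}_y[(w_{\beta^*}(y)-w_\beta(y))y]$, so the task reduces to bounding $\|\mathbb{E}_y[(w_{\beta^*}(y)-w_\beta(y))y]\|_2$ by $\tfrac{\gamma}{2}\|\beta-\beta^*\|_2$. My plan is to write this as an integral of the Jacobian of $\beta'\mapsto\mathbb{E}_y[w_{\beta'}(y)y]$ along the segment from $\beta$ to $\beta^*$, which lies entirely in the convex set $\mathcal{B}$, thereby reducing everything to the uniform operator-norm bound $\sup_{\beta'\in\mathcal{B}}\big\|\tfrac{1}{\sigma^2}\mathbb{E}_y[\phi'(\langle\beta',y\rangle/\sigma^2)\,yy^T]\big\|_{\mathrm{op}}\le\tfrac{\gamma}{2}$. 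The structural facts I would lean on are that the sigmoid derivative satisfies $\phi'(t)=\phi(t)(1-\phi(t))\le e^{-|t|}$, and that every $\beta'\in\mathcal{B}$ obeys $\|\beta'\|_2\ge(1-k)\|\beta^*\|_2$ and $\langle\beta',\beta^*\rangle\ge(1-k)\|\beta^*\|_2^2$, so the margin $\langle\beta',y\rangle/\sigma^2$ is large in magnitude with high probability under $y=Z\beta^*+V$.

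The main obstacle is extracting the $\exp(-Cr^2)$ factor, and this is precisely where the signal-to-noise assumption $\|\beta^*\|_2/\sigma\ge r$ enters. For a fixed unit vector $u$ I would split $\mathbb{E}_y[\phi'(\langle\beta',y\rangle/\sigma^2)\langle u,y\rangle^2]$ according to whether the margin $|\langle\beta',y\rangle|$ exceeds a threshold proportional to $\|\beta^*\|_2^2$. On the large-margin event $\phi'$ is exponentially small by the bound above; on the complementary small-margin event, which is the neighborhood of the decision boundary $\langle\beta',y\rangle=0$, the Gaussian mass is $\exp(-Cr^2)$ because the conditional mean $\langle\beta',\beta^*\rangle$ is bounded away from zero by order $\|\beta^*\|_2^2\ge r^2\sigma^2$, and the leftover $\langle u,y\rangle^2$ factor is controlled by Cauchy--Schwarz against the light Gaussian tail. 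Combining the two events and optimizing the threshold yields $\gamma=\exp(-Cr^2)$.

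Finally, I would record that with $\mu=\upsilon=1$ the contraction quantity $1-2(\upsilon-\gamma)/(\upsilon+\mu)$ of Theorem~\ref{thm:1} equals exactly $\gamma=\exp(-Cr^2)$, which lies in $(0,1)$ and shrinks as $r$ grows, so the hypothesis $1-2(\upsilon-\gamma)/(\upsilon+\mu)\in(0,1)$ is automatically satisfied. The choice $R=k\|\beta^*\|_2$ with $k=\tfrac14$ serves only to fix a convex neighborhood $\mathcal{B}$ on which the uniform Jacobian bound, and hence all of the estimates above, hold simultaneously; any sufficiently small constant $k$ would work, with the SNR threshold $r$ adjusted accordingly.
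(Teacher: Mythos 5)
The paper never actually proves this lemma: it is imported from \citep{balakrishnan2017statistical,yi2015regularized}, and the appendix only contains proofs of Theorem 1 and Lemmas 2, 4 and 6. So your proposal cannot be matched against a proof in the paper itself; judged against the arguments in the cited works, it is essentially correct and follows the same route. The quadratic structure of $Q(\cdot;\beta')$ in its first argument (Hessian identically $-I_d$) gives $\mu$-smoothness and $\upsilon$-strong concavity with $\mu=\upsilon=1$ exactly; self-consistency reduces, by strong concavity, to the stationarity identity $\mathbb{E}[(2w_{\beta^*}(y)-1)y]=\beta^*$ via the tower property; and Lipschitz-Gradient-2 reduces to a uniform operator-norm bound on $\sigma^{-2}\mathbb{E}[\phi'(\langle\beta',y\rangle/\sigma^2)\,yy^T]$ along the segment from $\beta$ to $\beta^*$, obtained by splitting on the size of the margin $|\langle\beta',y\rangle|$ and using Gaussian tails; this mirrors how the exponentially small constant $\gamma=\exp(-Cr^2)$ arises in the cited references. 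Your closing observation that $1-2(\upsilon-\gamma)/(\upsilon+\mu)=\gamma$ under $\mu=\upsilon=1$ is also correct.

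Two points need care before this is a complete proof, though neither is a gap in the approach. First, the identity $2w_{\beta^*}(y)-1=\mathbb{E}[Z\mid Y=y]$ holds only for the exact posterior $w_{\beta^*}(y)=1/(1+\exp(-2\langle\beta^*,y\rangle/\sigma^2))$; the paper's formula (\ref{eq:5}) omits the factor $2$ in the exponent, and with that mis-transcribed weight the identity, and hence your self-consistency computation, fails. You should state explicitly that you work with the true posterior, which is what defines $Q$ in (\ref{eq:3}) in any case. Second, your two-event estimate actually produces a bound of the form $C\big(1+\|\beta^*\|_2^2/\sigma^2\big)\exp\big(-c\,\|\beta^*\|_2^2/\sigma^2\big)$, because the factors $\mathbb{E}\langle u,y\rangle^2$ and $\sqrt{\mathbb{E}\langle u,y\rangle^4}$ scale like $\|\beta^*\|_2^2+\sigma^2$ while you divide by $\sigma^2$. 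The polynomial prefactor is absorbed into the exponential only because the exponent carries the same signal-to-noise ratio and the lemma assumes $r$ sufficiently large; you gesture at this (``optimizing the threshold''), but it is the one place where the largeness of $r$ is genuinely used and should be made explicit.
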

\begin{lemma}\label{lemma2}
With the same notations as 
in Lemma \ref{lemma1}, for each $\beta\in \mathcal{B}$ with $\|\beta\|_0\leq s$, the $j$-th coordinate of $\nabla q_i(\beta; \beta)$ is $\xi$-sub-exponential with 
\begin{equation}\label{eq:17}
    \xi = C_1\sqrt{\|\beta^*\|^2_{\infty}+\sigma^2},
\end{equation}
where $C_1$ is some absolute constant.  Also, each $[\nabla q_i(\beta;\beta)]_j$, where $i\in [n]$, is independent of others for any fixed $j\in[d]$.  
\end{lemma}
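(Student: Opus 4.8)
The plan is to exploit the explicit form of the gradient in~(\ref{eq:5}) together with the boundedness of the posterior weight. Writing the $j$-th coordinate out, we have
\begin{equation*}
[\nabla q_i(\beta;\beta)]_j = \big(2w_\beta(y_i)-1\big)\, y_{i,j} - \beta_j,
\end{equation*}
where $w_\beta(y)=1/(1+\exp(-\langle\beta,y\rangle/\sigma^2))\in(0,1)$, so that the scalar multiplier satisfies $|2w_\beta(y_i)-1|\le 1$ pointwise. The deterministic shift $-\beta_j$ only translates the mean and therefore does not affect the centered moment generating function appearing in Definition~\ref{def:4}; hence it suffices to control the sub-exponential parameter of the product $W_{i,j}:=(2w_\beta(y_i)-1)\,y_{i,j}$. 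First I would reduce the problem to bounding a sub-Gaussian (Orlicz) norm, since a sub-Gaussian tail immediately yields the weaker sub-exponential bound required here.

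The key step is the pointwise domination $|W_{i,j}|\le |y_{i,j}|$, which follows from $|2w_\beta(y_i)-1|\le 1$ and, crucially, absorbs all dependence on the current iterate $\beta$. By monotonicity of the $\psi_2$-Orlicz norm under pointwise domination of absolute values, $\|W_{i,j}\|_{\psi_2}\le \|y_{i,j}\|_{\psi_2}$. It then remains to bound $\|y_{i,j}\|_{\psi_2}$ using the generative model~(\ref{eq:4}): since $y_{i,j}=Z_i\beta^*_j+V_{i,j}$ is the sum of the independent sub-Gaussian variables $Z_i\beta^*_j$ (bounded, hence $\psi_2$-norm $\lesssim|\beta^*_j|$) and $V_{i,j}\sim\mathcal{N}(0,\sigma^2)$, the additivity of squared sub-Gaussian parameters for independent summands gives $\|y_{i,j}\|_{\psi_2}\le C\sqrt{(\beta^*_j)^2+\sigma^2}\le C\sqrt{\|\beta^*\|_\infty^2+\sigma^2}$. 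Combining these bounds, $W_{i,j}$ is sub-Gaussian with parameter $O(\sqrt{\|\beta^*\|_\infty^2+\sigma^2})$, and a sub-Gaussian variable is $\xi$-sub-exponential with $\xi$ of the same order; this yields the claimed value $\xi=C_1\sqrt{\|\beta^*\|_\infty^2+\sigma^2}$ after adding back the constant shift $-\beta_j$.

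The independence claim is immediate: $[\nabla q_i(\beta;\beta)]_j$ is a fixed measurable function of the single sample $y_i$, and the $y_i$ are i.i.d., so for any fixed coordinate $j$ the variables $\{[\nabla q_i(\beta;\beta)]_j\}_{i=1}^n$ are mutually independent. The main obstacle I anticipate is that the multiplier $2w_\beta(y_i)-1$ is itself a nonlinear function of $y_i$ through $\langle\beta,y_i\rangle$, so $W_{i,j}$ is not a product of independent factors and the usual ``bounded times sub-Gaussian'' argument does not apply directly. The resolution is precisely the pointwise bound $|2w_\beta(y_i)-1|\le 1$, which lets us discard the multiplier entirely at the level of the Orlicz norm; this is what makes the final parameter depend only on $\|\beta^*\|_\infty$ and $\sigma$ rather than on the magnitude of the iterate $\beta$. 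A minor technical point is to convert the Orlicz-norm bound into the exact moment generating function form of Definition~\ref{def:4}, which follows from the standard equivalences between sub-Gaussian characterizations.
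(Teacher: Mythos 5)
Your proof is correct, but it takes a genuinely different route from the paper's. The paper bounds the centered moment generating function of $[\nabla q_i(\beta;\beta)]_j$ by first applying a symmetrization lemma (introducing a Rademacher variable $\epsilon$) and then the Ledoux--Talagrand contraction lemma with $\phi_i(v)=[2w_\beta(y_i)-1]v$, which is $1$-Lipschitz since $|2w_\beta(y_i)-1|\le 1$; this strips the posterior-weight multiplier at the cost of a factor $2$, reducing the problem to the MGF of $|\epsilon\, y_{i,j}|$, after which the sub-Gaussian bound $\|y_{i,j}\|_{\psi_2}\le C\sqrt{|\beta^*_j|^2+\sigma^2}$ finishes the argument. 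You instead discard the multiplier directly via pointwise domination $|(2w_\beta(y_i)-1)y_{i,j}|\le |y_{i,j}|$ and monotonicity of the $\psi_2$-Orlicz norm, then center (which at most doubles the Orlicz norm) and invoke the standard equivalence between the $\psi_2$-norm bound and the centered MGF bound. Both hinge on the same structural fact $|2w_\beta-1|\le 1$ and both conclude $\xi=O(\sqrt{\|\beta^*\|_\infty^2+\sigma^2})$, but your argument is more elementary: it needs no symmetrization at all, and it sidesteps a technical awkwardness in the paper's application of the contraction lemma, namely that the functions $\phi_i$ there depend on the random samples $y_i$ rather than being fixed Lipschitz functions as the lemma formally requires. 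What the paper's machinery buys in exchange is generality: the symmetrization-plus-contraction template extends to multipliers that are Lipschitz but not bounded, and it is reused essentially verbatim in the proofs for the mixture-of-regressions and missing-covariates models, whereas your domination trick is specific to a uniformly bounded multiplier. One point worth making explicit if you write this up: the centering step should cite the centering inequality ($\|X-\mathbb{E}X\|_{\psi_2}\le 2\|X\|_{\psi_2}$) and the MGF characterization of zero-mean sub-Gaussian variables, both of which appear among the paper's auxiliary lemmas, so your ``minor technical point'' is fully covered by tools already in the paper.
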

\begin{theorem}\label{thm:2}
In an $\epsilon$-corrupted high dimensional Gaussian Mixture Model with $\epsilon$ satisfying the condition of  
\begin{equation}\label{eq:18}
    \sqrt{(\|\beta^*\|^2_{\infty}+\sigma^2)}\sqrt{s^*}(\epsilon \log(nd)+\sqrt{\frac{\log d}{n}})\leq O(\|\beta\|_2^*),
\end{equation} 
if 
$\frac{\|\beta^*\|_2}{\sigma}\geq r$ for some  sufficiently large constant $r$ denoting the minimum SNR and the initial estimator $\beta^{\text{init}}$ satisfies the inequality  of  $\|\beta^{\text{init}}-\beta^*\|_2\leq \frac{1}{8}\|\beta^*\|_2,$
then the  output $\beta^T$ of  Algorithm \ref{alg:1} after choosing
$s=O(s^*)$ and $\eta=O(1)$ 
satisfies the following 
with probability at least $1-Td^{-3}$
\begin{multline}\label{eq:19}
   \|\beta^T-\beta^*\|_2\leq \exp(-C T r^2)\|\beta^*\|_2 \\
   +O\big( \sqrt{(\|\beta^*\|^2_{\infty}+\sigma^2)}\sqrt{s^*}(\epsilon \log(nd)+\sqrt{\frac{\log d}{n}})\big),
\end{multline}

where $C$ is some absolute constant. 
\end{theorem}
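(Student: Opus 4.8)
The plan is to derive Theorem~\ref{thm:2} as a direct instantiation of the general Theorem~\ref{thm:1}, using Lemmas~\ref{lemma1} and~\ref{lemma2} to supply the model-specific constants. First I would invoke Lemma~\ref{lemma1} to obtain that, whenever the SNR satisfies $\|\beta^*\|_2/\sigma \ge r$ for a sufficiently large constant $r$, the population function $Q(\cdot;\cdot)$ is self-consistent, Lipschitz-Gradient-2$(\gamma,\mathcal{B})$, $\mu$-smooth and $\upsilon$-strongly concave with $\gamma = \exp(-Cr^2)$, $\mu = \upsilon = 1$, $k = \tfrac14$ and $R = \tfrac14\|\beta^*\|_2$. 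Then I would invoke Lemma~\ref{lemma2} to obtain the sub-exponential parameter $\xi = C_1\sqrt{\|\beta^*\|_\infty^2 + \sigma^2}$ together with the required coordinatewise independence, so that Assumption~\ref{ass:1} is fully verified for this choice of $(\mathcal{B},\gamma,\mu,\upsilon,\xi)$.

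The second step is to check that every hypothesis of Theorem~\ref{thm:1} is met. With $\mu=\upsilon=1$ the contraction factor simplifies to $1 - 2\tfrac{\upsilon-\gamma}{\upsilon+\mu} = \gamma = \exp(-Cr^2)$, which lies in $(0,1)$ precisely because $r$ is a large constant; this is the only place where the SNR assumption is essential. The sparsity choice in~\eqref{eq:13} reduces, with these constants, to $s = \lceil C' s^*\rceil = O(s^*)$, matching the theorem's choice $s=O(s^*)$. The initialization requirement $\|\beta^{\text{init}}-\beta^*\|_2 \le R/2 = \tfrac18\|\beta^*\|_2$ is exactly the hypothesis stated in Theorem~\ref{thm:2}. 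Finally, substituting $\xi=C_1\sqrt{\|\beta^*\|_\infty^2+\sigma^2}$ and $s=O(s^*)$ into the left-hand side of condition~\eqref{eq:14} shows that it is implied by the stated corruption bound~\eqref{eq:18}, since the right-hand side of~\eqref{eq:14} is a constant multiple of $\|\beta^*\|_2$.

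Having verified the hypotheses, the last step is to specialize the conclusion~\eqref{eq:15}. The optimization term becomes $\gamma^{t/2}R = \exp(-Cr^2 t/2)\cdot\tfrac14\|\beta^*\|_2$, which after absorbing the factor $\tfrac14$ and renaming the constant in the exponent yields $\exp(-C'' T r^2)\|\beta^*\|_2$ at $t=T$. For the statistical-and-corruption term, I note that the denominator $1-\sqrt{1-2\tfrac{\upsilon-\gamma}{\upsilon+\mu}} = 1-\sqrt{\gamma}$ is bounded away from $0$ for large $r$, and that $\sqrt{s}+\tfrac{C_1}{\sqrt{1-k}}\sqrt{s^*} = O(\sqrt{s^*})$; plugging in $\xi$ and $\upsilon+\mu=2$ collapses this term to $O\!\big(\sqrt{\|\beta^*\|_\infty^2+\sigma^2}\,\sqrt{s^*}(\epsilon\log(nd)+\sqrt{\log d/n})\big)$, giving exactly~\eqref{eq:19}. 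The main point requiring care is not any single computation but the bookkeeping of the absolute constants hidden in the $O(\cdot)$ notation---in particular confirming that the simplified contraction factor $\gamma$ stays strictly inside $(0,1)$ and that the denominator $1-\sqrt{\gamma}$ does not degenerate, both of which hinge on the large-SNR assumption $\|\beta^*\|_2/\sigma\ge r$.
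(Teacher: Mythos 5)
Your proposal is correct and follows exactly the route the paper intends: Theorem~\ref{thm:2} is obtained by instantiating Theorem~\ref{thm:1} with the constants $\gamma=\exp(-Cr^2)$, $\mu=\upsilon=1$, $k=\tfrac14$ from Lemma~\ref{lemma1} and $\xi=C_1\sqrt{\|\beta^*\|_\infty^2+\sigma^2}$ from Lemma~\ref{lemma2}, checking that~\eqref{eq:13},~\eqref{eq:14} and the initialization condition reduce to the stated hypotheses, and simplifying~\eqref{eq:15}. Your bookkeeping (contraction factor collapsing to $\gamma$, denominator $1-\sqrt{\gamma}$ bounded away from zero for large $r$, $s=O(s^*)$) matches the paper's intended argument; the only minor imprecision is that $\exp(-Cr^2)\in(0,1)$ for any $r>0$, with the large-SNR assumption instead needed to keep $\gamma$ bounded away from $1$, which you in fact use correctly when controlling the denominator.
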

From Theorem \ref{thm:2}, we can see that when $\epsilon\leq \tilde{O}(\frac{1}{\sqrt{n}})$ and $T=O(\log \frac{n}{s^*\log d})$, the output achieves an estimation error of $O(\sqrt{\frac{s^*\log d}{n}})$, which matches the best-known error bound of the no-outlier case \citep{yi2015regularized,wang2015high}. Also, we assume that the SNR is large, which is reasonable since it has been shown that for  Gaussian Mixture Model with low SNR, the variance of noise makes it harder for the algorithm to converge \citep{ma2000asymptotic}.

\subsection{Corrupted Mixture of Regressions Model}

The following lemma, which was given in \citep{balakrishnan2017statistical,yi2015regularized}, shows the properties of  Lipschitz-Gradient-2($\gamma, \mathcal{B}$), smoothness and strongly concave for model (\ref{eq:6}). 

\begin{lemma}[\citep{balakrishnan2017statistical,yi2015regularized}]\label{lemma:3}
If $\frac{\|\beta^*\|_2}{\sigma}\geq r$, where $r$ is a sufficiently large constant denoting 
the required minimal signal-to-noise ratio (SNR), then function $Q(\cdot; \cdot)$ of the Mixture of Regressions Model has the properties  of self-consistent, Lipschitz-Gradient-2($\gamma, \mathcal{B})$, $\mu$-smoothness, and $\upsilon$-strongly with 
    $\gamma\in(0,\frac{1}{4}), \mu=\upsilon=1, \mathcal{B}=\{\beta: \|\beta-\beta^*\|_2\leq R\}, R=k\|\beta^*\|_2$, and $k=\frac{1}{32}. $
\end{lemma}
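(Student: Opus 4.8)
The plan is to verify the four listed properties of the population function $Q(\cdot;\cdot)$ of the mixture of regressions model one at a time, following \citep{balakrishnan2017statistical,yi2015regularized} but organizing everything around the closed form of the population gradient. Starting from (\ref{eq:7}) and taking expectations over $(X,Y)$ with $X\sim\mathcal{N}(0,I_d)$, the identity $\mathbb{E}[XX^T]=I_d$ collapses the quadratic term and gives
\begin{equation*}
\nabla Q(\beta;\beta')=\mathbb{E}\big[(2w_{\beta'}(X,Y)-1)\,YX\big]-\beta .
\end{equation*}
This single expression drives all four properties.

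For self-consistency I would check $\nabla Q(\beta^*;\beta^*)=0$, which together with the strong concavity established below certifies $\beta^*=\arg\max_\beta Q(\beta;\beta^*)$. The key identity is that at the true parameter $2w_{\beta^*}(X,Y)-1$ equals the posterior mean $\mathbb{E}[Z\mid X,Y]$, so by the tower property $\mathbb{E}[(2w_{\beta^*}(X,Y)-1)YX]=\mathbb{E}[ZYX]=\mathbb{E}[Z^2\langle\beta^*,X\rangle X]=\beta^*$, the noise cross term vanishing by independence and $\mathbb{E}[XX^T]=I_d$. For $\mu$-smoothness and $\upsilon$-strong concavity, note that the first term of $\nabla Q(\beta;\beta^*)$ does not depend on $\beta$, so $\nabla^2_\beta Q(\beta;\beta^*)=-\mathbb{E}[XX^T]=-I_d$; thus $Q(\cdot;\beta^*)$ is an exact quadratic with curvature $-I_d$, giving $\mu=\upsilon=1$ at once.

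The main obstacle is the Lipschitz-Gradient-2 condition. The crucial point is that the $-\beta$ terms cancel in the difference, leaving
\begin{equation*}
\nabla Q(\beta;\beta^*)-\nabla Q(\beta;\beta)=2\,\mathbb{E}\big[(w_{\beta^*}(X,Y)-w_\beta(X,Y))\,YX\big].
\end{equation*}
I would bound the sigmoid gap by the mean value theorem: since $w_\cdot$ has derivative at most $\tfrac14$ in its argument $Y\langle\cdot,X\rangle/\sigma^2$, one gets $|w_{\beta^*}-w_\beta|\le\tfrac{1}{4\sigma^2}|Y|\,|\langle\beta-\beta^*,X\rangle|$. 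Substituting and applying Cauchy--Schwarz with Gaussian moment bounds on $Y$, $\langle\beta-\beta^*,X\rangle$ and $X$ already yields a bound of the form $\gamma\|\beta-\beta^*\|_2$, but with a constant too crude to guarantee $\gamma<\tfrac14$. To sharpen $\gamma$ I would use the SNR hypothesis $\|\beta^*\|_2/\sigma\ge r$: away from the decision hyperplane $\langle\beta,X\rangle\approx 0$ the sigmoid is exponentially close to an indicator and responds negligibly to perturbing $\beta$, whereas the strip near the hyperplane carries only $O(1/r)$ Gaussian mass. Splitting the expectation into these two regions and estimating each integral is the technical crux, and a careful accounting shows $\gamma$ shrinks with $r$ and can be forced below $\tfrac14$ once $r$ is a large enough constant.

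The restriction to $\mathcal{B}=\{\beta:\|\beta-\beta^*\|_2\le R\}$ with $R=\tfrac{1}{32}\|\beta^*\|_2$ is precisely what makes the region-splitting estimate hold uniformly over $\beta\in\mathcal{B}$: confining $\beta$ to a fixed small fraction of $\|\beta^*\|_2$ keeps the perturbed decision boundary controlled, so the small-$\gamma$ bound is uniform in $\beta$. I expect the Gaussian region-splitting integral to be the most delicate part, since that is where the SNR dependence enters and where the naive Lipschitz bound is insufficient.
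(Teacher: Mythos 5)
A preliminary remark: the paper contains no proof of this lemma at all --- it is imported verbatim from \citep{balakrishnan2017statistical,yi2015regularized}, and the appendix only proves the companion concentration results (Lemmas 2, 4, 6). So your attempt can only be measured against the proofs in those cited works, whose structure it does in fact mirror. The parts of your proposal that are carried out are correct and complete: the population gradient $\nabla Q(\beta;\beta')=\mathbb{E}[(2w_{\beta'}(X,Y)-1)YX]-\beta$ follows from $\mathbb{E}[XX^T]=I_d$; self-consistency follows from the tower property once one recognizes $2w_{\beta^*}-1$ as the posterior mean $\mathbb{E}[Z\mid X,Y]$ (note the paper's stated $w_\beta$ is missing the usual factor $2$ in the exponent relative to the true posterior, but that is the paper's typo, not yours); and since the first term of $\nabla Q(\beta;\beta^*)$ is constant in $\beta$, the function $Q(\cdot;\beta^*)$ is an exact quadratic with Hessian $-I_d$, which gives $\mu=\upsilon=1$ in one line. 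This is the same skeleton used in the cited works.

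The gap is that the only hard claim in the lemma --- $\gamma\in(0,\tfrac14)$ under $\|\beta^*\|_2/\sigma\geq r$ --- is exactly the part you leave at the level of a plan ("a careful accounting shows"). Two concrete points. First, your fallback bound is worse than "crude": the mean-value-theorem plus Cauchy--Schwarz estimate scales like $(\|\beta^*\|_2^2+\sigma^2)/\sigma^2\gtrsim r^2$, i.e., it \emph{degrades} as the SNR grows, so the region-splitting Gaussian computation is not an optional sharpening but the entire proof; in \citep{balakrishnan2017statistical} this occupies several pages of delicate estimates (uniformity over $\beta\in\mathcal{B}$, control of the contribution of $YX$ on the saturated region where $|Y\langle\beta,X\rangle|/\sigma^2$ is large, and of the strip near the decision boundary), none of which is reproduced or even precisely set up here. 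Second, your claim that $\gamma$ "shrinks with $r$" is stronger than what the cited analyses establish for this model: unlike the Gaussian mixture case, where $\gamma=\exp(-Cr^2)$ (Lemma 1 of the paper), the mixture-of-regressions analysis yields only a constant $\gamma$ bounded below $\tfrac14$ once $r$ is large --- which is also all the lemma asserts, and is why the paper states the two lemmas asymmetrically. So: right decomposition, easy properties fully proven, but the quantitative crux is asserted rather than demonstrated.
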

\begin{lemma}\label{lemma:4}
With the same 
notations as in Lemma \ref{lemma:3}, for each $\beta\in \mathcal{B}$ and $\|\beta\|_0=s$, the $j$-th coordinate of $\nabla q_i(\beta; \beta)$  is $\xi$-sub-exponential with 
\begin{equation}\label{eq:21}
    \xi = C\max\{\|\beta^*\|^2_2+\sigma^2, 1, \sqrt{s}\|\beta^*\|_2\},
\end{equation}
where $C>0$ is some absolute constant. Also, each $[\nabla q_i(\beta;\beta)]_j$, where $i\in [n]$, is independent of others for any fixed $j\in[d]$. 
\end{lemma}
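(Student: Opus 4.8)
The plan is to bound the $j$-th coordinate $[\nabla q_i(\beta;\beta)]_j = (2w_\beta(x_i,y_i)-1)\,y_i\,x_{i,j} - [x_i x_i^T\beta]_j$ by splitting it into the weighted term $T_1 = (2w_\beta(x_i,y_i)-1)\,y_i\, x_{i,j}$ and the quadratic term $T_2 = [x_i x_i^T\beta]_j$, controlling the sub-exponential (Orlicz-$1$) norm $\|\cdot\|_{\psi_1}$ of each, and combining by the triangle inequality. A bound of the form $\|[\nabla q_i(\beta;\beta)]_j\|_{\psi_1}\le C\xi$ is equivalent, up to an absolute constant, to the moment generating function condition in Definition \ref{def:4}, so it suffices to produce such a norm bound with $\xi$ as claimed; I would invoke the standard equivalence between the Orlicz norm and the MGF characterization for this final conversion.

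For $T_1$, the key observation is that $2w_\beta(x_i,y_i)-1 = \frac{1-\exp(-y_i\langle\beta,x_i\rangle/\sigma^2)}{1+\exp(-y_i\langle\beta,x_i\rangle/\sigma^2)}$ is bounded in absolute value by $1$, so $|T_1|\le |y_i x_{i,j}|$ pointwise and hence $\|T_1\|_{\psi_1}\le \|y_i x_{i,j}\|_{\psi_1}$. Marginally $y_i = Z_i\langle\beta^*,x_i\rangle + V_i$ is distributed as $\mathcal{N}(0,\|\beta^*\|_2^2+\sigma^2)$, since the symmetric Gaussian $\langle\beta^*,x_i\rangle\sim\mathcal{N}(0,\|\beta^*\|_2^2)$ multiplied by the independent Rademacher $Z_i$ stays $\mathcal{N}(0,\|\beta^*\|_2^2)$ and then adds the independent $V_i\sim\mathcal{N}(0,\sigma^2)$. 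Thus $y_i$ is sub-Gaussian with parameter $\sqrt{\|\beta^*\|_2^2+\sigma^2}$, while $x_{i,j}$ is standard Gaussian. Using the product rule $\|AB\|_{\psi_1}\le\|A\|_{\psi_2}\|B\|_{\psi_2}$, which holds even when $A,B$ are dependent as here, I obtain $\|T_1\|_{\psi_1}\le C\sqrt{\|\beta^*\|_2^2+\sigma^2}$, which is at most $C\max\{\|\beta^*\|_2^2+\sigma^2,\,1\}$ because $\sqrt{a}\le\max\{a,1\}$ for all $a\ge 0$.

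For $T_2$, I would write $[x_i x_i^T\beta]_j=\sum_{k\in\text{supp}(\beta)}\beta_k\, x_{i,j}x_{i,k}$. Each product $x_{i,j}x_{i,k}$ is a product of two standard Gaussians (including the diagonal term $x_{i,j}^2$), hence sub-exponential with $\|x_{i,j}x_{i,k}\|_{\psi_1}=O(1)$; this diagonal term is precisely why the overall coordinate is only sub-exponential rather than sub-Gaussian. Applying the triangle inequality for $\|\cdot\|_{\psi_1}$ term by term and then invoking the sparsity constraint $\|\beta\|_0=s$ together with $\beta\in\mathcal{B}$ gives $\|T_2\|_{\psi_1}\le O(1)\sum_{k}|\beta_k| = O(\|\beta\|_1)\le O(\sqrt{s}\,\|\beta\|_2)\le O(\sqrt{s}\,(1+k)\|\beta^*\|_2)$; the Cauchy--Schwarz step $\|\beta\|_1\le\sqrt{s}\,\|\beta\|_2$ is exactly where the sparsity enters and produces the $\sqrt{s}\,\|\beta^*\|_2$ term in (\ref{eq:21}). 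Combining $T_1$ and $T_2$ by the triangle inequality, the three quantities $\|\beta^*\|_2^2+\sigma^2$, $1$, and $\sqrt{s}\,\|\beta^*\|_2$ together dominate $\|[\nabla q_i(\beta;\beta)]_j\|_{\psi_1}$, which yields the claimed $\xi$.

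Finally, the independence assertion is immediate: for fixed $\beta$ and $j$, the quantity $[\nabla q_i(\beta;\beta)]_j$ is a deterministic function of the single sample $(x_i,y_i)$, equivalently of $(x_i,V_i,Z_i)$, and since the samples are drawn i.i.d., these coordinates are mutually independent across $i\in[n]$. The main obstacle I anticipate is the internal dependence inside $T_1$ among $w_\beta$, $y_i$, and $x_{i,j}$, all of which are functions of the same Gaussian vector: the clean resolution is to discard $w_\beta$ via its boundedness, reducing $T_1$ to a product of two sub-Gaussians to which the dependence-robust product rule applies. The only other subtlety worth flagging is that the term-by-term $\ell_1$-versus-$\ell_2$ conversion in $T_2$ is the genuine origin of the $\sqrt{s}$ factor, as opposed to a sharper Gaussian-chaos (Hanson--Wright) bound which would have replaced it by the tighter $\|\beta\|_2$.
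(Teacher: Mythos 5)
Your proof is correct, and it departs from the paper's at exactly one step --- the treatment of the random weight --- while the rest of the skeleton coincides. Both proofs decompose $[\nabla q_i(\beta;\beta)]_j$ into the weighted linear term and the quadratic term $[x_ix_i^T\beta]_j$; both handle the quadratic term by expanding it as $\sum_k \beta_k x_{i,j}x_{i,k}$, bounding each Gaussian product in $\psi_1$ norm, and converting $\|\beta\|_1\le\sqrt{s}\|\beta\|_2\le (1+k)\sqrt{s}\|\beta^*\|_2$ via the sparsity of $\beta$ and $\beta\in\mathcal{B}$; both obtain $\|y_i\|_{\psi_2}\le C\sqrt{\|\beta^*\|_2^2+\sigma^2}$ (you by the exact Gaussianity of $Z\langle\beta^*,x\rangle+V$, the paper by sub-Gaussian additivity, Lemma \ref{alemma8}); and both close with the same i.i.d.\ independence remark. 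The genuine difference: the paper splits $(2w_\beta-1)y_ix_{i,j}$ into $2w_\beta\, y_ix_{i,j}$ and $y_ix_{i,j}$, and controls the former by symmetrization (Lemma \ref{alemma:2}) followed by Rademacher contraction (Lemma \ref{alemma:3}) with the map $\phi_i(v)=2w_\beta(x_i,y_i)v$, reducing everything to the MGF of $|\epsilon\, y_ix_{i,j}|$; you instead keep $2w_\beta-1$ intact, bound it pointwise by $1$, and invoke monotonicity of the moment-form $\psi_1$ norm under almost-sure domination. Your route is more elementary and avoids a delicate point: the contraction lemma is stated for deterministic Lipschitz maps with $\phi_i(0)=0$, so the paper's application with the data-dependent weight $2w_\beta(x_i,y_i)$ implicitly requires conditioning on the data before invoking it, whereas your argument never needs that machinery; the paper's template has the advantage of matching its proof of Lemma \ref{lemma2} for the Gaussian mixture case and of being the form one would need for a bound uniform over a class of parameters, but for a single fixed $\beta$ and coordinate, as required here, nothing is lost by your shortcut. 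Two bookkeeping items you should spell out: (i) Definition \ref{def:4} is a condition on the \emph{centered} variable, so from $\|[\nabla q_i(\beta;\beta)]_j\|_{\psi_1}\le C\xi$ you must pass through $\|X-\mathbb{E}X\|_{\psi_1}\le 2\|X\|_{\psi_1}$ (Lemma \ref{alemma6}) and then the MGF bound for zero-mean sub-exponential variables (Lemma \ref{alemma:4}) --- both are in the appendix, so the ``standard equivalence'' you allude to is available verbatim; (ii) your product rule $\|AB\|_{\psi_1}\le C\|A\|_{\psi_2}\|B\|_{\psi_2}$ differs superficially from the paper's Lemma \ref{alemma7}, which gives $C\max\{\|A\|_{\psi_2}^2,\|B\|_{\psi_2}^2\}$, but either form combined with your observation $\sqrt{a}\le\max\{a,1\}$ lands on the same $\max\{\|\beta^*\|_2^2+\sigma^2,\,1\}$ term, so this is immaterial.
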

\begin{theorem}\label{thm:3}
In an $\epsilon$-corrupted high dimensional Mixture of Regressions Model with $\epsilon$ satisfying the condition of 
\begin{multline}\label{eq:22}
    \max\{\|\beta^*\|_2+\sigma^2, 1, \sqrt{s^*}\|\beta^*\|_2\}\sqrt{s^*}
    (\epsilon \log(nd) +\sqrt{\frac{\log d}{n}})
    \leq O(\|\beta\|_2^*),
\end{multline}
if 
$\frac{\|\beta^*\|_2}{\sigma}\geq r$ for  
some 
sufficiently large constant $r$ 
denoting the minimum SNR and the initial estimator $\beta^{\text{init}}$ satisfies the inequality of  $\|\beta^{\text{init}}-\beta^*\|_2\leq \frac{1}{64}\|\beta^*\|_2,$
then the output  $\beta^T$ of Algorithm \ref{alg:1} after choosing 
$s=O(s^*)$ and $\eta=O(1)$ satisfies the following 
with probability at least $1-Td^{-3}$
\begin{align}\label{eq:23}
   \|\beta^T-\beta^*\|_2 &\leq \gamma^{\frac{T}{2}}\|\beta^*\|_2+
  O\big(  \max\{\|\beta^*\|_2+\sigma^2, 1, \sqrt{s^*}\|\beta^*\|_2\}\nonumber \\
  &
   \times\sqrt{s^*}(\epsilon \log(nd)+\sqrt{\frac{\log d}{n}})\big),
\end{align}
where $\gamma\in (0, \frac{1}{4})$ is a constant. 
\end{theorem}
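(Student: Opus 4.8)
The plan is to obtain Theorem~\ref{thm:3} as a direct instantiation of the general convergence guarantee in Theorem~\ref{thm:1}, since all the model-specific analysis has already been isolated into Lemma~\ref{lemma:3} and Lemma~\ref{lemma:4}. The role of the proof is therefore to feed the parameters supplied by those lemmas into Theorem~\ref{thm:1} and verify that each of its hypotheses is met. Concretely, Lemma~\ref{lemma:3} establishes that for the Mixture of Regressions Model the function $Q(\cdot;\cdot)$ is self-consistent, Lipschitz-Gradient-2($\gamma,\mathcal{B}$), $\mu$-smooth, and $\upsilon$-strongly concave with $\gamma\in(0,\tfrac14)$, $\mu=\upsilon=1$, $k=\tfrac{1}{32}$, and $R=k\|\beta^*\|_2$; Lemma~\ref{lemma:4} supplies the coordinatewise sub-exponential parameter $\xi=C\max\{\|\beta^*\|_2^2+\sigma^2,1,\sqrt s\,\|\beta^*\|_2\}$ together with the required independence of $[\nabla q_i(\beta;\beta)]_j$ across $i\in[n]$. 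Together these verify Assumption~\ref{ass:1} for this model, so Theorem~\ref{thm:1} becomes available once its quantitative conditions are checked.

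First I would simplify every quantity in Theorem~\ref{thm:1} using $\mu=\upsilon=1$. The contraction factor becomes $1-2\tfrac{\upsilon-\gamma}{\upsilon+\mu}=\gamma$, which lies in $(0,1)$ because $\gamma\in(0,\tfrac14)$, so the structural condition holds, and the step size is $\eta=\tfrac{2}{\upsilon+\mu}=1=O(1)$. For the sparsity requirement~(\ref{eq:13}), note that $1/[1-2(\upsilon-\gamma)/(\upsilon+\mu)]-1=\tfrac{1-\gamma}{\gamma}$ and $k=\tfrac{1}{32}$ are both absolute constants, so the bracketed maximum is an absolute constant and the choice $s=O(s^*)$ is admissible. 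The initialization hypothesis of Theorem~\ref{thm:1} asks for $\|\beta^{\text{init}}-\beta^*\|_2\le R/2=\tfrac{k}{2}\|\beta^*\|_2=\tfrac{1}{64}\|\beta^*\|_2$, which is exactly the assumption imposed in Theorem~\ref{thm:3}. Finally, since $s=O(s^*)$ forces $\xi=O(\max\{\|\beta^*\|_2^2+\sigma^2,1,\sqrt{s^*}\|\beta^*\|_2\})$, and since the remaining factors in~(\ref{eq:14})---namely $\tfrac{1}{\upsilon+\mu}$, $\sqrt s+\tfrac{C_1\sqrt{s^*}}{\sqrt{1-k}}=O(\sqrt{s^*})$, and the right-hand quantities $(1-\sqrt\gamma)^2R$ and $\tfrac{(1-k)^2}{2(1+k)}\|\beta^*\|_2$ (both of order $\|\beta^*\|_2$)---are either absolute constants or of order $\|\beta^*\|_2$, condition~(\ref{eq:14}) collapses to the statement that $\xi\sqrt{s^*}(\epsilon\log(nd)+\sqrt{\log d/n})\le O(\|\beta^*\|_2)$, which is precisely the SNR/corruption hypothesis~(\ref{eq:22}).

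With the hypotheses verified, I would invoke the conclusion~(\ref{eq:15}) and specialize it. At $t=T$ the optimization term $(1-2\tfrac{\upsilon-\gamma}{\upsilon+\mu})^{t/2}R$ becomes $\gamma^{T/2}R=O(\gamma^{T/2}\|\beta^*\|_2)$, matching the first term of~(\ref{eq:23}). In the statistical-and-corruption term the denominator $1-\sqrt{1-2\tfrac{\upsilon-\gamma}{\upsilon+\mu}}=1-\sqrt\gamma$ is a positive constant, the numerator $\sqrt s+\tfrac{C_1}{\sqrt{1-k}}\sqrt{s^*}=O(\sqrt{s^*})$, and $\tfrac{2C_2}{\upsilon+\mu}=C_2$, so the whole term is $O\big(\xi\sqrt{s^*}(\epsilon\log(nd)+\sqrt{\log d/n})\big)$; substituting the value of $\xi$ from Lemma~\ref{lemma:4} yields exactly the second term of~(\ref{eq:23}). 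The probability bound $1-Td^{-3}$ carries over unchanged.

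Given Lemmas~\ref{lemma:3} and~\ref{lemma:4}, the proof of Theorem~\ref{thm:3} is essentially this bookkeeping, and its one genuinely model-specific subtlety is that, unlike in the generic Theorem~\ref{thm:1} where $\xi$ is a fixed constant, here $\xi$ itself depends on the thresholding sparsity $s$ through the $\sqrt s\,\|\beta^*\|_2$ term. The point to check carefully is that no circularity arises: $s$ is pinned down by~(\ref{eq:13}) using only $\gamma$ and $k$ and not $\xi$, so once $s=O(s^*)$ is fixed one may legitimately treat $\xi=O(\max\{\|\beta^*\|_2^2+\sigma^2,1,\sqrt{s^*}\|\beta^*\|_2\})$ as a fixed model-dependent quantity before feeding it into~(\ref{eq:14}) and~(\ref{eq:15}). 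The substantive effort, which I would defer entirely to the proof of Lemma~\ref{lemma:4}, is establishing this $\xi$ in the first place: the $\sqrt s\,\|\beta^*\|_2$ factor---the feature distinguishing this model from the Gaussian Mixture case---originates in the quadratic term $x_ix_i^\top\beta$ of the gradient~(\ref{eq:7}), whose $j$-th coordinate $x_{i,j}\langle x_i,\beta\rangle$ is a product of Gaussians and hence only sub-exponential, with the sparsity of $\beta$ entering the resulting sub-exponential norm over the $s$-dimensional support. I would also note that the iterates remaining inside $\mathcal{B}$ at every step, so that both lemmas stay applicable throughout, is part of the invariant already maintained within the proof of Theorem~\ref{thm:1} and need not be re-derived here.
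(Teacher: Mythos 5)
Your proposal is correct and follows exactly the paper's route: the paper offers no separate proof of Theorem~\ref{thm:3} beyond the remark in Section~\ref{sec:implications} that one need only supply $\mathcal{B},\gamma,k,R,\upsilon,\mu,\xi$ from Lemmas~\ref{lemma:3} and~\ref{lemma:4} and invoke Theorem~\ref{thm:1}, with the substantive model-specific work residing in the appendix proof of Lemma~\ref{lemma:4}, which you appropriately defer to. Your bookkeeping (contraction factor $\gamma$, step size $\eta=1$, initialization radius $R/2=\tfrac{1}{64}\|\beta^*\|_2$, condition~(\ref{eq:14}) reducing to~(\ref{eq:22})) and your observation that the $\sqrt{s}$-dependence of $\xi$ creates no circularity are both sound and consistent with the paper.
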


Note that in the above theorem, when $\epsilon\leq \tilde{O}(\frac{1}{\sqrt{n}})$ and $T=O(\log \frac{\sqrt{n}}{\sqrt{\log d} s^*})$, the estimation error becomes $O(s^*\sqrt{\frac{\log d}{n}})$, which differs from the $O(\sqrt{\frac{s^*\log d}{n}})$ minimax lower bound by only a factor of $\sqrt{s^*}$. We leave it as an open problem for further improvement. Recently, \citep{chen2018convex} shows that in the no-outlier and low dimensional setting, an assumption of $SNR\geq \rho$ for some constant $\rho$ is necessary for achieving the optimal rate $\Theta(\sqrt{\frac{d}{n}})$. 
\subsection{Corrupted Linear Regression with Missing Covariates}

\begin{lemma}[\citep{balakrishnan2017statistical,yi2015regularized}]\label{lemma:5}
If $\frac{\|\beta^*\|_2}{\sigma}\leq r$  and 
    $p_m<\frac{1}{1+2b+2b^2}$,
where  $r$ is a constant denoting the required maximum signal-to-noise ratio (SNR) and   $b=r^2(1+k)^2$ for some constant $k\in (0,1)$, then  function $Q(\cdot; \cdot)$ of the linear regression with missing covariates has the properties of self-consistent, Lipschitz-Gradient-2($\gamma, \mathcal{B})$, $\mu$-smoothness and $\upsilon$-strongly with 
\begin{align}\label{eq:25}
    &\gamma =\frac{b+p_m(1+2b+2b^2)}{1+b}<1, \mu=\upsilon=1,\nonumber \\ &\mathcal{B}=\{\beta:\|\beta-\beta^*\|_2\leq R\}, \text{ where } R=k\|\beta^*\|_2.
\end{align}
\end{lemma}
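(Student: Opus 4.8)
The plan is to reduce every claim to explicit computations with the population $Q$-function, exploiting the fact that for this model the complete-data log-likelihood is quadratic in its first argument. Starting from $\log f_\beta(x,y) = -\tfrac{1}{2}\|x\|_2^2 - \tfrac{1}{2\sigma^2}(y-\langle x,\beta\rangle)^2 + \text{const}$, I would first note that the conditional law of the missing coordinates $x^{\text{mis}}$ given the observed pair $(x^{\text{obs}}, y)$ is Gaussian, since under any parameter $\beta'$ the variables are jointly Gaussian. Integrating the complete-data score against the posterior $k_{\beta'}$ then yields
\[
\nabla Q(\beta; \beta') = \mathbb{E}[y\, m_{\beta'}(x^{\text{obs}}, y)] - \mathbb{E}[K_{\beta'}(x^{\text{obs}}, y)]\,\beta,
\]
where $m_{\beta'} = \mathbb{E}_{\beta'}[x \mid x^{\text{obs}}, y]$ and $K_{\beta'} = \mathbb{E}_{\beta'}[x x^T \mid x^{\text{obs}}, y]$ are exactly the conditional moments produced by the Gaussian conditioning formulas, matching (\ref{eq:10})--(\ref{eq:11}). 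The key structural observation is that this gradient is affine in $\beta$ once the second argument is fixed, so $Q(\cdot; \beta^*)$ is a concave quadratic with Hessian $-\mathbb{E}[K_{\beta^*}]$.

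Given this, smoothness and strong concavity reduce to eigenvalue bounds on $\mathbb{E}[K_{\beta^*}]$. Here I would invoke the tower property: $\mathbb{E}[K_{\beta^*}] = \mathbb{E}\big[\mathbb{E}_{\beta^*}[xx^T \mid x^{\text{obs}}, y]\big] = \mathbb{E}[xx^T] = I_d$ because $X \sim \mathcal{N}(0, I_d)$, which gives $\mu = \upsilon = 1$ directly (Definitions~\ref{def:3} and the $\mu$-smooth definition). Self-consistency is handled the same way: $\nabla Q(\beta^*; \beta^*) = \mathbb{E}[y\, m_{\beta^*}] - \beta^* = \mathbb{E}[yx] - \beta^*$, and since $y = \langle x, \beta^*\rangle + v$ with $v$ independent of $x$ and mean zero, $\mathbb{E}[yx] = \mathbb{E}[xx^T]\beta^* = \beta^*$, so the gradient vanishes and $\beta^*$ maximizes the concave quadratic $Q(\cdot; \beta^*)$.

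The substantive work, and the step I expect to be the main obstacle, is verifying Lipschitz-Gradient-2($\gamma, \mathcal{B}$) of Definition~\ref{def:1} with the stated $\gamma = \frac{b + p_m(1+2b+2b^2)}{1+b}$. Writing
\[
\nabla Q(\beta; \beta^*) - \nabla Q(\beta; \beta) = \mathbb{E}[y(m_{\beta^*} - m_\beta)] - \mathbb{E}[(K_{\beta^*} - K_\beta)]\,\beta,
\]
I must control how the conditional moments $m_\beta$ and $K_\beta$ vary with the parameter $\beta$. Unlike the moments under the truth, these are genuinely nonlinear in $\beta$ through the scalar factor $\frac{y - \langle \beta, z\odot x\rangle}{\sigma^2 + \|(1-z)\odot\beta\|_2^2}$ and the rank-one corrections in (\ref{eq:11}). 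The plan is to differentiate $m_\beta$ and $K_\beta$ in $\beta$, take operator norms, and average over the missingness pattern $z$ and the data, showing that each missing coordinate contributes a factor controlled by $p_m$ and by the SNR through $b = r^2(1+k)^2$. The ceiling $\|\beta^*\|_2/\sigma \leq r$ together with $p_m < \frac{1}{1+2b+2b^2}$ is precisely what forces the resulting contraction factor below $1$, reusing the coordinate-wise moment computations for the missing-covariate model in \citep{balakrishnan2017statistical,yi2015regularized}. Restricting to $\beta \in \mathcal{B} = \{\|\beta - \beta^*\|_2 \leq k\|\beta^*\|_2\}$ keeps $\|(1-z)\odot\beta\|_2^2$ uniformly bounded, so the denominators above stay away from degeneracy; this is the role of the radius $R = k\|\beta^*\|_2$ and explains why $k$ enters the definition of $b$.
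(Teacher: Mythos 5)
You should first note that the paper contains no proof of Lemma~\ref{lemma:5} at all: it is imported verbatim from \citep{balakrishnan2017statistical,yi2015regularized} (the appendix proves only Theorem 1 and Lemmas 2, 4, 6), so your attempt is an independent reconstruction rather than something that can be matched against an internal argument. The parts you actually carry out are correct and coincide with how the cited references handle them: $Q(\cdot;\beta')$ is quadratic in its first argument with Hessian $-\mathbb{E}[K_{\beta'}]$, the MCAR missingness plus the tower property give $\mathbb{E}[K_{\beta^*}(x^{\text{obs}},y)]=\mathbb{E}[xx^T]=I_d$, and hence $\mu=\upsilon=1$ and self-consistency via $\mathbb{E}[yx]=\beta^*$. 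One small normalization slip: starting from $\log f_\beta(x,y)=-\tfrac12\|x\|_2^2-\tfrac{1}{2\sigma^2}(y-\langle x,\beta\rangle)^2+\text{const}$, the Hessian is $-\tfrac{1}{\sigma^2}\mathbb{E}[K_{\beta'}]$, which would give $\mu=\upsilon=1/\sigma^2$; to land on $\mu=\upsilon=1$ as stated you must adopt the convention implicit in (\ref{eq:9}), where the $1/\sigma^2$ factor is absorbed into the definition of $Q$.

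The genuine gap is the quantitative heart of the lemma: establishing Definition~\ref{def:1} with exactly $\gamma=\frac{b+p_m(1+2b+2b^2)}{1+b}$, which is the whole content of the hypothesis $p_m<\frac{1}{1+2b+2b^2}$ (that condition is equivalent to $\gamma<1$, as you observe, but only once this $\gamma$ is in hand). Your plan --- differentiate $m_\beta$ and $K_\beta$ in $\beta$, bound operator norms of the derivatives, and average over the missingness pattern --- is a plausible route to \emph{some} contraction constant, but it is not the route the cited works take and it is far from clear it produces this constant with the stated dependence on $b=r^2(1+k)^2$ and $p_m$; the known proof instead manipulates the difference $\nabla Q(\beta;\beta^*)-\nabla Q(\beta;\beta)$ directly from the closed forms (\ref{eq:10})--(\ref{eq:11}), decomposes according to which coordinates are missing, and tracks the SNR ceiling $\|\beta^*\|_2\le r\sigma$ together with $\|\beta\|_2\le(1+k)\|\beta^*\|_2$ through each term. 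By deferring exactly this computation to "the coordinate-wise moment computations in \citep{balakrishnan2017statistical,yi2015regularized}," your proposal proves the easy claims and cites away the one hard claim --- which, to be fair, is no less than what the paper itself does, but it means the proposal is not a self-contained proof of the lemma.
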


\begin{lemma}\label{lemma:6}
With the same assumptions as in Lemma \ref{lemma:5}, for each $\beta\in \mathcal{B}$ with $\|\beta\|_0=s$,  $[\nabla q_i(\beta; \beta)]_j$ is $\xi$-sub-exponential with 
\begin{multline}
    \xi=C[(1+k)(1+kr)^2\sqrt{s}\|\beta^*\|_2+\max\{(1+kr)^2, \sigma^2+\|\beta^*\|_2^2\}]
\end{multline}
for some constant $C>0$.  Also, each $[\nabla q_i(\beta;\beta)]_j$, where $i\in [n]$, is independent of others for any fixed $j\in[d]$. 
\end{lemma}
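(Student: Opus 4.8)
The plan is to show that $[\nabla q_i(\beta;\beta)]_j$ is a sum of a bounded number of terms, each of which—after conditioning on the missingness pattern $z_i$—is the product of at most two centered sub-Gaussian random variables, and is therefore sub-exponential in the sense of Definition \ref{def:4}. Since the Orlicz $\psi_1$ (sub-exponential) norm is subadditive and satisfies $\|UW\|_{\psi_1}\leq\|U\|_{\psi_2}\|W\|_{\psi_2}$ for the $\psi_2$ (sub-Gaussian) norm, it suffices to bound the $\psi_2$-norm of each elementary factor and then add the contributions. The parameter $\xi$ is then read off as a constant multiple of the resulting sum.

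First I would condition on $z_i$. Given $z_i$, the observed entries $\{x_{i,l}:z_{i,l}=1\}$ are i.i.d. $\mathcal N(0,1)$, $y_i=\langle\beta^*,x_i\rangle+V$ is Gaussian, and the denominator $D:=\sigma^2+\|(1-z_i)\odot\beta\|_2^2$ is deterministic with $D\geq\sigma^2$. Writing $a=z_i\odot x_i$ and the scalar $c=(y_i-\langle\beta,a\rangle)/D$, equation (\ref{eq:10}) becomes $m_\beta=a+c\,(1-z_i)\odot\beta$, and a short computation using $z_{i,j}(1-z_{i,j})=0$ gives the key identity $(1-z_i)\odot m_\beta=c\,(1-z_i)\odot\beta$. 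Substituting this into (\ref{eq:11}) collapses the last term of $K_\beta\beta$, so that $[\nabla q_i]_j=y_i[m_\beta]_j-[K_\beta\beta]_j$ expands into the elementary pieces $y_i a_j$, $y_i c\,\beta_j$, $a_j\langle a,\beta\rangle$, $c\,\beta_j\langle a,\beta\rangle$, $c^2\|(1-z_i)\odot\beta\|_2^2\beta_j$, and the deterministic term $(1-z_{i,j})\beta_j$. Crucially, no product of three Gaussian factors survives the simplification.

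Next I would bound the $\psi_2$-norms of the factors using Lemma \ref{lemma:5}. The coordinate $a_j$ is standard Gaussian ($\psi_2=O(1)$); $y_i$ is Gaussian with $\psi_2=O(\sqrt{\sigma^2+\|\beta^*\|_2^2})$; the numerator of $c$ equals $\langle\beta^*-z_i\odot\beta,x_i\rangle+V$, so $c$ is sub-Gaussian with parameter $O(1+kr)$ after using $D\geq\sigma^2$, the SNR bound $\|\beta^*\|_2/\sigma\leq r$, and $\|\beta\|_2\leq(1+k)\|\beta^*\|_2$; and $\langle a,\beta\rangle$ is Gaussian with variance at most $\|\beta\|_2^2$, giving $\psi_2=O((1+k)\|\beta^*\|_2)$. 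Combining factors multiplicatively for each product term and summing yields a bound of the claimed form: the linear-times-inner-product terms produce the $(1+k)(1+kr)^2\sqrt s\,\|\beta^*\|_2$ contribution, where the $\sqrt s$ arises from converting inner products against the $s$-sparse $\beta$ through $\|\beta\|_1\leq\sqrt s\,\|\beta\|_2$, while the terms $y_i a_j$ and $c^2\|(1-z_i)\odot\beta\|_2^2\beta_j$ produce the $\max\{(1+kr)^2,\sigma^2+\|\beta^*\|_2^2\}$ contribution. Independence across $i$ for fixed $j$ is immediate from the independence of the samples.

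The main obstacle is the bookkeeping around the random scalar $c$ and the un-conditioning on $z_i$. Controlling $c$ requires simultaneously lower-bounding $D$ by $\sigma^2$ and upper-bounding the numerator's variance via the SNR and radius constraints; the condition $p_m<1/(1+2b+2b^2)$ is what keeps these bounds uniform over admissible patterns. Finally, since every conditional bound holds with a parameter independent of the realized $z_i$, the conditional moment generating functions are dominated by a common $\exp(\xi^2t^2/2)$; taking expectation over $z_i$ preserves this, but one must account for the fluctuation of the conditional mean $\mathbb E[\,\cdot\mid z_i]$ relative to the unconditional mean, which I would absorb through an additional centering step and the fact that a mixture of sub-exponential laws sharing a common parameter remains sub-exponential. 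I expect this un-conditioning, together with the precise tracking of the $(1+kr)$ and $\sqrt s$ factors, to be the most delicate part; the remainder is routine Orlicz-norm algebra.
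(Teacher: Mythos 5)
Your proposal is correct in substance, but it takes a genuinely different route from the paper's proof. The paper never conditions on the missingness pattern: it keeps the split $[\nabla q_i]_j = y_i[m_\beta(x_i^{\text{obs}},y_i)]_j-[K_\beta(x_i^{\text{obs}},y_i)\beta]_j$, imports as a black box the fact that each coordinate $\bar m_j$ is zero-mean sub-Gaussian with $\|\bar m_j\|_{\psi_2}\le C(1+kr)$ (Lemma~\ref{alemma:11}, i.e.\ Lemma B.3 of \citep{wang2015high}), expands $[K_\beta\beta]_j$ coordinate-wise as in (\ref{aeq:65}), and then applies the product rule of Lemma~\ref{alemma7} together with the triangle inequality for $\|\cdot\|_{\psi_1}$; the $\sqrt{s}$ in $\xi$ is produced exactly there, via $\|\sum_{k}\bar m^i_j\bar m^i_k\beta_k\|_{\psi_1}\le C(1+kr)^2\|\beta\|_1\le C(1+kr)^2\sqrt{s}\|\beta\|_2$, and the final recentering uses Lemma~\ref{alemma6}. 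You instead rebuild the needed sub-Gaussianity from scratch: conditioning on $z_i$ makes the denominator of (\ref{eq:10}) deterministic, the identity $(1-z_i)\odot m_\beta=c\,(1-z_i)\odot\beta$ collapses (\ref{eq:11}), and every surviving piece is a product of at most two conditionally sub-Gaussian factors, after which you un-condition. Your route buys self-containedness (no appeal to the cited lemma) and in fact a sharper bound: since $\langle z_i\odot x_i,\beta\rangle$ is conditionally a single Gaussian with variance at most $\|\beta\|_2^2$, you never need the lossy $\ell_1$ expansion, so no $\sqrt{s}$ factor should appear in your bound at all --- which still proves the lemma, because $\xi$-sub-exponential implies $\xi'$-sub-exponential for any $\xi'\ge\xi$. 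The paper's route buys brevity, at the price of the $\sqrt{s}$ and of leaning on the external lemma.

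Three slips in your writeup, none fatal. First, the bookkeeping of the expansion: the term $c^2\|(1-z_i)\odot\beta\|_2^2\,\beta_j$ is precisely what the identity cancels, while the cross term $c\,a_j\|(1-z_i)\odot\beta\|_2^2$, which you omit, does survive; it is again a product of two conditionally sub-Gaussian factors, so your method absorbs it unchanged. Second, and relatedly, attributing the $\sqrt{s}$ factor to $\|\beta\|_1\le\sqrt{s}\|\beta\|_2$ is inconsistent with your own decomposition, which never forms a coordinate-wise sum against $\beta$; that step belongs to the paper's argument, not yours. Third, the correct conditional bound is $\|c\|_{\psi_2}= O((1+kr)/\sigma)$, not $O(1+kr)$; the missing $1/\sigma$ is harmless only because it always multiplies quantities bounded by $\|\beta\|_2\le(1+k)\|\beta^*\|_2\le(1+k)r\sigma$, so only the $k,r$-dependent constants change. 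Finally, keep your caution about un-conditioning explicit: the mixture argument works here only because the conditional means $\mathbb{E}\big[[\nabla q_i(\beta;\beta)]_j \mid z_i\big]$ are uniformly bounded at the same order as $\xi$, so their fluctuation around the unconditional mean is bounded, hence sub-Gaussian; a mixture of sub-exponential laws with a common parameter but arbitrarily spread means would not be sub-exponential.
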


\begin{theorem}
In an  $\epsilon$-corrupted high dimensional linear regression with missing covariates model with  $\epsilon$ satisfying the condition of 
\begin{multline*}
    [(1+k)(1+kr)^2\sqrt{s}\|\beta^*\|_2+ 
    \max\{(1+kr)^2, \sigma^2+\|\beta^*\|_2^2\}]\sqrt{s^*}(\epsilon\log (nd)+\sqrt{\frac{\log d}{n}})\\
    \leq O(\|\beta^*\|_2)
\end{multline*}
for some $k\in (0,1)$, if 
$\|\beta^{\text{init}}-\beta^*\|_2\leq \frac{k\|\beta^*\|_2^2}{2}$ and the assumptions in Lemma \ref{lemma:5} hold,
then, the output  $\beta^T$ of Algorithm \ref{alg:1} after taking $s=O(s^*)$ and $\eta=O(1)$ satisfies the following with probability at least $1-Td^{-3}$  
\begin{align}\label{eq:28}
  \|\beta^T-\beta^*\|_2 &\leq \gamma^{\frac{t}{2}}\|\beta^*\|_2 + O\big(   \max\{\|\beta^*\|^2_2+\sigma^2, 1, \sqrt{s^*}\|\beta^*\|_2\}\nonumber \\
  & \times   \sqrt{s^*}(\epsilon \log(nd)+\sqrt{\frac{\log d}{n}})\big),
\end{align}
where the Big-$O$ term hides the terms of $k$ and $r$.
\end{theorem}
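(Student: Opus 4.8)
The plan is to derive this final theorem as a direct corollary of the master Theorem~\ref{thm:1}, in exactly the same manner that Theorems~\ref{thm:2} and~\ref{thm:3} were obtained for the Gaussian mixture and mixture-of-regressions models. The only work is to instantiate the abstract constants $\gamma,\mu,\upsilon,\xi,k,R,\mathcal{B}$ with the values supplied by the two model-specific lemmas, and then check that every hypothesis of Theorem~\ref{thm:1} is met by the assumptions of the present statement.

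First I would invoke Lemma~\ref{lemma:5}, which, under the stated maximum-SNR condition $\|\beta^*\|_2/\sigma\le r$ and the missingness bound $p_m<1/(1+2b+2b^2)$ with $b=r^2(1+k)^2$, guarantees that $Q(\cdot;\cdot)$ is self-consistent, Lipschitz-Gradient-2($\gamma,\mathcal{B}$), $\mu$-smooth and $\upsilon$-strongly concave with $\mu=\upsilon=1$, $\mathcal{B}=\{\beta:\|\beta-\beta^*\|_2\le R\}$, $R=k\|\beta^*\|_2$, and $\gamma=\frac{b+p_m(1+2b+2b^2)}{1+b}$. The missingness bound is precisely what forces $p_m(1+2b+2b^2)<1$ and hence $\gamma<1$; since $b,p_m>0$ also give $\gamma>0$, substituting $\mu=\upsilon=1$ yields $1-2\frac{\upsilon-\gamma}{\upsilon+\mu}=\gamma\in(0,1)$, so the contraction-factor hypothesis of Theorem~\ref{thm:1} holds. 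Next I would invoke Lemma~\ref{lemma:6} to certify the sub-exponential part of Assumption~\ref{ass:1} with $\xi=C[(1+k)(1+kr)^2\sqrt{s}\|\beta^*\|_2+\max\{(1+kr)^2,\sigma^2+\|\beta^*\|_2^2\}]$ and the coordinate-wise independence of $[\nabla q_i(\beta;\beta)]_j$.

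With these constants in hand the remaining verification is routine. Taking $\eta=\frac{2}{\upsilon+\mu}=1=O(1)$ and $s=O(s^*)$ makes the sparsity requirement~\eqref{eq:13} hold because, when $k$ and $\gamma$ are bounded away from $1$, both terms inside the maximum there are $O(1)$; the initialization hypothesis is exactly the requirement $\|\beta^{\text{init}}-\beta^*\|_2\le R/2$ of Theorem~\ref{thm:1} once $R=k\|\beta^*\|_2$ is substituted; and the corruption/sample-size condition~\eqref{eq:14} is implied by this theorem's hypothesis on $\epsilon$, since $\sqrt{s}+\frac{C_1}{\sqrt{1-k}}\sqrt{s^*}=O(\sqrt{s^*})$ and the right-hand side of~\eqref{eq:14} is $\Theta(\|\beta^*\|_2)$ once $\gamma,k$ are treated as constants, so the two conditions coincide up to absolute constants. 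Applying~\eqref{eq:15}, substituting $\upsilon+\mu=2$ so that the optimization factor becomes $\gamma^{t/2}$ and the second term becomes $O\big(\xi\sqrt{s^*}(\epsilon\log(nd)+\sqrt{\log d/n})\big)$, and finally plugging in the $\xi$ of Lemma~\ref{lemma:6} while absorbing the factors of $k$ and $r$ into the Big-$O$ (so that $(1+k)(1+kr)^2$ and $(1+kr)^2$ become constants and $\sqrt{s}=O(\sqrt{s^*})$), collapses $\xi$ to $\max\{\|\beta^*\|_2^2+\sigma^2,1,\sqrt{s^*}\|\beta^*\|_2\}$ and reproduces~\eqref{eq:28}.

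The step I expect to require the most care is matching the corruption condition~\eqref{eq:14} together with the simplification of $\xi$: unlike the earlier two models, here $\xi$ itself carries a $\sqrt{s}\|\beta^*\|_2$ term arising from the covariance-like matrix $K_\beta$ in~\eqref{eq:11}, so one must confirm that after setting $s=O(s^*)$ this term merges cleanly into the $\sqrt{s^*}\|\beta^*\|_2$ entry of the maximum rather than producing a spurious extra $\sqrt{s^*}$ factor. A secondary subtlety worth flagging is that this model constrains an \emph{upper} bound on the SNR (a maximum SNR $\|\beta^*\|_2/\sigma\le r$) coupled with the missingness constraint, rather than the minimum-SNR lower bound used for the Gaussian mixture and mixture-of-regressions models; one should verify that the admissible regime for $(r,p_m,k)$ is nonempty, which is exactly what the coupling $b=r^2(1+k)^2$ with $p_m<1/(1+2b+2b^2)$ in Lemma~\ref{lemma:5} secures.
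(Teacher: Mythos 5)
Your proposal is correct and follows exactly the route the paper itself intends: the paper treats this theorem as a direct instantiation of Theorem~\ref{thm:1}, with Lemma~\ref{lemma:5} supplying $\gamma,\mu,\upsilon,k,R,\mathcal{B}$ (so that with $\mu=\upsilon=1$ the contraction factor $1-2\frac{\upsilon-\gamma}{\upsilon+\mu}$ collapses to $\gamma\in(0,1)$) and Lemma~\ref{lemma:6} supplying $\xi$, whose $\sqrt{s}\|\beta^*\|_2$ term is absorbed into $\max\{\|\beta^*\|_2^2+\sigma^2,1,\sqrt{s^*}\|\beta^*\|_2\}$ after setting $s=O(s^*)$. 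Your verification of the hypotheses (\ref{eq:13}), (\ref{eq:14}), and the initialization condition matches the paper's implicit argument, so there is nothing to correct.
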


Note that similar to the mixture of regressions model, when $\epsilon\leq \tilde{O}(\frac{1}{\sqrt{n}})$,  the estimation error is $O(s^*\sqrt{\frac{\log d}{n}})$, which is only a factor of $\sqrt{s^*}$ away from the optimal. However, unlike the previous two models,  we  assume here that  SNR is upper bounded by some constant which is unavoidable as pointed out in \citep{loh2011high}.
\section{Experiments}

In this section, we empirically study the performance of Algorithm~\ref{alg:1} on the three models mentioned in the previous section. Since in the paper we mainly focus on the statistical setting and its theoretical behaviors, thus, we will only perform our algorithm on the synthetic data. It is notable that previous papers on the statistical guarantees of EM algorithm all perform their algorithms on synthetic data only such as \citep{balakrishnan2017statistical,wang2015high,yi2015regularized}. Thus, performing experiments on synthetic data only is enough for the paper.

For each of these models, we generate synthesized datasets according to the underlying distribution. We will use $\|\beta-\beta^*\|_2$ to measure the estimation error, and test how it is affected by different parameter settings from two aspects. Firstly, we examine how the \emph{underlying sparsity} parameter $s^*$ of the model affects the estimation error and whether it is consistent with our theoretical results. Secondly, we test how the corruption fraction $\epsilon$ of the data and the dimensionality $d$ affect the convergence rate, as well as the estimation error.  For each experiment, the data is corrupted as follows: We first randomly choose $\epsilon$ fraction of the input data, then we add a Gaussian noise for each of these data samples. The noise is sampled from a multivariate Gaussian distribution  $\mathcal{N}(0, 50 \|X\|_\infty  I_d)$.  All experiments are repeated for 20 runs and the average results are reported.

\vspace{0.1in}
\noindent{\bfseries Parameter setting } Throughout the experiments we will follow the setting of the previous related works on high dimensional EM algorithms which have statistical guarantees but are not corruption-proofing \citep{zhu2017high,wang2015high,yi2015regularized}. We fix the dataset size $n$ to be $2000$, because using a larger $n$ does not exhibit significant difference. For each model, the experiment is divided into three parts as mentioned previously: The first one (Figure~\ref{fig:err-vs-s}) measures $\|\beta-\beta^*\|_2$ v.s. $\sqrt{n/(s^*\log d)}$ by varying $s^*$ from $3$ to $15$, with $d$ fixed to be $100$, which follows the previous works \citep{wang2015high,zhu2017high}; The second one (Figure~\ref{fig:err-vs-eps}) examines the convergence behavior under different corruption rate $\epsilon$ which varies from $0$ to $0.2$; The last one (Figure~\ref{fig:err-vs-d}) shows the convergence behavior under different data dimensionality $d$ which ranges from $80$ to $240$, with fixed $\epsilon=0.2$. 

For each experiment, instead of choosing the initial vectors which are close to the optimal ones, we use random initialization. We will  set $s=s^*$ in our algorithm,  which is also used in the previous methods. Besides the parameter $s$, there are also two other parameters of the algorithm that need to be specified: the D-Trim parameter $\alpha$ and the step size $\eta$. We are also required to set the "noise level" for each of the three models, which is quantified by  $\sigma$ in their definitions. It is notable that the choices of these parameters are quite flexible.
\begin{description}
    \item[GMM]: Corrupted Gaussian Mixture Model~\eqref{eq:4}. We fix $\sigma$ to  $0.5$, $\alpha$ to $0.2$ and $\eta$ to $0.1$.
    \item[MRM] Corrupted Mixture of Regressions Model~\eqref{eq:6}. We fix $\sigma$ to $0.2$, $\alpha$ to $0.2$ and $\eta$ to $0.1$.
    \item[RMC] Corrupted Linear Regression with Missing Covariates Model~\eqref{eq:8}. We set $\sigma=0.1$,  $\alpha=0.3$, and the missing probability $p_m=0.1$, but use three different step sizes $\eta=0.05, 0.1, 0.08$ for the three parts of the experiment, respectively. 
\end{description}

\noindent{\bfseries Results } 
 Firstly, we will mainly show that the classical high dimensional gradient EM algorithm in \citep{wang2015high} is not robust against to the corruptions.  Here we conduct the algorithm on the three models. For each experiment, we tune the parameters to be optimal as showed in \citep{wang2015high}. We test the algorithm w.r.t to $\sqrt{n/(s^*\log d)}$, iteration and different dimensions $d$.
 
 As we can see from Figure \ref{fig:method_non}. In all the three models, the algorithm performs quite well if there is no corruptions ($\epsilon=0$) which also has been showed in the previous papers \citep{wang2015high,zhu2017high}. However, when there are $\epsilon=0.05$ fraction of the samples are corrupted, the classical high dimensional EM algorithm will achieve a large estimation error. These results motivate us to design some robust high dimensional EM algorithms while also have provable statistical guarantees.  

\begin{figure*}[!htbp]
\vspace{-0.1in}
    \centering
    \begin{subfigure}[b]{1\textwidth}
    \includegraphics[width=\textwidth]{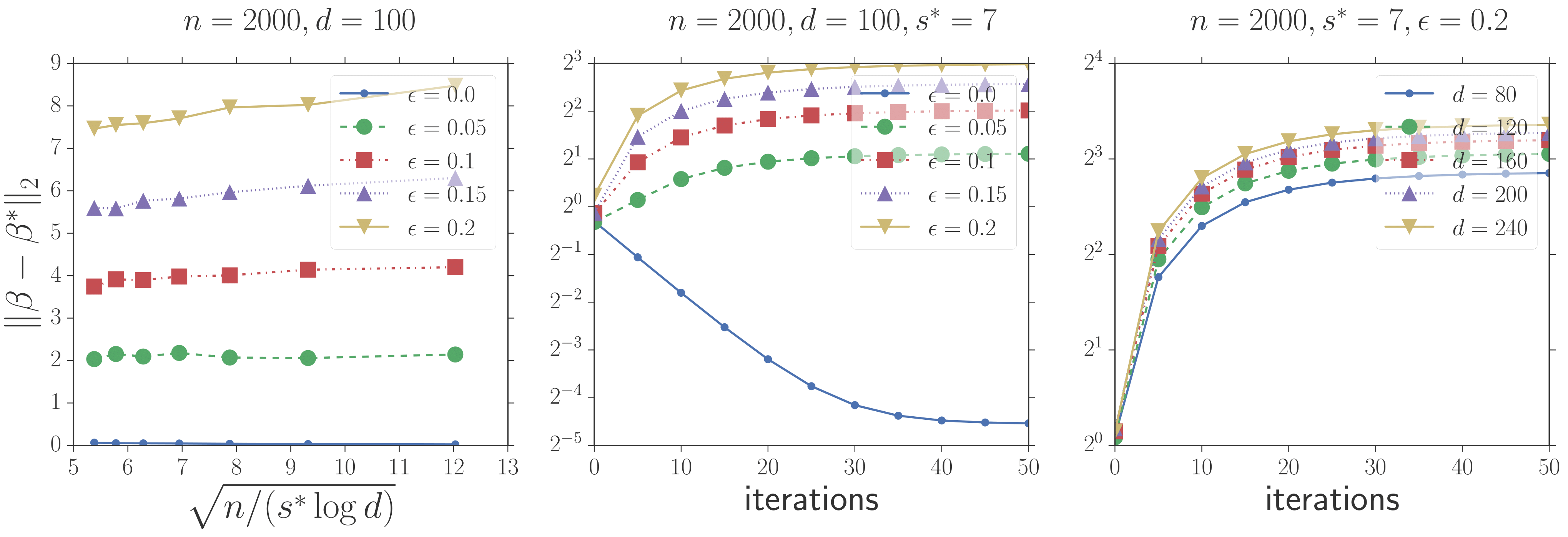}
    \caption{GMM\label{fig:GMM-err-vs-d_non}}
    \end{subfigure}
    ~
    \begin{subfigure}[b]{1\textwidth}
    \includegraphics[width=\textwidth]{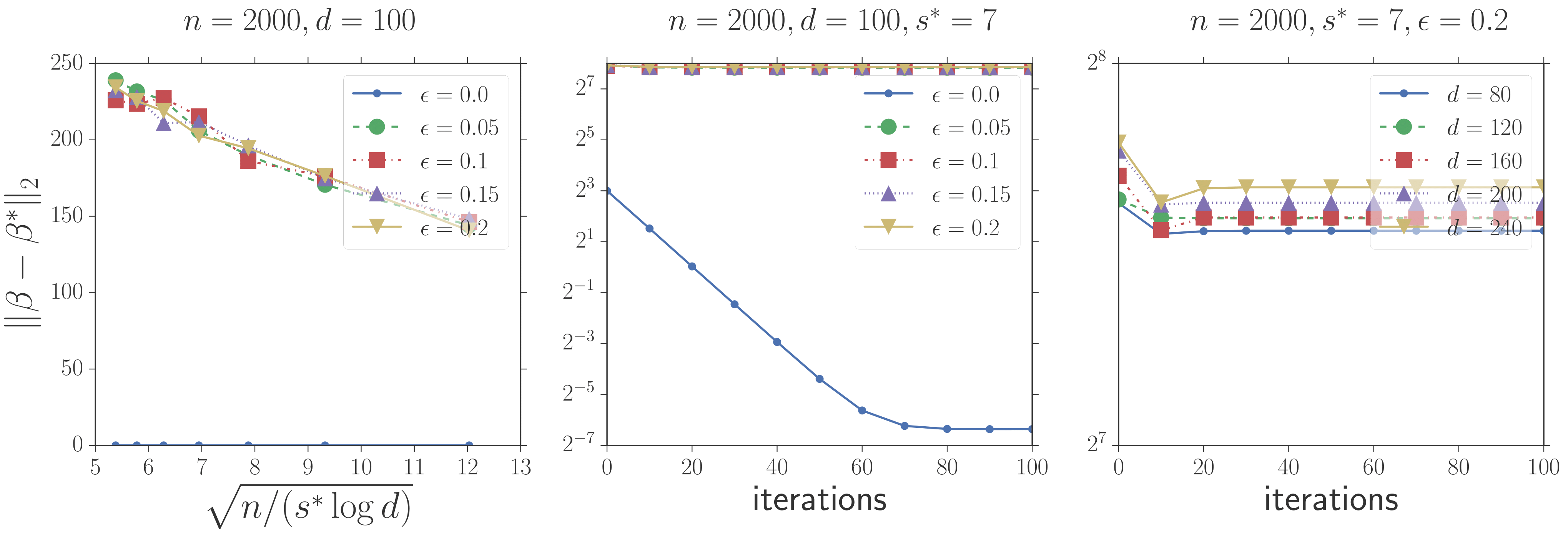}
    \caption{MRM\label{fig:MRM-err-vs-d_non}}
    \end{subfigure}
    ~
    \begin{subfigure}[b]{1\textwidth}
    \includegraphics[width=\textwidth]{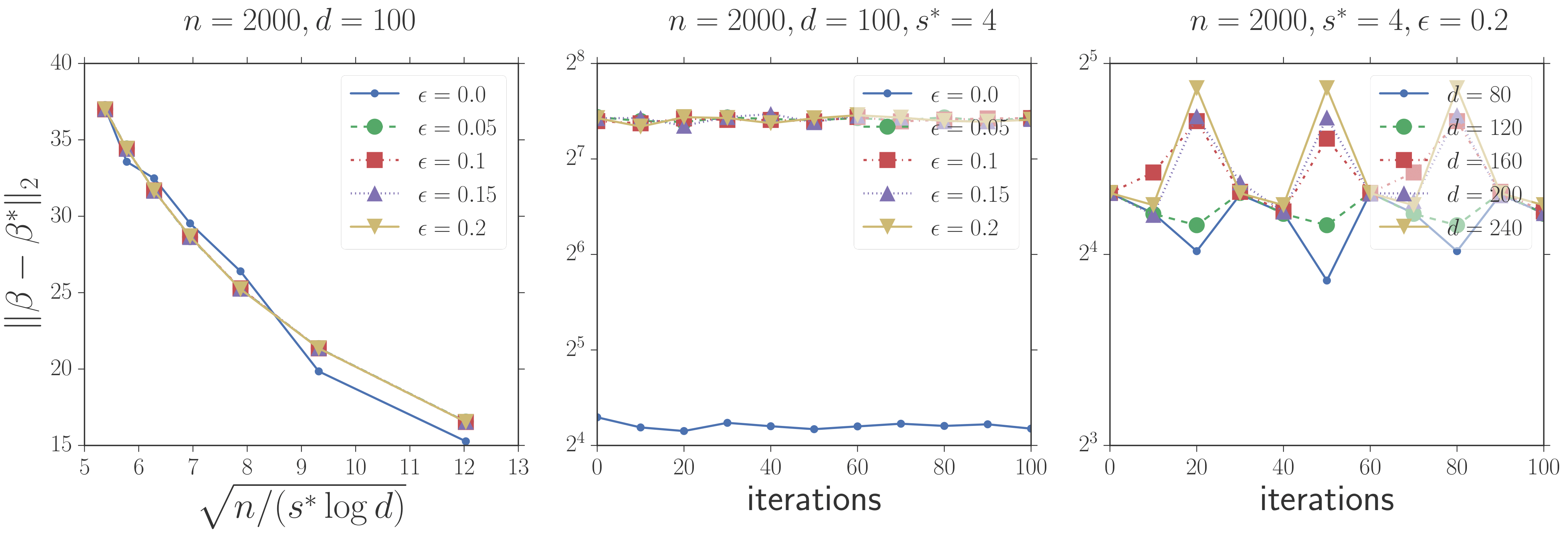}
    \caption{RMC\label{fig:RMC-err-vs-d_non}}
    \end{subfigure}

    \caption{Estimation error of classical high dimensional gradient EM algorithm in  \citep{wang2015high} w.r.t sample size, iteration and dimension.\label{fig:method_non}}
\end{figure*}

Next, we show the performance of our Algorithm \ref{alg:1}. For the first part (Figure~\ref{fig:err-vs-s}), we can see that when $\epsilon$ is small,  the final estimation error in each of the three models decreases when the term $\sqrt{n/(s^*\log d)}$ increases, as predicted by Theorem~\ref{thm:2}. But when $\epsilon$ is relatively large, the trend becomes less obvious for the Gaussian Mixture Model and the Mixture of Regressions model, because now the factor $\epsilon\log(nd)$ comes into play. 

Figure~\ref{fig:err-vs-eps} shows that our algorithm achieves linear convergence on all three models and all values of $\epsilon$, but the final converged error is heavily affected by $\epsilon$, and especially 
for the Gaussian Mixture and Linear Regression with Missing Covariates Models.  Moreover, when $\epsilon$ is small, the estimation errors are comparable to or even the same as  the non-corrupted ones, this is actually reasonable since it is corruption-proofing when $\epsilon$ is small theoretically. In the third part of the experiments (Figure~\ref{fig:err-vs-d}), varying $d$ seems not affect the convergence behavior much, which is reasonable as the error bound depends on $d$ only logarithmically and changes fairly slow. Thus, these results support Theorem \ref{thm:1}.

All the results show that our algorithm is robust against to some level of corruption while also could achieve an estimation error that is comparable to the non-corrupted ones. 
\vspace{-0.1in}
\section{Conclusion}
\vspace{-0.1in}
In this paper we  study the problem of estimating latent variable models with arbitrarily corrupted samples in the high dimensional sparse case and propose a method called Trimmed Gradient Expectation Maximization. Specifically, we show that our algorithm is  corruption-proofing and could achieve the (near) optimal statistical rate  for some statistical models  under some levels of corruption. Experimental results support our theoretical analysis and also show that our algorithm is indeed robust against to some corrupted samples. 

There are still many open problems. Firstly, in this paper, all of our theoretical guarantees need the initial parameter be close enough to the underlying parameter, which is quite strong. So how do we relax this assumption? Second, the three specific models we considered in the paper are quite simple, can we generalize to more models such as multi-component Gaussian Mixture Model or Mixture of Linear Regressions Model? Thirdly, in this paper we assume that the sparsity of the underlying parameter is known, how to deal with the case where it is unknown? 
\begin{figure*}[!htbp]
    \centering
    \begin{subfigure}[b]{.3\textwidth}
    \includegraphics[width=\textwidth,height=0.19\textheight]{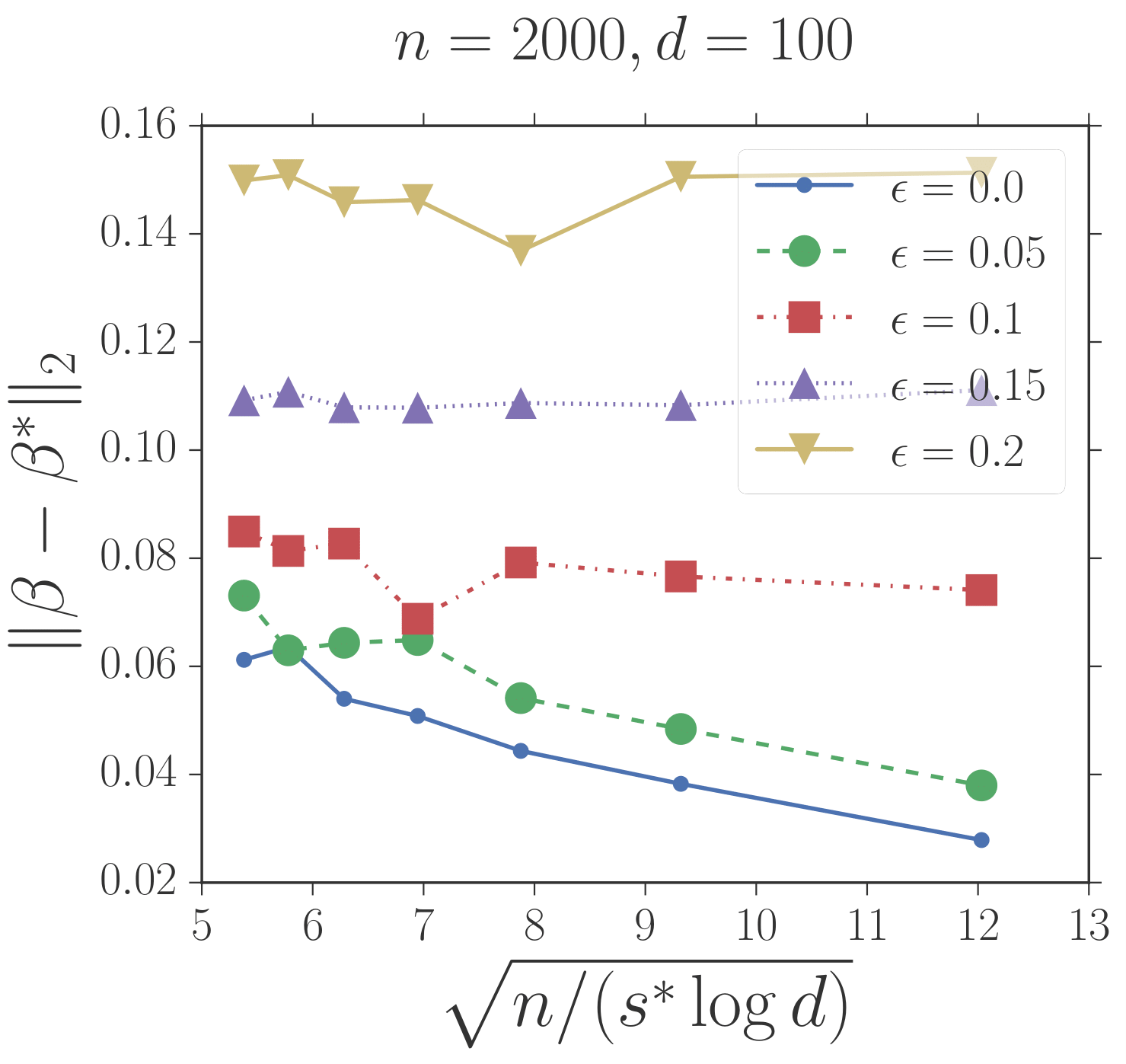}
    \caption{GMM\label{fig:GMM-err-vs-s}}
    \end{subfigure}
    ~
    \begin{subfigure}[b]{.3\textwidth}
    \includegraphics[width=\textwidth,height=0.19\textheight]{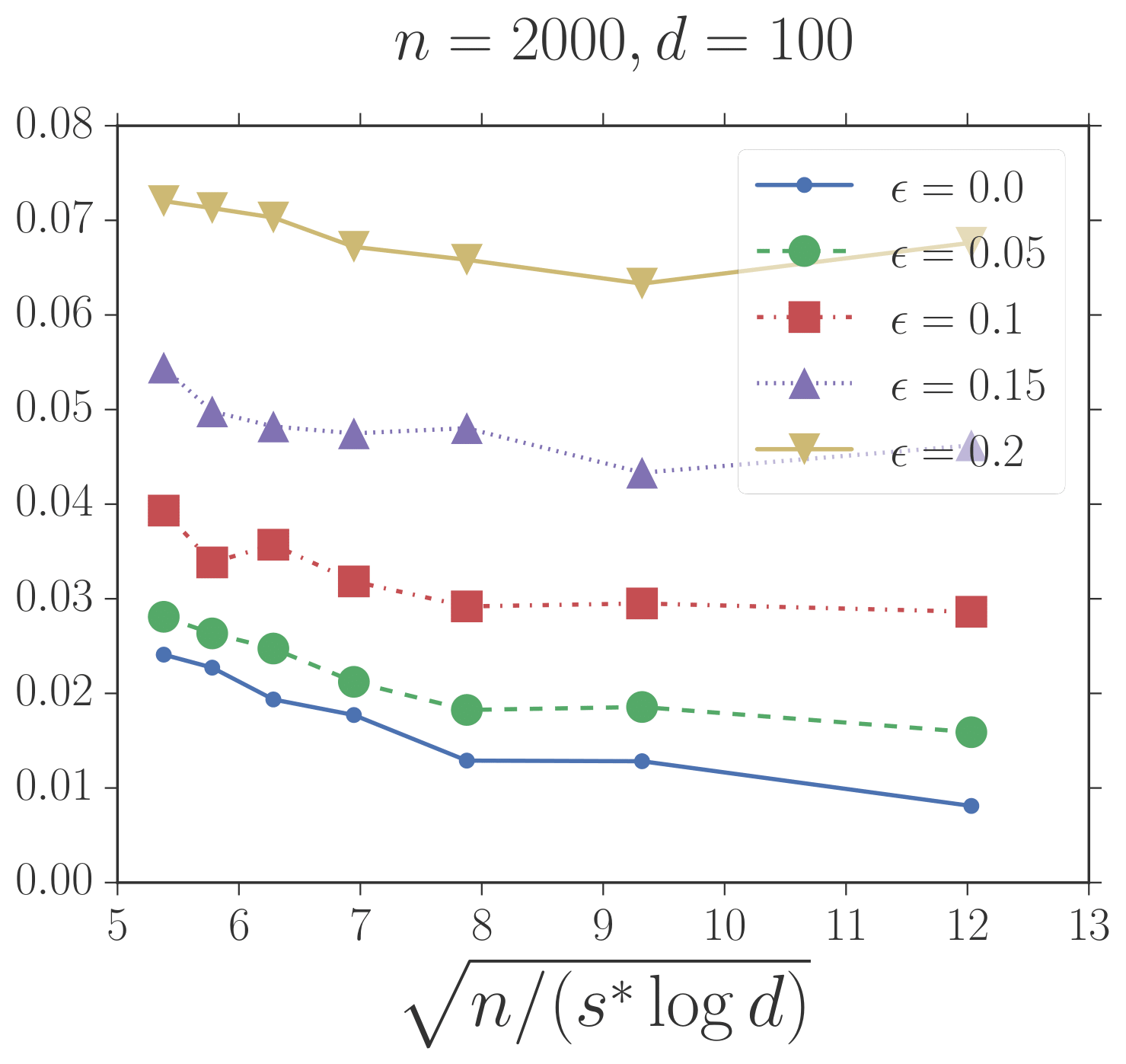}
    \caption{MRM\label{fig:MRM-err-vs-s}}
    \end{subfigure}
    ~
    \begin{subfigure}[b]{.3\textwidth}
    \includegraphics[width=\textwidth,height=0.19\textheight]{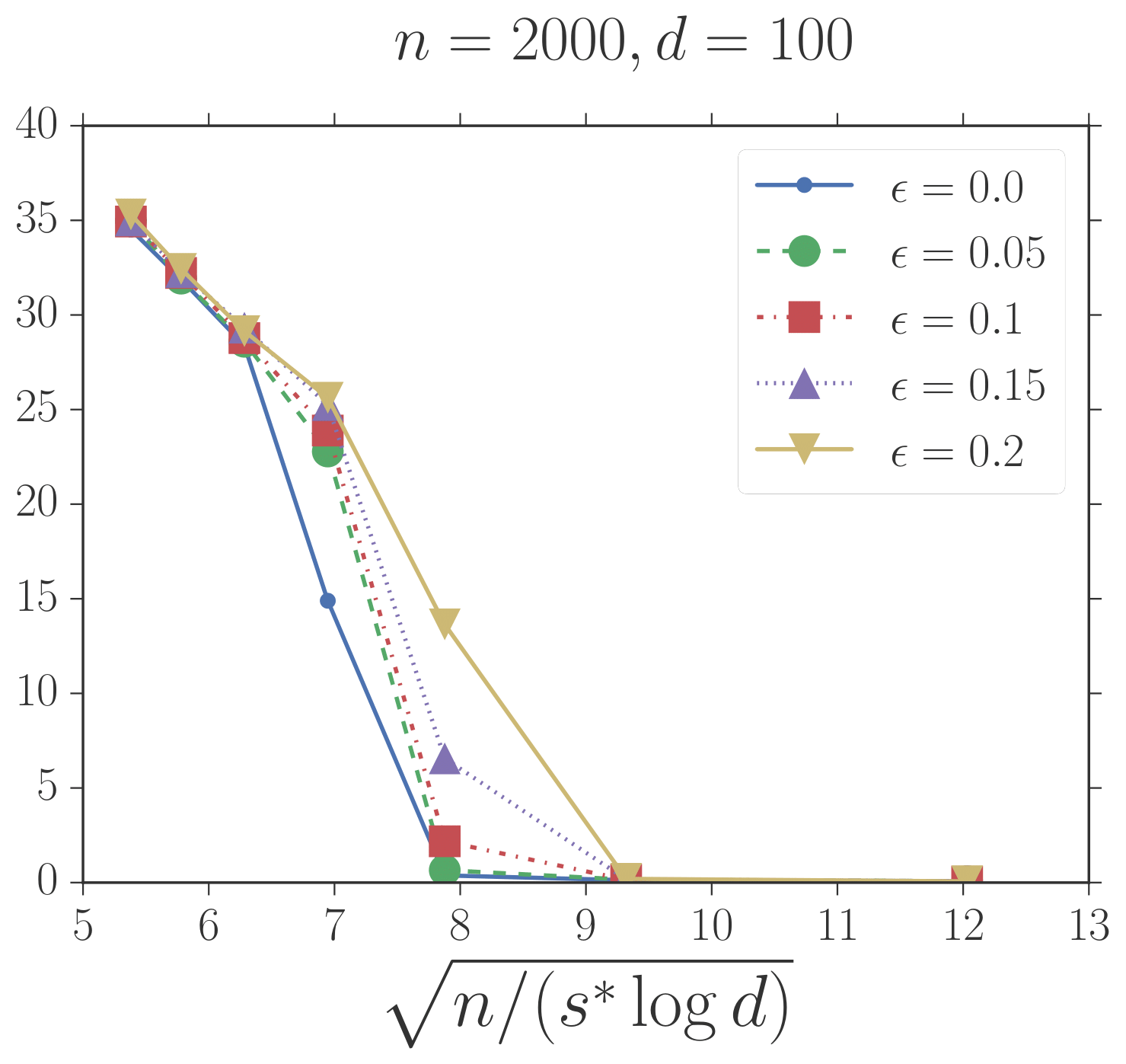}
    \caption{RMC\label{fig:RMC-err-vs-s}}
    \end{subfigure}

    \caption{Estimation error v.s. $\sqrt{n/(s^*\log d)}$\label{fig:err-vs-s}}
\end{figure*}

\begin{figure*}[!htbp]
    \centering
    \begin{subfigure}[b]{.3\textwidth}
    \includegraphics[width=\textwidth,height=0.19\textheight]{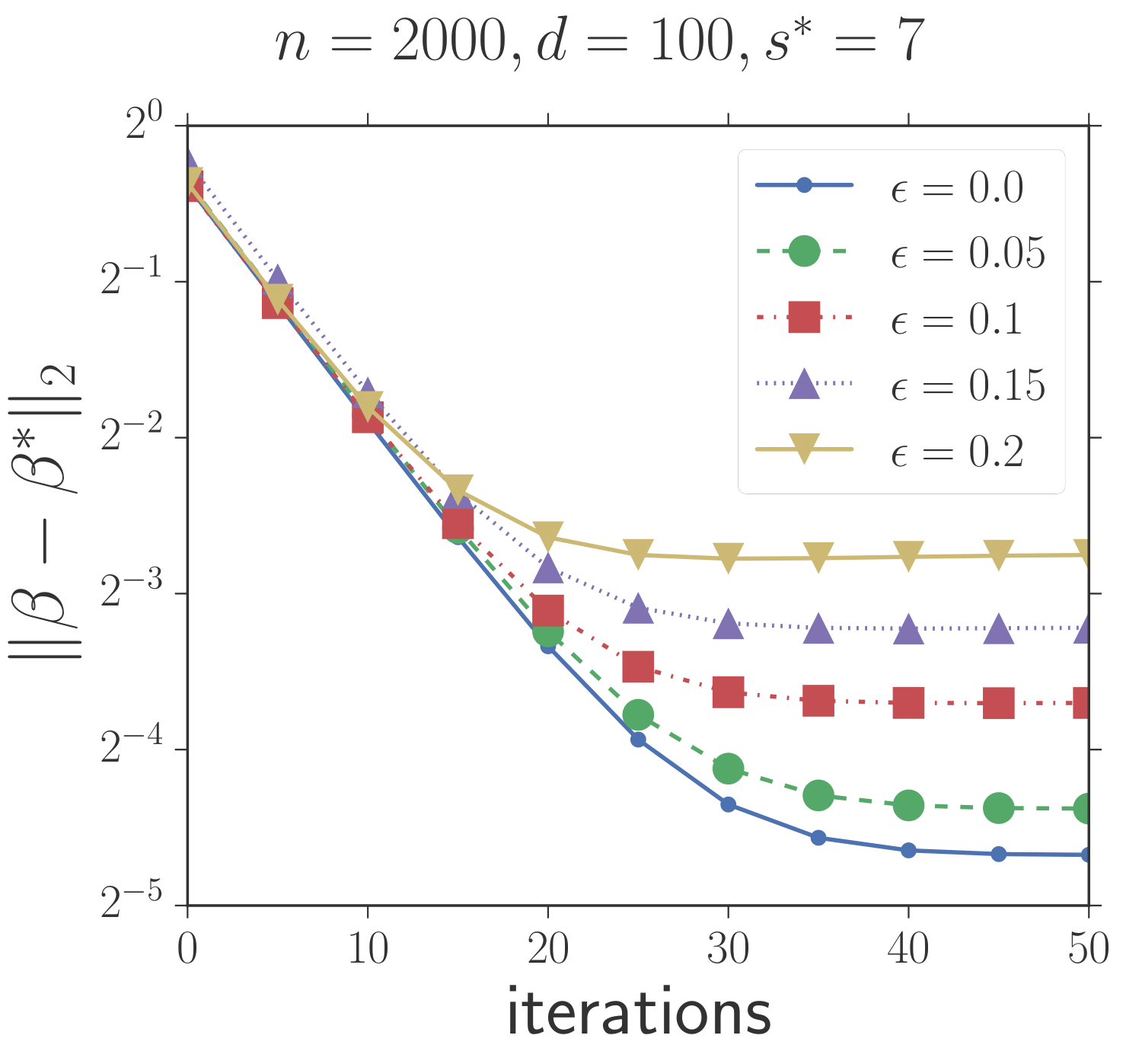}
    \caption{GMM\label{fig:GMM-err-vs-eps}}
    \end{subfigure}
    ~
    \begin{subfigure}[b]{.3\textwidth}
    \includegraphics[width=\textwidth,height=0.19\textheight]{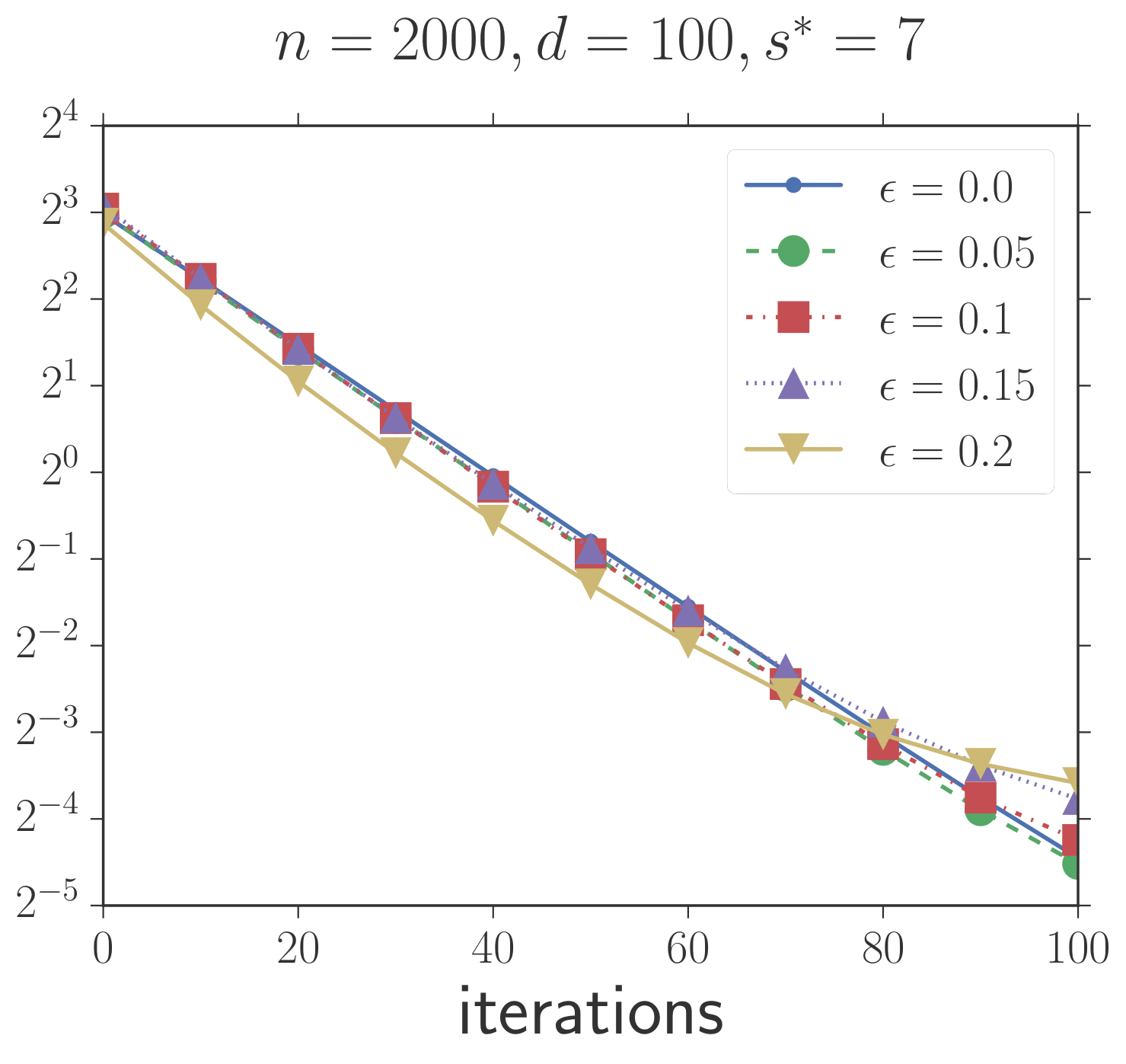}
    \caption{MRM\label{fig:MRM-err-vs-eps}}
    \end{subfigure}
    ~
    \begin{subfigure}[b]{.3\textwidth}
    \includegraphics[width=\textwidth,height=0.19\textheight]{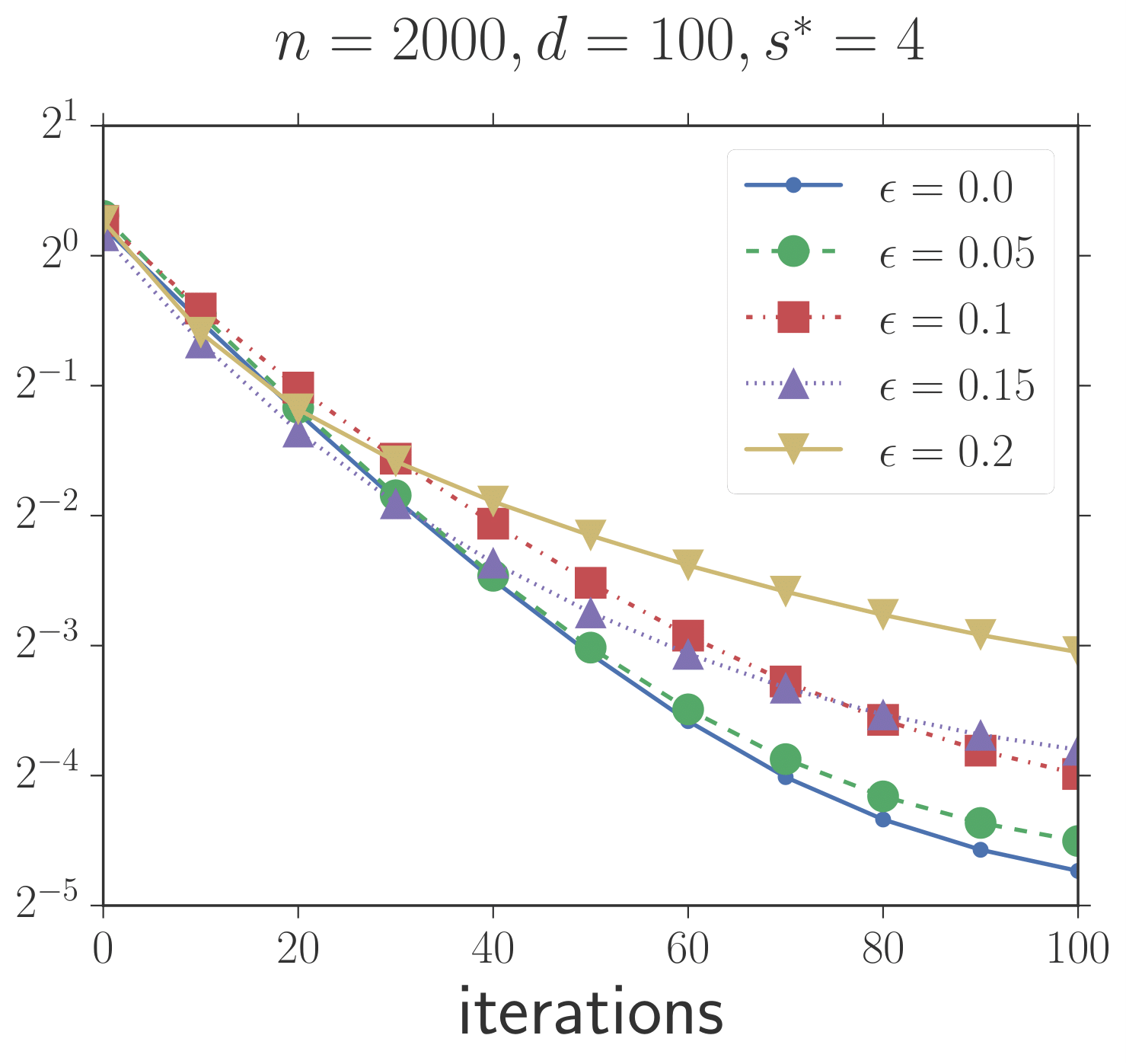}
    \caption{RMC\label{fig:RMC-err-vs-eps}}
    \end{subfigure}
    \caption{Estimation error v.s. iterations $t$ under different corruption rate $\epsilon$\label{fig:err-vs-eps}}
\end{figure*}
\begin{figure*}[!htbp]
    \centering
    \begin{subfigure}[b]{0.3\textwidth}
    \includegraphics[width=\textwidth,height=0.19\textheight]{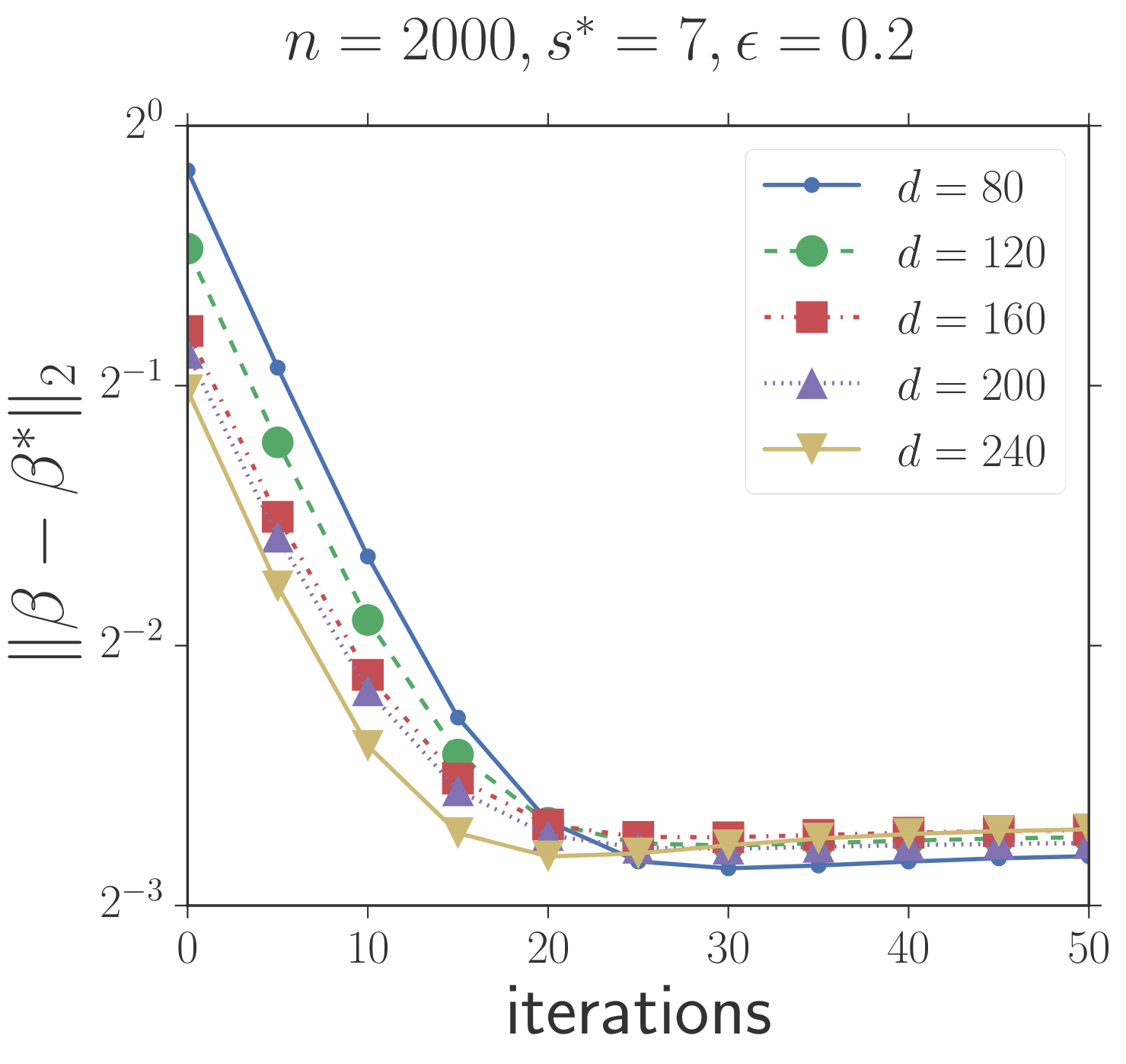}
    \caption{GMM\label{fig:GMM-err-vs-d}}
    \end{subfigure}
    ~
    \begin{subfigure}[b]{0.3\textwidth}
    \includegraphics[width=\textwidth,height=0.19\textheight]{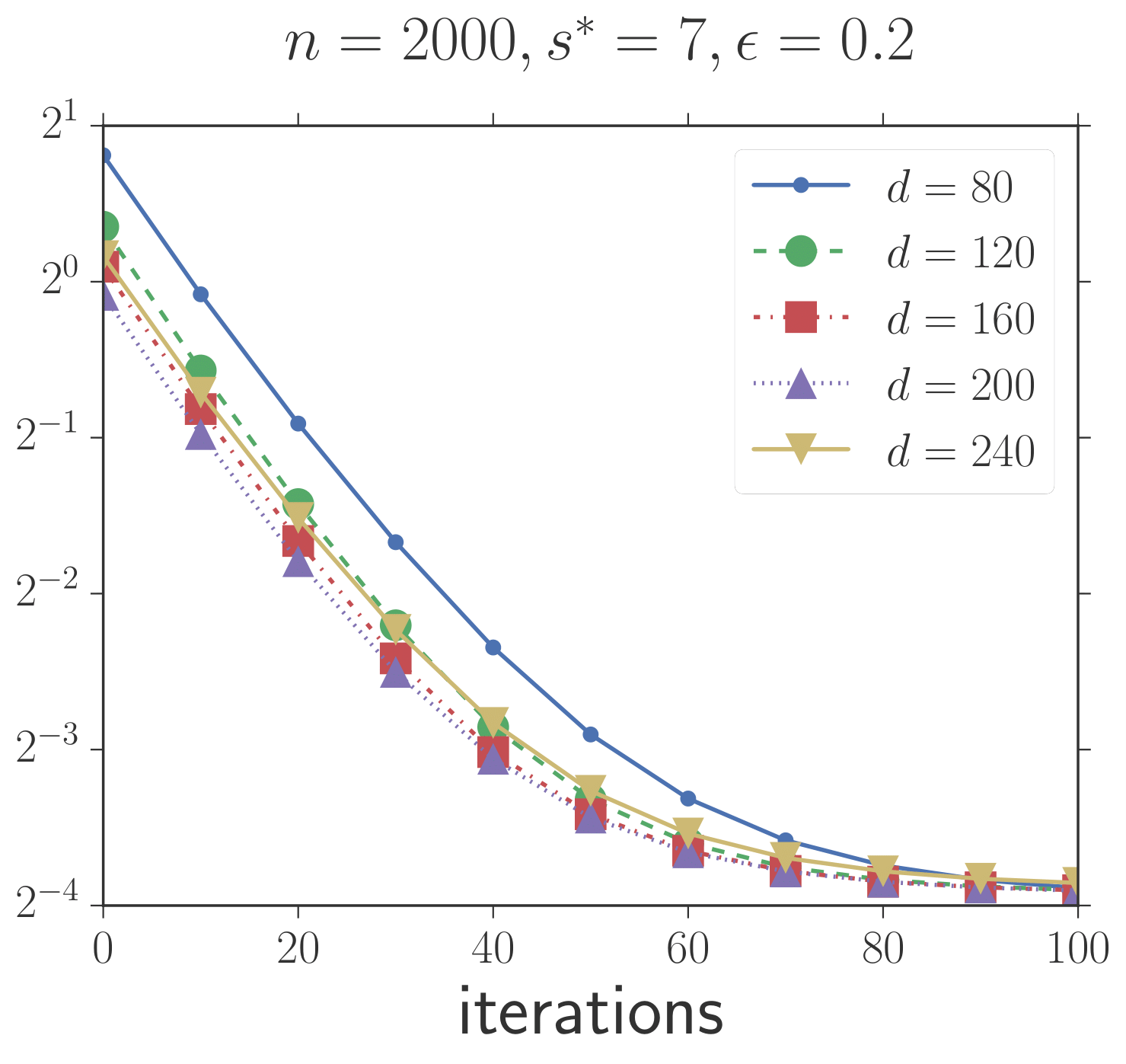}
    \caption{MRM\label{fig:MRM-err-vs-d}}
    \end{subfigure}
    ~
    \begin{subfigure}[b]{0.3\textwidth}
    \includegraphics[width=\textwidth,height=0.19\textheight]{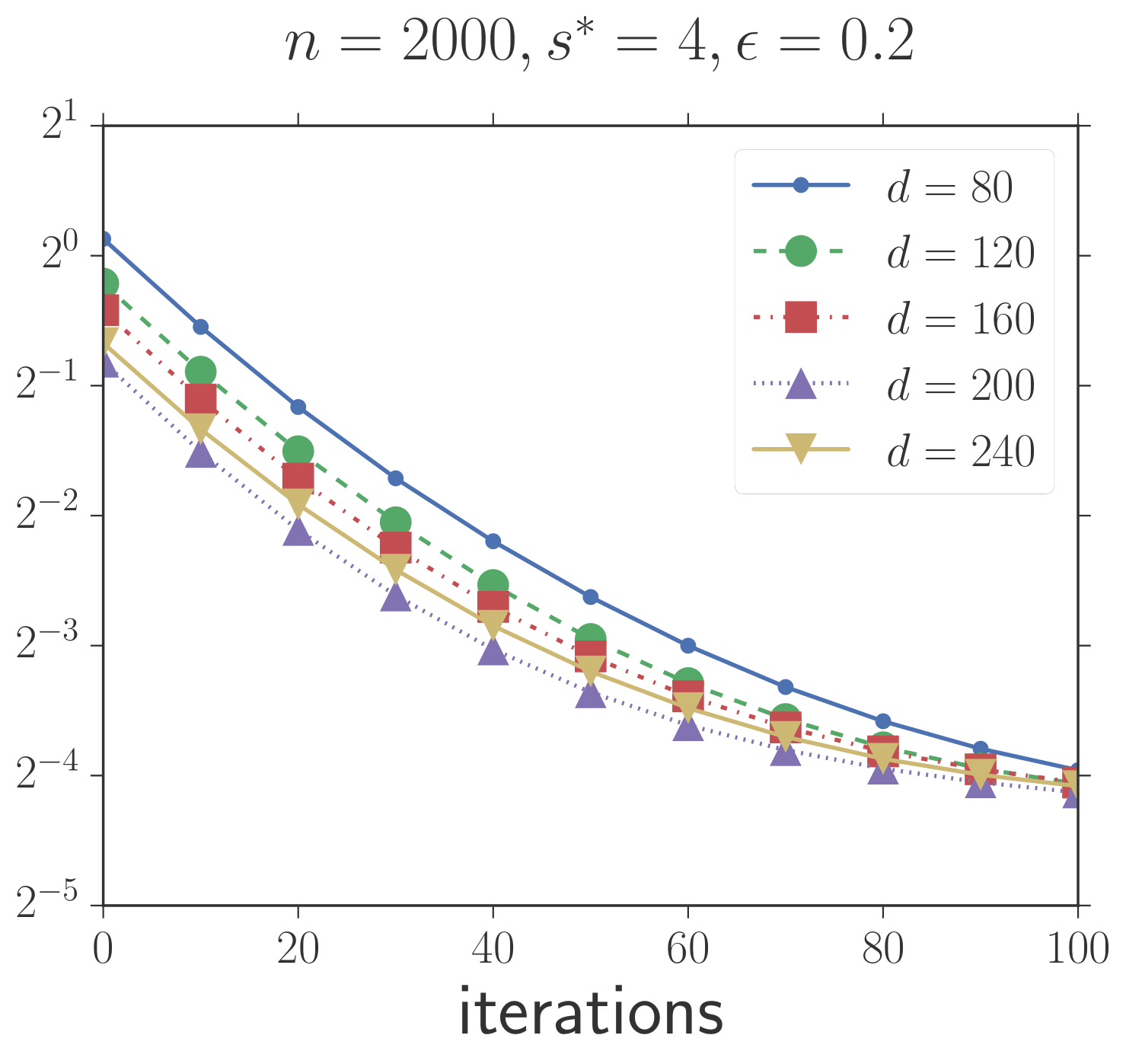}
    \caption{RMC\label{fig:RMC-err-vs-d}}
    \end{subfigure}

    \caption{Estimation error v.s. iterations $t$ under different dimensionality $d$\label{fig:err-vs-d}}
\end{figure*}


%
%

\bibliographystyle{spbasic}      
\bibliography{uai}   

\begin{thebibliography}{36}
\providecommand{\natexlab}[1]{#1}
\providecommand{\url}[1]{{#1}}
\providecommand{\urlprefix}{URL }
\expandafter\ifx\csname urlstyle\endcsname\relax
  \providecommand{\doi}[1]{DOI~\discretionary{}{}{}#1}\else
  \providecommand{\doi}{DOI~\discretionary{}{}{}\begingroup
  \urlstyle{rm}\Url}\fi
\providecommand{\eprint}[2][]{\url{#2}}

\bibitem[{Aitkin and Wilson(1980)}]{aitkin1980mixture}
Aitkin M, Wilson GT (1980) Mixture models, outliers, and the em algorithm.
  Technometrics 22(3):325--331

\bibitem[{Alistarh et~al.(2018)Alistarh, Allen-Zhu, and
  Li}]{alistarh2018byzantine}
Alistarh D, Allen-Zhu Z, Li J (2018) Byzantine stochastic gradient descent. In:
  Advances in Neural Information Processing Systems, pp 4613--4623

\bibitem[{Balakrishnan et~al.(2017{\natexlab{a}})Balakrishnan, Du, Li, and
  Singh}]{balakrishnan2017computationally}
Balakrishnan S, Du SS, Li J, Singh A (2017{\natexlab{a}}) Computationally
  efficient robust sparse estimation in high dimensions. In: Conference on
  Learning Theory, pp 169--212

\bibitem[{Balakrishnan et~al.(2017{\natexlab{b}})Balakrishnan, Wainwright, Yu
  et~al.}]{balakrishnan2017statistical}
Balakrishnan S, Wainwright MJ, Yu B, et~al. (2017{\natexlab{b}}) Statistical
  guarantees for the em algorithm: From population to sample-based analysis.
  The Annals of Statistics 45(1):77--120

\bibitem[{Blumensath and Davies(2009)}]{blumensath2009iterative}
Blumensath T, Davies ME (2009) Iterative hard thresholding for compressed
  sensing. Applied and computational harmonic analysis 27(3):265--274

\bibitem[{Boucheron et~al.(2013)Boucheron, Lugosi, and
  Massart}]{boucheron2013concentration}
Boucheron S, Lugosi G, Massart P (2013) Concentration inequalities: A
  nonasymptotic theory of independence. Oxford university press

\bibitem[{Chen et~al.(2013)Chen, Caramanis, and Mannor}]{chen2013robust}
Chen Y, Caramanis C, Mannor S (2013) Robust sparse regression under adversarial
  corruption. In: International Conference on Machine Learning, pp 774--782

\bibitem[{Chen et~al.(2017)Chen, Su, and Xu}]{chen2017distributed}
Chen Y, Su L, Xu J (2017) Distributed statistical machine learning in
  adversarial settings: Byzantine gradient descent. Proceedings of the ACM on
  Measurement and Analysis of Computing Systems 1(2):44

\bibitem[{Chen et~al.(2018)Chen, Yi, and Caramanis}]{chen2018convex}
Chen Y, Yi X, Caramanis C (2018) Convex and nonconvex formulations for mixed
  regression with two components: Minimax optimal rates. IEEE Transactions on
  Information Theory 64(3):1738--1766

\bibitem[{Dalalyan and Thompson(2019)}]{dalalyan2019outlier}
Dalalyan AS, Thompson P (2019) Outlier-robust estimation of a sparse linear
  model using $\ell_1$-penalized huber's $ m $-estimator. arXiv preprint
  arXiv:190406288

\bibitem[{Dawid and Skene(1979)}]{dawid1979maximum}
Dawid AP, Skene AM (1979) Maximum likelihood estimation of observer error-rates
  using the em algorithm. Journal of the Royal Statistical Society: Series C
  (Applied Statistics) 28(1):20--28

\bibitem[{Diakonikolas et~al.(2016)Diakonikolas, Kamath, Kane, Li, Moitra, and
  Stewart}]{diakonikolas2016robust}
Diakonikolas I, Kamath G, Kane DM, Li J, Moitra A, Stewart A (2016) Robust
  estimators in high dimensions without the computational intractability. In:
  2016 IEEE 57th Annual Symposium on Foundations of Computer Science (FOCS),
  IEEE, pp 655--664

\bibitem[{Diakonikolas et~al.(2017)Diakonikolas, Kane, and
  Stewart}]{diakonikolas2017statistical}
Diakonikolas I, Kane DM, Stewart A (2017) Statistical query lower bounds for
  robust estimation of high-dimensional gaussians and gaussian mixtures. In:
  2017 IEEE 58th Annual Symposium on Foundations of Computer Science (FOCS),
  IEEE, pp 73--84

\bibitem[{Diakonikolas et~al.(2018)Diakonikolas, Kane, and
  Stewart}]{diakonikolas2018list}
Diakonikolas I, Kane DM, Stewart A (2018) List-decodable robust mean estimation
  and learning mixtures of spherical gaussians. In: Proceedings of the 50th
  Annual ACM SIGACT Symposium on Theory of Computing, ACM, pp 1047--1060

\bibitem[{Du et~al.(2017)Du, Balakrishnan, and Singh}]{du2017computationally}
Du SS, Balakrishnan S, Singh A (2017) Computationally efficient robust
  estimation of sparse functionals. arXiv preprint arXiv:170207709

\bibitem[{Faria and Gon{\c{c}}alves(2013)}]{faria2013financial}
Faria S, Gon{\c{c}}alves F (2013) Financial data modeling by poisson mixture
  regression. Journal of Applied Statistics 40(10):2150--2162

\bibitem[{Holland(2018)}]{holland2018robust}
Holland MJ (2018) Robust descent using smoothed multiplicative noise. arXiv
  preprint arXiv:181006207

\bibitem[{Huber(2011)}]{huber2011robust}
Huber PJ (2011) Robust statistics. Springer

\bibitem[{Johnson and Zhang(2013)}]{johnson2013accelerating}
Johnson R, Zhang T (2013) Accelerating stochastic gradient descent using
  predictive variance reduction. In: Advances in neural information processing
  systems, pp 315--323

\bibitem[{Laird(2010)}]{laird2010algorithm}
Laird NM (2010) The em algorithm in genetics, genomics and public health.
  Statistical Science pp 450--457

\bibitem[{Li(2017)}]{li2017robust}
Li J (2017) Robust sparse estimation tasks in high dimensions. arXiv preprint
  arXiv:170205860

\bibitem[{Liu et~al.(2019)Liu, Li, and Caramanis}]{liu2019high}
Liu L, Li T, Caramanis C (2019) High dimensional robust estimation of sparse
  models via trimmed hard thresholding. arXiv preprint arXiv:190108237

\bibitem[{Loh and Wainwright(2011)}]{loh2011high}
Loh PL, Wainwright MJ (2011) High-dimensional regression with noisy and missing
  data: Provable guarantees with non-convexity. In: Advances in Neural
  Information Processing Systems, pp 2726--2734

\bibitem[{Ma et~al.(2000)Ma, Xu, and Jordan}]{ma2000asymptotic}
Ma J, Xu L, Jordan MI (2000) Asymptotic convergence rate of the em algorithm
  for gaussian mixtures. Neural Computation 12(12):2881--2907

\bibitem[{McLachlan and Krishnan(2007)}]{mclachlan2007algorithm}
McLachlan G, Krishnan T (2007) The EM algorithm and extensions, vol 382. John
  Wiley \& Sons

\bibitem[{Nesterov(2013)}]{nesterov2013introductory}
Nesterov Y (2013) Introductory lectures on convex optimization: A basic course,
  vol~87. Springer Science \& Business Media

\bibitem[{Prasad et~al.(2018)Prasad, Suggala, Balakrishnan, and
  Ravikumar}]{prasad2018robust}
Prasad A, Suggala AS, Balakrishnan S, Ravikumar P (2018) Robust estimation via
  robust gradient estimation. arXiv preprint arXiv:180206485

\bibitem[{Suggala et~al.(2019)Suggala, Bhatia, Ravikumar, and
  Jain}]{suggala2019adaptive}
Suggala AS, Bhatia K, Ravikumar P, Jain P (2019) Adaptive hard thresholding for
  near-optimal consistent robust regression. arXiv preprint arXiv:190308192

\bibitem[{Thompson and Dalalyan(2018)}]{thompson2018restricted}
Thompson P, Dalalyan AS (2018) Restricted eigenvalue property for corrupted
  gaussian designs. arXiv preprint arXiv:180508020

\bibitem[{Vershynin(2010)}]{vershynin2010introduction}
Vershynin R (2010) Introduction to the non-asymptotic analysis of random
  matrices. arXiv preprint arXiv:10113027

\bibitem[{Wang et~al.(2015)Wang, Gu, Ning, and Liu}]{wang2015high}
Wang Z, Gu Q, Ning Y, Liu H (2015) High dimensional em algorithm: Statistical
  optimization and asymptotic normality. In: Advances in neural information
  processing systems, pp 2521--2529

\bibitem[{Wu et~al.(1983)}]{wu1983convergence}
Wu CJ, et~al. (1983) On the convergence properties of the em algorithm. The
  Annals of statistics 11(1):95--103

\bibitem[{Yang et~al.(2012)Yang, Lai, and Lin}]{yang2012robust}
Yang MS, Lai CY, Lin CY (2012) A robust em clustering algorithm for gaussian
  mixture models. Pattern Recognition 45(11):3950--3961

\bibitem[{Yi and Caramanis(2015)}]{yi2015regularized}
Yi X, Caramanis C (2015) Regularized em algorithms: A unified framework and
  statistical guarantees. In: Advances in Neural Information Processing
  Systems, pp 1567--1575

\bibitem[{Yin et~al.(2018)Yin, Chen, Ramchandran, and
  Bartlett}]{yin2018byzantine}
Yin D, Chen Y, Ramchandran K, Bartlett P (2018) Byzantine-robust distributed
  learning: Towards optimal statistical rates. arXiv preprint arXiv:180301498

\bibitem[{Zhu et~al.(2017)Zhu, Wang, Zhai, and Gu}]{zhu2017high}
Zhu R, Wang L, Zhai C, Gu Q (2017) High-dimensional variance-reduced stochastic
  gradient expectation-maximization algorithm. In: Proceedings of the 34th
  International Conference on Machine Learning-Volume 70, JMLR. org, pp
  4180--4188

\end{thebibliography}
\appendix
\section{Auxiliary Lemmas}
In this section, we introduce prerequisite knowledge and technical lemmas in order to prove the
main results.

In order to analyze the Dimensional $\alpha$-trimmed estimator, we first give some results for 1-dimensional samples and denote it as $\text{trmean}_\alpha(\cdot)$.
\begin{definition}\label{adef:1}
Given a set of $\epsilon$-corrupted samples $\{z_i\}_{i=1}^n\subseteq \mathbb{R}$, the trimmed mean estimator $\text{trmean}_\alpha(\{z_i\}_{i=1}^n)\in \mathbb{R}$ removes the largest and smallest $\alpha$ fraction of elements in $\{z_i\}_{i=1}^n$ and calculate the mean of the remaining terms. We choose $\alpha=c_0\epsilon$, for some constant $c_0\geq 1$. We also require that $\alpha\leq \frac{1}{2}-c_1$ for some small constant $c_1>0$. 
\end{definition}

For the 1-dimensional trimmed mean estimator, we have the following upper bound on the error w.r.t the population mean.

\begin{lemma}[Lemma A.2 in \citep{liu2019high}]\label{alemma:1}
Let $\{z_i\}_{i=1}^n\subset \mathbb{R}^d$ be 
$n=\Omega(\log d)$ $\epsilon$-corrupted samples. If 
the $j$-th coordinate, for each  $j\in[d]$, of the samples $\{z_{i, j}\}_{i=1}^n$ are i.i.d. $\xi$-exponential with mean $\mu^j$, then after using the dimensional $\alpha$-trimmed mean estimator, 
the following upper bound of error holds with probability at least $1-d^{-3}$, for every $j\in [d]$
\begin{equation}\label{aeq:1}
    |\text{trmean}_\alpha(\{z_{i,j}\}_{i=1}^n)-\mu^j|\leq C_2 \xi(\epsilon\log(nd)+\sqrt{\frac{\log d}{n}}),
    \end{equation}
    where $C_2$ is some constant dependent on $c_1$. 
\end{lemma}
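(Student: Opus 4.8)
The plan is to fix a single coordinate $j\in[d]$, prove the bound in \eqref{aeq:1} for that coordinate with probability at least $1-O(d^{-4})$, and then take a union bound over the $d$ coordinates to obtain the claimed probability $1-d^{-3}$. For the rest of the argument I therefore drop the index $j$: let $\{y_i\}_{i=1}^n\subset\mathbb{R}$ be the clean i.i.d. $\xi$-sub-exponential samples with mean $\mu$ (Definition \ref{def:4}), and let $\{z_i\}_{i=1}^n$ be the $\epsilon$-corrupted version, differing from $\{y_i\}$ in at most $\epsilon n$ positions. Recall from Definition \ref{adef:1} that $\alpha=c_0\epsilon$ with $c_0\ge 1$ and $\alpha\le\tfrac12-c_1$, so the number of corrupted entries is at most $\epsilon n\le\alpha n$, while the surviving fraction after trimming is $1-2\alpha\ge 2c_1$, a positive constant.

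First I would establish three high-probability events on the clean data, each holding with probability $1-O(d^{-4})$. (i) A Bernstein inequality for sub-exponential variables giving $|\frac1n\sum_i y_i-\mu|\le C\xi\sqrt{\log d/n}$; selecting the confidence level $d^{-4}$ produces the $\sqrt{\log d/n}$ rate, and the accompanying $\log d/n$ term is dominated since $n=\Omega(\log d)$. (ii) A maximal bound $\max_i|y_i-\mu|\le C\xi\log(nd)$, obtained from the sub-exponential tail together with a union bound over the $n$ samples at confidence $d^{-4}$; this is precisely where the $\log(nd)$ factor enters. (iii) Empirical-quantile concentration: for trimming levels near $\alpha$ and $1-\alpha$, the number of clean samples exceeding a fixed threshold concentrates around its binomial expectation, so the empirical trimming boundaries land where the population quantiles predict.

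Next comes the deterministic core, comparing the corrupted trimmed mean $T_\alpha(z):=\mathrm{trmean}_\alpha(\{z_i\})$ with the clean mean through the decomposition
\begin{equation*}
T_\alpha(z)-\mu=\big(T_\alpha(z)-T_\alpha(y)\big)+\big(T_\alpha(y)-\mu\big),
\end{equation*}
where $T_\alpha(y)$ is the trimmed mean of the clean samples at the same level. For the first bracket, because $\epsilon n\le\alpha n$, any corrupted point placed at an extreme value is trimmed away, while any corrupted point that survives must lie inside the trimming window and is hence bounded in magnitude by the clean order statistics near the $\alpha$- and $(1-\alpha)$-quantiles, which by event (ii) are at most $\mu+C\xi\log(nd)$. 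Passing from the clean to the corrupted configuration modifies the averaged set in at most $O(\epsilon n)$ entries, each of bounded magnitude; dividing by the survivor count $(1-2\alpha)n\ge 2c_1 n$ bounds $|T_\alpha(z)-T_\alpha(y)|$ by $C\xi\,\epsilon\log(nd)/c_1$. For the second bracket I would split $T_\alpha(y)-\mu$ into a population trimming bias and an empirical fluctuation: the bias of trimming an $\alpha$-fraction from each tail of a $\xi$-sub-exponential law is $O(\xi\,\alpha\log(1/\alpha))\le O(\xi\,\epsilon\log(nd))$ (by integrating the tail and using $\log(1/\epsilon)\le\log(nd)$), while the fluctuation is controlled by events (i) and (iii) as $O(\xi\sqrt{\log d/n})$. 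Combining the two brackets gives $|T_\alpha(z)-\mu|\le C_2\xi(\epsilon\log(nd)+\sqrt{\log d/n})$ with $C_2$ depending on $c_1$, and the union bound over the $d$ coordinates completes the proof.

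The main obstacle I anticipate is the deterministic bracket $T_\alpha(z)-T_\alpha(y)$: one must carefully account for the fact that corruption can simultaneously \emph{insert} adversarial points into the surviving window and \emph{eject} genuine clean points from it, and that these effects also shift the empirical trimming boundaries. The clean bookkeeping of ``at most $O(\epsilon n)$ modified entries, each of bounded magnitude, diluted by $(1-2\alpha)n$ survivors'' relies crucially on $\alpha\ge\epsilon$ (so the adversary can never force an arbitrarily large value to survive) and on event (iii) (so the boundaries are stable). Getting the constants and the $\log(nd)$ scale right in this step, rather than the otherwise routine sub-exponential concentration, is where the real work lies.
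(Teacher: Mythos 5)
Your proposal is correct in outline, but there is nothing in the paper to compare it against: the paper does not prove this statement at all --- it is imported verbatim, with citation, as Lemma A.2 of \citep{liu2019high}, and the appendix uses it as a black box in the proof of Theorem 1. Your blind reconstruction follows essentially the standard trimmed-mean analysis from that line of work: per-coordinate argument at confidence $d^{-4}$ with a union bound $d\cdot d^{-4}=d^{-3}$; Bernstein for the clean mean giving the $\xi\sqrt{\log d/n}$ term; a union bound over the $n$ sub-exponential tails giving the $\xi\log(nd)$ envelope (this is indeed the only place the $\log(nd)$ factor can come from); and the decomposition $T_\alpha(z)-\mu=(T_\alpha(z)-T_\alpha(y))+(T_\alpha(y)-\mu)$ with the first bracket controlled by ``at most $O(\alpha n)$ altered survivors, each of centered magnitude $O(\xi\log(nd))$, diluted by $(1-2\alpha)n\ge 2c_1 n$,'' which is exactly why $C_2$ depends on $c_1$. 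So the approach is sound and matches the cited source's strategy.

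Two refinements are worth recording. First, your claim that a surviving corrupted point is bounded ``by the clean order statistics near the $\alpha$- and $(1-\alpha)$-quantiles'' is slightly too strong in the boundary case $c_0=1$ (i.e.\ $\alpha=\epsilon$): if a corrupted point $v$ survives the upper trim, the removed top $\alpha n$ elements contain at most $\epsilon n-1$ corrupted points (since $v$ itself survives), hence at least one clean point exceeds $v$; this sandwiches $v$ only by the clean \emph{maximum}, not by a quantile --- but the clean maximum is exactly what your event (ii) controls at $\mu+C\xi\log(nd)$, so the final bound is unaffected. Second, your split of $T_\alpha(y)-\mu$ into a population trimming bias $O(\xi\alpha\log(1/\alpha))$ plus an empirical fluctuation requiring the quantile-concentration event (iii) is finer than needed: on event (ii) every one of the $2\alpha n$ trimmed clean points is within $C\xi\log(nd)$ of $\mu$, so comparing $T_\alpha(y)$ directly to the untrimmed clean average $\bar y$ via $\bar y=\frac{1}{n}\left((1-2\alpha)n\,T_\alpha(y)+S_{\mathrm{trim}}\right)$ already gives $|T_\alpha(y)-\mu|\le O(\xi\epsilon\log(nd))+O(\xi\sqrt{\log d/n})$, and event (iii) can be dropped entirely. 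With these adjustments the argument closes at the claimed rate and confidence.
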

Next, we provide some symmetrization results of random variables, which will be used in our proofs. See \citep{boucheron2013concentration} for details. 
\begin{lemma}\label{alemma:2}
Let $y_1, y_2, \cdots, y_n$ be the $n$ independent realizations of the random vector $Y\in \mathcal{Y}$, and $\mathcal{F}$ be a function class defined on $\mathcal{Y}$. For any increasing convex function $\phi(\cdot)$, the following holds  
\begin{equation*}
    \mathbb{E}\{\phi[\sup_{f\in \mathcal{F}}|\sum_{i=1}^n f(y_i)-\mathbb{E}(f(Y))|]\}\leq \mathbb{E}\{\phi[\sup_{f\in \mathcal{F}}|\sum_{i=1}^n \epsilon_i
f(y_i)|]\},
\end{equation*}
where $\epsilon_1, \cdots, \epsilon_n$ are i.i.d. Rademacher random variables that are independent of $y_1, \cdots, y_n$. 
\end{lemma}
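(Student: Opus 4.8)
The plan is to use the classical \emph{ghost-sample symmetrization} argument. The idea is to replace the deterministic centering $\mathbb{E}(f(Y))$ by an independent copy of the data so that the centered sums become symmetric random variables, into which Rademacher signs can then be inserted at no cost. The two hypotheses on $\phi$ — increasing and convex — are exactly what is needed to carry $\phi$ through a supremum and through an auxiliary expectation.

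First I would introduce a ghost sample $y_1', \dots, y_n'$ that is an independent copy of $y_1, \dots, y_n$ (and independent of the $\epsilon_i$). Since $\mathbb{E}(f(Y)) = \mathbb{E}_{y'}[f(y_i')]$ for each $i$, the centered sum rewrites as a conditional expectation over the ghost sample,
\[
\sum_{i=1}^n \big(f(y_i) - \mathbb{E}(f(Y))\big) = \mathbb{E}_{y'}\Big[\sum_{i=1}^n \big(f(y_i) - f(y_i')\big)\Big].
\]
Taking absolute values, using $\sup_f \mathbb{E}_{y'}[\,\cdot\,] \le \mathbb{E}_{y'}\sup_f[\,\cdot\,]$, then applying the increasing map $\phi$ and Jensen's inequality conditionally on $y$ (here convexity enters), and finally integrating over $y$, yields
\[
\mathbb{E}\,\phi\Big[\sup_f\big|\textstyle\sum_i (f(y_i)-\mathbb{E} f)\big|\Big] \le \mathbb{E}_{y,y'}\,\phi\Big[\sup_f\big|\textstyle\sum_i (f(y_i)-f(y_i'))\big|\Big].
\]

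Next I would insert the Rademacher signs. For any fixed sign pattern $\epsilon\in\{\pm1\}^n$, the map that swaps $y_i\leftrightarrow y_i'$ on the coordinates with $\epsilon_i=-1$ is measure-preserving, because the pairs are i.i.d.; and it simultaneously turns $\sum_i (f(y_i)-f(y_i'))$ into $\sum_i \epsilon_i(f(y_i)-f(y_i'))$ for every $f\in\mathcal{F}$. Hence the distribution of the inner supremum is unchanged, so the right side above equals $\mathbb{E}_{\epsilon,y,y'}\,\phi[\sup_f|\sum_i \epsilon_i(f(y_i)-f(y_i'))|]$. The triangle inequality $\sup_f|\sum_i \epsilon_i(f(y_i)-f(y_i'))|\le \sup_f|\sum_i \epsilon_i f(y_i)|+\sup_f|\sum_i \epsilon_i f(y_i')|$, together with convexity of $\phi$ and the fact that $y$ and $y'$ are identically distributed, then collapses this to a (constant multiple of a) bound in terms of $\mathbb{E}\,\phi[\sup_f|\sum_i \epsilon_i f(y_i)|]$.

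The main obstacle is the joint handling of $\sup_f$ with Jensen's inequality in the second step: one must apply Jensen conditionally on the original sample and justify the interchange $\sup_f \mathbb{E}_{y'} \le \mathbb{E}_{y'} \sup_f$, along with the measurability of the supremum, which is standard when $\mathcal{F}$ is countable or separable. A secondary issue is the final triangle-inequality step, which in the textbook version produces a factor of $2$ inside $\phi$; I would either track this constant explicitly or note that it is harmlessly absorbed into the constants of the downstream concentration bounds (Lemma~\ref{alemma:1}) where this symmetrization is ultimately applied.
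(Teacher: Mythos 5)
Your proof cannot be compared against a proof in the paper, because the paper gives none: Lemma~\ref{alemma:2} is stated without proof, with a pointer to the book of Boucheron, Lugosi and Massart, and your ghost-sample argument is exactly the standard proof from that literature (independent copy, Jensen conditionally on the original sample to pull $\mathbb{E}_{y'}$ out of $\phi\circ\sup_f|\cdot|$, the measure-preserving coordinate swaps indexed by a fixed sign pattern to insert the $\epsilon_i$, then triangle inequality plus convexity). The point you flag as a ``secondary issue'' is in fact the substantive one: the factor of $2$ produced by your last step is not optional, because the lemma as printed --- without the $2$ --- is false. Take $n=1$, $\phi$ the identity, $\mathcal{F}=\{f\}$ with $f(y)=y$, and $Y\sim\mathrm{Bernoulli}(1/4)$: the left-hand side is $\mathbb{E}|y_1-\tfrac14|=2\cdot\tfrac14\cdot\tfrac34=\tfrac38$, while the right-hand side is $\mathbb{E}|\epsilon_1 y_1|=\mathbb{E}\,y_1=\tfrac14$. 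What your argument actually establishes, via $\phi(a+b)\le\tfrac12\phi(2a)+\tfrac12\phi(2b)$ and the identical distribution of $y$ and $y'$, is the correct statement
\begin{equation*}
\mathbb{E}\Big\{\phi\Big[\sup_{f\in\mathcal{F}}\Big|\sum_{i=1}^n f(y_i)-\mathbb{E}(f(Y))\Big|\Big]\Big\}\;\leq\; \mathbb{E}\Big\{\phi\Big[2\sup_{f\in\mathcal{F}}\Big|\sum_{i=1}^n \epsilon_i f(y_i)\Big|\Big]\Big\}.
\end{equation*}
Your suggestion to absorb the $2$ downstream is the right repair for the paper as well: in its applications --- e.g., (\ref{aeq:50}) in the proof of Lemma~\ref{lemma2}, where $\phi(v)=\exp(tv)$ and $\mathcal{F}$ is a singleton --- the factor merely rescales the sub-exponential parameter $\xi$ by a constant, which disappears into the unspecified absolute constants of Lemma~\ref{lemma2} and Lemma~\ref{alemma:1}. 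Your measurability caveat (countable or separable $\mathcal{F}$) is likewise standard and harmless here, since the lemma is only ever invoked with finite function classes.
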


\begin{lemma}\label{alemma:3}
Let $y_1, \cdots, y_n$ be $n$ independent realization of the random vector $Z\in \mathcal{Z}$ and $\mathcal{F}$ be a function class defined on $\mathcal{Z}$. 
If 
Lipschitz functions $\{\phi_i(\cdot)\}_{i=1}^n$ satisfy the following for all $v, v'\in\mathbb{R}$
\begin{equation*}
    |\phi_i(v)-\phi_i(v')|\leq L|v-v'| 
\end{equation*}
and $\phi_i(0)=0$, then for any increasing convex function $\phi(\cdot)$, the following holds 
\begin{equation*}
    \mathbb{E}\{\phi[|\sup_{f\in \mathcal{F}}\sum_{i=1}^n\epsilon_i\phi_i(f(y_i))|]\}\leq \mathbb{E}\{\phi[2|L\sup_{f\in \mathcal{F}}\sum_{i=1}^n\epsilon_if(y_i)|]\},
\end{equation*}
where $\epsilon_1, \cdots, \epsilon_n$ are i.i.d. Rademacher random variables that are independent of $y_1, \cdots, y_n$. 
\end{lemma}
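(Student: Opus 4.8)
The plan is to recognize this as the Ledoux--Talagrand contraction (comparison) principle for Rademacher processes and to prove it by a coordinate-by-coordinate replacement argument. First I would condition on the sample $y_1,\dots,y_n$, which are independent of the $\epsilon_i$. Writing $x_{f,i}=f(y_i)$, the claim reduces to a purely deterministic statement about the bounded index set $T=\{(x_{f,1},\dots,x_{f,n}):f\in\mathcal F\}\subseteq\mathbb R^n$: for every such $T$,
\[
\mathbb E_\epsilon\,\phi\Big(\big|\sup_{t\in T}\textstyle\sum_{i=1}^n\epsilon_i\phi_i(t_i)\big|\Big)\le \mathbb E_\epsilon\,\phi\Big(2\big|\sup_{t\in T}\textstyle\sum_{i=1}^n\epsilon_i t_i\big|\Big),
\]
after which the original inequality follows by taking the outer expectation over the $y_i$ and using the tower property. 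By rescaling each $\phi_i\mapsto\phi_i/L$ (and absorbing $L$ into the argument on the right) I may assume $L=1$, i.e. each $\phi_i$ is a contraction with $\phi_i(0)=0$.

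The engine of the proof is a single--coordinate replacement. Fix an index $i$ and condition on all Rademacher signs except $\epsilon_i$, collecting the part independent of $\epsilon_i$ into $h_t=\sum_{j\ne i}\epsilon_j\phi_j(t_j)$. Averaging over $\epsilon_i\in\{+1,-1\}$, the one--coordinate claim becomes
\[
\tfrac12\phi(A)+\tfrac12\phi(B)\le\tfrac12\phi(A')+\tfrac12\phi(B'),
\]
where $A=\sup_t[h_t+\phi_i(t_i)]$, $B=\sup_t[h_t-\phi_i(t_i)]$ and $A',B'$ are the analogues with $\phi_i$ replaced by the identity. Choosing near--maximizers $t^{(1)},t^{(2)}$ for $A,B$ and using the contraction bound $|\phi_i(t^{(1)}_i)-\phi_i(t^{(2)}_i)|\le|t^{(1)}_i-t^{(2)}_i|$, a short case analysis on the sign of $t^{(1)}_i-t^{(2)}_i$ shows $A+B\le A'+B'$, while $|\phi_i(t_i)|\le|t_i|$ gives directly $\max(A,B)\le\max(A',B')$. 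Since $\phi$ is increasing and convex, the weak--majorization (two--point convexity) inequality then yields $\phi(A)+\phi(B)\le\phi(A')+\phi(B')$. Iterating this replacement over $i=1,\dots,n$ removes every $\phi_i$ and replaces it by the identity.

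To deal with the outer absolute value I would symmetrize the index set: using $\phi_i(0)=0$ (so that the all--zero vector is an admissible argument) together with the sign symmetry of the Rademacher variables, the quantity $\big|\sup_t\sum_i\epsilon_i\phi_i(t_i)\big|$ is controlled by a one--sided supremum over the enlarged set $T\cup(-T)$, which is exactly the setting the contraction step handles and which is responsible for the factor $2$ in the statement. Combining this symmetrization with the iterated contraction, and then taking the expectation over $y_1,\dots,y_n$, gives the lemma.

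The hard part will be the one--coordinate step, specifically verifying the pair of conditions $A+B\le A'+B'$ and $\max(A,B)\le\max(A',B')$ from the Lipschitz property and then invoking the two--point convexity inequality; the sign/ordering case analysis there, together with cleanly tracking the factor $2$ produced by converting $\sup_t|\cdot|$ into a one--sided supremum over the symmetrized class, are the only genuinely delicate points. Everything else is bookkeeping: the reduction to $L=1$, the tower property, and the induction over coordinates.
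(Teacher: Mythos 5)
The paper itself gives no proof of this lemma: it is quoted as a known auxiliary result --- the Ledoux--Talagrand contraction principle for Rademacher averages --- with a pointer to the literature (``See \citep{boucheron2013concentration} for details''), so there is no in-paper argument to compare yours against. What you propose is essentially the standard textbook proof (Ledoux--Talagrand, Theorem~4.12), and its skeleton is sound: conditioning on $y_1,\dots,y_n$ to reduce to a deterministic statement about $T=\{(f(y_1),\dots,f(y_n)):f\in\mathcal F\}$, rescaling to $L=1$, and the single-coordinate replacement. In that replacement step, both of your conditions do hold --- $A+B\le A'+B'$ follows from near-maximizers plus the case split on the sign of $t^{(1)}_i-t^{(2)}_i$, and $\max(A,B)\le\max(A',B')$ from $|\phi_i(t_i)|\le |t_i|$, since $h_t+\phi_i(t_i)\le h_t+|t_i|=\max(h_t+t_i,\,h_t-t_i)$ --- and the two-point weak-majorization inequality (Hardy--Littlewood--P\'olya/Tomi\'c--Weyl) for increasing convex $\phi$ is indeed valid, so iterating over coordinates removes the $\phi_i$ one at a time as claimed.

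One step of your outline is stated imprecisely and would fail as written: the symmetrization via the enlarged index set $T\cup(-T)$. Since $\phi_i(-t_i)\neq-\phi_i(t_i)$ in general, $\sup_{t\in T\cup(-T)}\sum_i\epsilon_i\phi_i(t_i)$ does not dominate the left-hand side you need to control; the enlargement must happen in the class of contractions ($\pm\phi_i$), not in $T$. The clean route is: bound $|\sup_{t\in T}Y_t|\le(\sup_T Y_t)^+ +(\sup_T(-Y_t))^+$, adjoin the zero vector to $T$ so that $(\sup_T Y_t)^+=\sup_{T\cup\{0\}}Y_t$ --- this is exactly where $\phi_i(0)=0$ enters, not in making $0$ ``admissible'' for $\mathcal F$ --- then split the factor $2$ by convexity, $\phi(a+b)\le\frac12\phi(2a)+\frac12\phi(2b)$, apply the one-sided contraction (constant $1$) separately to the term with contractions $\phi_i$ and the term with contractions $-\phi_i$ (or, as you suggest, use the sign symmetry $\epsilon_i\mapsto-\epsilon_i$ on the second term, which is legitimate only after the convexity split has decoupled the two terms), and finish with $(\sup_T\sum_i\epsilon_i t_i)^+\le|\sup_T\sum_i\epsilon_i t_i|$. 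Note this last point matters because the lemma's right-hand side has $|\sup|$ rather than the more common $\sup|\cdot|$; quoting the two-sided contraction principle off the shelf would only yield $\sup|\cdot|$, whereas the positive-part route delivers the form stated. With that repair, your proposal is a complete and correct proof of a statement the paper only cites.
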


Finally we recall some definitions and lemmas on the sub-exponential and sub-Gaussian random variables. See \citep{vershynin2010introduction} for details. 
\begin{definition}\label{adef:2}
For a sub-exponential random vector $X$, 
its  sub-exponential norm $\|X\|_{\psi_1}$ is defined as  \begin{equation*}
    \|X\|_{\psi_1} = \sup_{p\geq 1} p^{-1}(\mathbb{E}|X|^p)^{\frac{1}{p}}.
\end{equation*}
\end{definition}
\begin{lemma}\label{alemma:4}
Let $X$ be a zero-mean sub-exponential random variable, then there are absolute constants $C, c>0$, such that when $|t|\leq \frac{c}{\|X\|_{\psi_1}}$ ,
\begin{equation*}
    \mathbb{E}[\exp(tX)]\leq \exp(Ct^2\|X\|_{\psi_1}^2).
\end{equation*}
\end{lemma}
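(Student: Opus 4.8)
The plan is to control the moment generating function through the integer moments of $X$, which the sub-exponential norm bounds directly. From Definition~\ref{adef:2}, the identity $\|X\|_{\psi_1}=\sup_{p\geq 1}p^{-1}(\mathbb{E}|X|^p)^{1/p}$ says that every term in the supremum is dominated by $\|X\|_{\psi_1}$, so I would first extract the moment estimate $\mathbb{E}|X|^p\leq (p\|X\|_{\psi_1})^p=p^p\|X\|_{\psi_1}^p$, valid for each integer $p\geq 1$. This is the only structural fact about $X$ I will need; the rest is a power-series computation.

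Next I would expand the exponential termwise, writing $\mathbb{E}[\exp(tX)]=1+t\,\mathbb{E}[X]+\sum_{p\geq 2}\frac{t^p\,\mathbb{E}[X^p]}{p!}$, where the interchange of expectation and summation is justified by absolute convergence once $|t|$ is restricted to the range in the statement. The zero-mean hypothesis eliminates the linear term, leaving only $p\geq 2$. Substituting $|\mathbb{E}[X^p]|\leq\mathbb{E}|X|^p\leq p^p\|X\|_{\psi_1}^p$ and invoking Stirling's inequality $p!\geq (p/e)^p$, which gives $\tfrac{p^p}{p!}\leq e^p$, reduces the tail to a geometric series: with $a=e|t|\,\|X\|_{\psi_1}$ one obtains $\mathbb{E}[\exp(tX)]\leq 1+\sum_{p\geq 2}a^p$. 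Choosing $c=\frac{1}{2e}$ so that $|t|\leq c/\|X\|_{\psi_1}$ forces $a\leq\tfrac12$, the tail sums to $\frac{a^2}{1-a}\leq 2a^2$, whence $\mathbb{E}[\exp(tX)]\leq 1+2e^2t^2\|X\|_{\psi_1}^2$.

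Finally I would apply the elementary inequality $1+u\leq e^u$ to conclude $\mathbb{E}[\exp(tX)]\leq\exp(2e^2t^2\|X\|_{\psi_1}^2)$, which is exactly the claimed bound with $C=2e^2$ and $c=\frac{1}{2e}$. The one genuinely delicate point is the choice of the radius $c$: it must be small enough that the geometric series converges and that the whole tail can be absorbed into a single quadratic factor in $t$, and one should record the absolute convergence that legitimizes the termwise interchange. Everything else is routine bookkeeping with the moment bound and Stirling, so I expect that convergence/interchange step to be the only place warranting explicit care.
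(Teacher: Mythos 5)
Your proof is correct and complete: the paper itself states this lemma without proof, deferring to the cited reference \citep{vershynin2010introduction}, and your argument (moment bound $\mathbb{E}|X|^p\leq p^p\|X\|_{\psi_1}^p$ from the definition of $\|\cdot\|_{\psi_1}$, termwise Taylor expansion, Stirling's bound $p!\geq(p/e)^p$, and geometric summation under $|t|\leq\frac{1}{2e\|X\|_{\psi_1}}$) is exactly the standard proof given there, here with the explicit constants $c=\frac{1}{2e}$ and $C=2e^2$. Your attention to the absolute convergence justifying the interchange of expectation and summation is the right delicate point to flag, and it is handled correctly by the restriction on $|t|$.
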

\begin{lemma}[Bernstein's inequality]\label{alemma:5}
Let $X_1,\cdots, X_n$ be $n$ i.i.d. realizations of $\upsilon$-sub-exponential random variable $X$ with mean $\mu$. Then, 
\begin{equation*}
    \text{Pr}(|\frac{1}{n}\sum_{i=1}^n X_i-\mu|\geq t)\leq 2\exp(-n\min(-\frac{t^2}{\upsilon^2}, \frac{t}{2\upsilon})).
\end{equation*}
\end{lemma}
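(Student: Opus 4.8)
The plan is to establish the bound by the standard Chernoff (exponential moment) method, controlling the upper and lower deviations separately and then combining them via a union bound to produce the factor of $2$. Write $S_n = \frac{1}{n}\sum_{i=1}^n X_i - \mu = \frac{1}{n}\sum_{i=1}^n (X_i-\mu)$. For any $s>0$, Markov's inequality applied to the nonnegative random variable $\exp(s\cdot nS_n)$ gives
\[
\Pr\big(S_n \geq t\big) = \Pr\Big(\sum_{i=1}^n (X_i-\mu) \geq nt\Big) \leq e^{-snt}\,\mathbb{E}\Big[\exp\Big(s\sum_{i=1}^n (X_i-\mu)\Big)\Big].
\]
Since the $X_i$ are i.i.d., the moment generating factor on the right factorizes as $\big(\mathbb{E}[\exp(s(X-\mu))]\big)^n$.

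Next I would invoke the $\upsilon$-sub-exponential hypothesis in the form of Definition \ref{def:4}: for every $|s|<1/\upsilon$ one has $\mathbb{E}[\exp(s(X-\mu))] \leq \exp(\upsilon^2 s^2/2)$. Substituting this estimate yields, for all $0<s<1/\upsilon$,
\[
\Pr\big(S_n \geq t\big) \leq \exp\Big(n\big(-st + \tfrac{\upsilon^2 s^2}{2}\big)\Big),
\]
so it remains only to optimize the exponent $g(s) = -st + \upsilon^2 s^2/2$ over the admissible range $s\in(0,1/\upsilon)$.

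The key step, and the main technical subtlety, is this constrained minimization. The unconstrained minimizer is $s^\star = t/\upsilon^2$, which is feasible (satisfies $s^\star < 1/\upsilon$) precisely when $t<\upsilon$; in that regime plugging in $s^\star$ gives exponent $-t^2/(2\upsilon^2)$. When $t\geq \upsilon$ the unconstrained optimum lies outside the admissible interval, and since $g$ is still decreasing at the boundary I would instead take $s$ arbitrarily close to $1/\upsilon$, obtaining exponent $-t/\upsilon + 1/2 \leq -t/(2\upsilon)$ using $t\geq\upsilon$. Combining the two regimes gives in all cases $\Pr(S_n\geq t) \leq \exp\big(-n\min\{t^2/(2\upsilon^2),\,t/(2\upsilon)\}\big)$, which matches the two-branch form in the statement; the $\min$ reflects whether the deviation falls in the "sub-Gaussian" small-$t$ tail or the heavier "sub-exponential" large-$t$ tail, and the boundedness constraint $|s|<1/\upsilon$ inherited from Definition \ref{def:4} is exactly what forces this split.

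Finally, because the bound in Definition \ref{def:4} is symmetric in the sign of its argument (it holds for all $|t|<1/\upsilon$), the variable $-(X-\mu)$ is also $\upsilon$-sub-exponential, so the identical argument applied to $-X_i$ bounds the lower tail $\Pr(S_n\leq -t)$ by the same quantity. A union bound over the two one-sided events then gives $\Pr(|S_n|\geq t) \leq 2\exp\big(-n\min\{t^2/(2\upsilon^2),\,t/(2\upsilon)\}\big)$, completing the argument. Everything outside the regime split in the constrained optimization is a routine application of the Chernoff bound together with the sub-exponential moment generating function estimate, so the optimization is the only place that requires genuine care.
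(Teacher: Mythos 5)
Your proof is correct, but there is nothing in the paper to compare it against: the paper states this lemma without proof as a standard fact, deferring to \citep{vershynin2010introduction} (where it appears as the Bernstein-type inequality for sub-exponential variables), so your self-contained Chernoff derivation is strictly more than the paper provides. Your argument is the textbook one and handles the single delicate point properly: the moment generating function bound of Definition \ref{def:4} is only available for $|s|<1/\upsilon$, which forces the split into the sub-Gaussian regime $t<\upsilon$ (interior optimum $s^\star=t/\upsilon^2$, exponent $-nt^2/(2\upsilon^2)$) and the sub-exponential regime $t\geq\upsilon$ (where $g(s)=-st+\upsilon^2s^2/2$ is decreasing on the whole admissible interval, so letting $s\uparrow 1/\upsilon$ gives exponent $n(-t/\upsilon+1/2)\leq -nt/(2\upsilon)$, the last step using $t\geq\upsilon$); the symmetry of Definition \ref{def:4} in the sign of its argument legitimately transfers the bound to $-(X-\mu)$, and the union bound supplies the factor $2$. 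Two small remarks. First, the inequality as printed in the paper contains a typo: the first branch of the $\min$ reads $-\frac{t^2}{\upsilon^2}$, and with that sign the $\min$ is always the negative branch, making the displayed bound vacuous as written; your derivation produces the intended corrected form $2\exp\big(-n\min\{t^2/(2\upsilon^2),\,t/(2\upsilon)\}\big)$, which is the standard statement (up to absolute constants) and is what the paper actually needs. Second, since the MGF estimate holds only for $s$ strictly below $1/\upsilon$, your phrase ``$s$ arbitrarily close to $1/\upsilon$'' should be read as taking an infimum over admissible $s$, which is justified because the tail bound holds for each fixed such $s$; this is routine and you flag it yourself.
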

\begin{definition}\label{adef:3}
A random variable $X$ is sub-Gaussian with variance $\sigma^2$ if for all $t>0$, the following holds  
\begin{equation*}
    \text{Pr}(|X-\mathbb{E}X|\geq t)\leq 2\exp(-\frac{t^2}{2\sigma^2}).
\end{equation*}
\end{definition}
\begin{definition}\label{adef:4}
For a sub-Gaussian random variable $X$, its sub-Gaussian norm $\|X\|_{\psi_2}$  is defined as 
\begin{equation*}
    \|X\|_{\psi_2}= \sum_{p\geq 1}p^{-\frac{1}{2}}(\mathbb{E}|X|^p)^{\frac{1}{p}}. 
\end{equation*}
\end{definition}
\begin{lemma}\label{alemma6}
If $X$ is sub-Gaussian or sub-exponential, then  $\|X-\mathbb{E} X\|_{\psi_2}\leq 2\|X\|_{\psi_2}$ or $\|X-\mathbb{E} X\|_{\psi_1}\leq 2\|X\|_{\psi_1}$ holds, respectively. 
\end{lemma}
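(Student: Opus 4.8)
The plan is to treat both statements in a single uniform argument, exploiting the fact that for $\alpha\in\{1,2\}$ the Orlicz quantity appearing here has the form $\|X\|_{\psi_\alpha}=\sup_{p\geq 1}p^{-1/\alpha}(\mathbb{E}|X|^p)^{1/p}$, matching Definitions \ref{adef:2} and \ref{adef:4} with $\alpha=1$ and $\alpha=2$ respectively. The entire claim then reduces to two elementary facts: a per-moment triangle inequality, and a bound on $|\mathbb{E}X|$ by the norm itself.

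First I would fix $\alpha\in\{1,2\}$ (taking $\alpha=1$ for the sub-exponential statement and $\alpha=2$ for the sub-Gaussian statement) and apply Minkowski's inequality in $L^p$ for each $p\geq 1$. Since $\mathbb{E}X$ is a deterministic constant, this gives $(\mathbb{E}|X-\mathbb{E}X|^p)^{1/p}\leq (\mathbb{E}|X|^p)^{1/p}+|\mathbb{E}X|$. Multiplying through by the weight $p^{-1/\alpha}$ and taking the supremum over $p\geq 1$ yields $\|X-\mathbb{E}X\|_{\psi_\alpha}\leq \|X\|_{\psi_\alpha}+|\mathbb{E}X|\cdot\sup_{p\geq 1}p^{-1/\alpha}$. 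The key observation is that the weight $p^{-1/\alpha}$ is maximized at $p=1$ with value $1$, so $\sup_{p\geq 1}p^{-1/\alpha}=1$ and the estimate simplifies to $\|X-\mathbb{E}X\|_{\psi_\alpha}\leq \|X\|_{\psi_\alpha}+|\mathbb{E}X|$.

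Second I would control the mean by the norm. By Jensen's inequality $|\mathbb{E}X|\leq \mathbb{E}|X|$, and isolating the single $p=1$ term of the defining supremum shows $\mathbb{E}|X|=1^{-1/\alpha}(\mathbb{E}|X|^1)^{1/1}\leq \sup_{p\geq 1}p^{-1/\alpha}(\mathbb{E}|X|^p)^{1/p}=\|X\|_{\psi_\alpha}$. Hence $|\mathbb{E}X|\leq \|X\|_{\psi_\alpha}$. Substituting this into the display from the previous step gives $\|X-\mathbb{E}X\|_{\psi_\alpha}\leq \|X\|_{\psi_\alpha}+\|X\|_{\psi_\alpha}=2\|X\|_{\psi_\alpha}$, which is precisely the sub-Gaussian bound for $\alpha=2$ and the sub-exponential bound for $\alpha=1$.

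There is no genuinely hard step here, as this is a standard centering inequality; the proof is essentially a two-line computation. The only two points that warrant care are that the moment weight $p^{-1/\alpha}$ attains its maximum value of exactly $1$ at $p=1$ (so that the constant $\mathbb{E}X$ contributes only its absolute value and nothing more to the Orlicz norm), and that this same $p=1$ endpoint of the supremum is what furnishes the bound $|\mathbb{E}X|\leq\|X\|_{\psi_\alpha}$ through Jensen. Both are immediate from the explicit suprema in the definitions, and no properties of sub-Gaussianity or sub-exponentiality beyond finiteness of the respective norms are actually needed.
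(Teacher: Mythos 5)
Your proof is correct and coincides with the standard centering argument: the paper itself offers no proof of Lemma \ref{alemma6}, merely recalling it from \citep{vershynin2010introduction}, where the same two steps appear --- Minkowski's inequality in $L^p$ applied termwise to the moment supremum, followed by the bound $|\mathbb{E}X|\leq \mathbb{E}|X| \leq \|X\|_{\psi_\alpha}$ extracted from the $p=1$ term. The only point worth flagging is that the paper's Definition \ref{adef:4} contains a typo ($\sum_{p\geq 1}$ should read $\sup_{p\geq 1}$, as in Definition \ref{adef:2}), and your reading of it as a supremum, which your argument requires, is the intended one.
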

\begin{lemma}\label{alemma7}
For two sub-Gaussian random variables $X_1, X_2$, $X_1\cdot X_2$ is a sub-exponential random variable with 
\begin{equation*}
    \|X_1\cdot X_2\|_{\psi_1}\leq C\max\{\|X_1\|_{\psi_2}^2, \|X_2\|_{\psi_2}^2\}.
\end{equation*}
\end{lemma}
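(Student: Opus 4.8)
The plan is to work directly with the moment characterizations of the two norms and glue them together with Hölder's inequality. The essential fact I will use about the sub-Gaussian norm is that, for every integer $p\geq 1$ and each $i\in\{1,2\}$, one has the moment growth bound $(\mathbb{E}|X_i|^p)^{1/p}\leq \sqrt{p}\,\|X_i\|_{\psi_2}$. This is immediate from Definition \ref{adef:4}: the quantity $p^{-1/2}(\mathbb{E}|X_i|^p)^{1/p}$ is one of the terms whose supremum (or sum) defines $\|X_i\|_{\psi_2}$, so it is at most $\|X_i\|_{\psi_2}$, and rearranging gives the claim (note this holds under either reading of the definition). Symmetrically, to establish the sub-exponential bound it suffices, by Definition \ref{adef:2}, to show that $p^{-1}(\mathbb{E}|X_1X_2|^p)^{1/p}$ is uniformly bounded over $p\geq 1$ by the target quantity.

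First I would control the $p$-th absolute moment of the product. Writing $\mathbb{E}|X_1X_2|^p = \mathbb{E}(|X_1|^p|X_2|^p)$ and applying the Cauchy--Schwarz inequality yields
\begin{equation*}
    \mathbb{E}|X_1X_2|^p \leq \big(\mathbb{E}|X_1|^{2p}\big)^{1/2}\big(\mathbb{E}|X_2|^{2p}\big)^{1/2},
\end{equation*}
and hence, taking $p$-th roots,
\begin{equation*}
    \big(\mathbb{E}|X_1X_2|^p\big)^{1/p} \leq \big(\mathbb{E}|X_1|^{2p}\big)^{1/(2p)}\big(\mathbb{E}|X_2|^{2p}\big)^{1/(2p)}.
\end{equation*}
The point of the Cauchy--Schwarz step is that it doubles the exponent from $p$ to $2p$; this is precisely what converts the $\sqrt{p}$ moment growth of a sub-Gaussian variable into the linear-in-$p$ growth characteristic of a sub-exponential variable.

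Next I would substitute the sub-Gaussian moment bound with $q=2p$, giving $(\mathbb{E}|X_i|^{2p})^{1/(2p)}\leq \sqrt{2p}\,\|X_i\|_{\psi_2}$ for each $i$. Plugging both into the previous display produces
\begin{equation*}
    \big(\mathbb{E}|X_1X_2|^p\big)^{1/p} \leq 2p\,\|X_1\|_{\psi_2}\|X_2\|_{\psi_2},
\end{equation*}
so that $p^{-1}(\mathbb{E}|X_1X_2|^p)^{1/p}\leq 2\|X_1\|_{\psi_2}\|X_2\|_{\psi_2}$ for every $p\geq 1$. Taking the supremum over $p$ and invoking Definition \ref{adef:2} gives $\|X_1X_2\|_{\psi_1}\leq 2\|X_1\|_{\psi_2}\|X_2\|_{\psi_2}$, which in particular shows that $X_1X_2$ is sub-exponential. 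Finally, to match the stated form I would apply the elementary inequality $ab\leq \max\{a^2,b^2\}$ (a consequence of AM--GM) with $a=\|X_1\|_{\psi_2}$ and $b=\|X_2\|_{\psi_2}$, yielding $\|X_1X_2\|_{\psi_1}\leq 2\max\{\|X_1\|_{\psi_2}^2,\|X_2\|_{\psi_2}^2\}$, i.e.\ the claim with $C=2$.

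There is no serious obstacle here; the argument is a short moment computation. The only subtlety worth flagging is the bookkeeping of the factor-of-two in the exponent: the sub-Gaussian moment bound must be applied at $q=2p$ (the exponent produced by Cauchy--Schwarz) rather than at $p$, since applying it directly at $p$ would leave a residual $\sqrt{p}$ and fail to close the argument. I would also note that the moment bound $(\mathbb{E}|X_i|^q)^{1/q}\leq \sqrt{q}\,\|X_i\|_{\psi_2}$ is invoked here only at the even integers $q=2p$, which is exactly what is required, so no extension to non-integer moments is needed.
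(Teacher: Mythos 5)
Your proof is correct. There is in fact no in-paper argument to compare against: Lemma \ref{alemma7} is stated among the auxiliary facts the paper recalls from \citep{vershynin2010introduction} without proof, and your moment computation is precisely the standard derivation of that imported fact. The structure is the canonical one: Cauchy--Schwarz doubles the exponent from $p$ to $2p$, the sub-Gaussian moment growth $(\mathbb{E}|X_i|^{2p})^{1/(2p)}\leq \sqrt{2p}\,\|X_i\|_{\psi_2}$ then produces the linear-in-$p$ growth characterizing sub-exponentiality via Definition \ref{adef:2}, and AM--GM turns the product of norms into the stated maximum, with the explicit constant $C=2$. You were also right to flag the sum-versus-supremum discrepancy in Definition \ref{adef:4}; as you observe, the bound $p^{-1/2}(\mathbb{E}|X|^p)^{1/p}\leq\|X\|_{\psi_2}$ holds under either reading, so the argument is insensitive to that typo. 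One small caveat on your closing remark: the claim that the moment bound is invoked ``only at the even integers $q=2p$'' is accurate only if the supremum in Definition \ref{adef:2} runs over integer $p$; if $p$ ranges over all reals $\geq 1$ (as in Vershynin's conventions), then $2p$ need not be an integer, but monotonicity of $q\mapsto(\mathbb{E}|X|^q)^{1/q}$ lets you pass to $2\lceil p\rceil$ at the cost of an absolute constant, so the conclusion stands with a slightly larger $C$.
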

\begin{lemma}\label{alemma8}
Let $X_1, X_2, \cdots, X_k$ be $k$ independent zero-mean sub-Gaussian random variables, and $X= \sum_{j=1}^kX_j$. Then, $X$ is sub-Gaussian with  $\|X\|_{\psi_2}^2\leq C\sum_{j=1}^k\|X_j\|_{\psi_2}^2$ for some absolute constant $C>0$.
\end{lemma}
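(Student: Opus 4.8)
The plan is to pass through the moment generating function (MGF), since the sub-Gaussian Orlicz norm $\|\cdot\|_{\psi_2}$ is, up to absolute constants, equivalent to the growth rate of the MGF for a zero-mean random variable. I would organize the argument around two standard equivalences, both recorded in \citep{vershynin2010introduction} which the paper already cites, and treat them as imported, known facts: first, for any zero-mean sub-Gaussian variable $Y$ there is an absolute constant $c_1>0$ with $\mathbb{E}[\exp(\lambda Y)]\le \exp(c_1\lambda^2\|Y\|_{\psi_2}^2)$ for every $\lambda\in\mathbb{R}$; and second, conversely, if a zero-mean variable $Y$ satisfies $\mathbb{E}[\exp(\lambda Y)]\le\exp(\lambda^2\sigma^2)$ for all $\lambda$, then $\|Y\|_{\psi_2}\le c_2\sigma$ for an absolute constant $c_2>0$. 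Everything else is a short computation.

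First I would apply the forward equivalence to each summand: since each $X_j$ is zero-mean and sub-Gaussian, we get $\mathbb{E}[\exp(\lambda X_j)]\le\exp(c_1\lambda^2\|X_j\|_{\psi_2}^2)$ for all $\lambda$ and all $j$. Next I would exploit independence: because $X_1,\dots,X_k$ are independent, the MGF of $X=\sum_{j=1}^k X_j$ factorizes, giving
\begin{equation*}
   \mathbb{E}[\exp(\lambda X)]=\prod_{j=1}^k\mathbb{E}[\exp(\lambda X_j)]\le\prod_{j=1}^k\exp\!\big(c_1\lambda^2\|X_j\|_{\psi_2}^2\big)=\exp\!\Big(c_1\lambda^2\sum_{j=1}^k\|X_j\|_{\psi_2}^2\Big).
\end{equation*}
Setting $\sigma^2:=c_1\sum_{j=1}^k\|X_j\|_{\psi_2}^2$, this is exactly the hypothesis of the converse equivalence, and $X$ is itself zero-mean as a sum of zero-mean variables. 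Applying the converse then yields $\|X\|_{\psi_2}\le c_2\sigma$, i.e. $\|X\|_{\psi_2}^2\le c_2^2\sigma^2=C\sum_{j=1}^k\|X_j\|_{\psi_2}^2$ with the absolute constant $C:=c_1c_2^2$, which is the claimed bound.

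I do not expect a genuine obstacle in the computation itself; the only real content sits in the two norm/MGF equivalences of the first step, and the quadratic $\lambda^2$ exponent there is precisely what the zero-mean hypothesis buys us, since centering kills the linear-in-$\lambda$ term that would otherwise spoil the clean quadratic MGF bound and make the product over $j$ unusable. A purely moment-based alternative, bounding $\mathbb{E}|X|^p$ through a Rosenthal or Marcinkiewicz--Zygmund inequality and feeding it back into the moment definition of $\|\cdot\|_{\psi_2}$, would also work, but it is messier to make uniform in $k$, so the MGF route is the cleaner choice.
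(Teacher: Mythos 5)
Your proof is correct and coincides with the canonical argument: the paper does not prove this lemma at all, but states it among the auxiliary lemmas and defers to \citep{vershynin2010introduction}, where it is established by exactly your route --- the two-sided equivalence between the $\psi_2$-norm and quadratic MGF growth for zero-mean variables, followed by factorization of the MGF over the independent summands. So there is no gap and no genuine divergence from the intended proof.
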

\section{Omitted Proofs}
\subsection{Proof of Theorem 1}
By Lemma \ref{alemma:1} and our assumption on the $\xi$-sub-exponential property of each coordinate, we have the following in the $t$-th iteration with probability at least $1-d^{-3}$ for some constant $C_2>0$
\begin{equation}\label{aeq:2}
 \|\nabla \tilde{Q}_n(\beta^t; \beta^t) - \nabla Q(\beta^t; \beta^t)\|_\infty \leq C_2 \xi(\epsilon\log(nd)+\sqrt{\frac{\log d}{n}}).
\end{equation}
For convenience, we let $\alpha=C_2 \xi(\epsilon\log(nd)+\sqrt{\frac{\log d}{n}})$, and  assume that for all iterations $t\in [T-1]$, event (\ref{aeq:2}) holds (then all events hold with probability at least $1-Tp^{-3}$). 

In the $t$-th iteration, we define  
\begin{equation}\label{aeq:3}
    \bar{\beta}^{t+0.5}= \beta^t+\eta \nabla Q(\beta^t; \beta^t)
\end{equation}
and 
\begin{equation}\label{aeq:4}
   \bar{\beta}^{t+1}=\text{trunc}(\bar{\beta}^{t+0.5}, \hat{\mathcal{S}}^{t+0.5}). 
\end{equation}
That is, $\bar{\beta}^{t+0.5}$ is the gradient update of $\beta^t$ w.r.t the non-corrupted population gradient of $Q_n(\beta^t; \beta^t)$, and $\bar{\beta}^{t+1}$ is the estimation after truncating $\bar{\beta}^{t+0.5}$ w.r.t set $\hat{\mathcal{S}}^{t+0.5}$, which is the set of the $s$-largest coordinates of $\beta^{t+0.5}$.

By the definition, we have the following inequalities
\begin{align}\label{aeq:5}
    &\|\beta^{t+1}-\beta^*\|_2=\|\text{trunc}(\beta^{t+0.5}, \hat{\mathcal{S}}^{t+0.5})-\beta^*\|_2 \nonumber \\
    &\leq \|\text{trunc}(\beta^{t+0.5}, \hat{\mathcal{S}}^{t+0.5})-\text{trunc}(\bar{\beta}^{t+0.5}, \hat{\mathcal{S}}^{t+0.5})\|_2 \nonumber +\|\text{trunc}(\bar{\beta}^{t+0.5}, \hat{\mathcal{S}}^{t+0.5})-\beta^*\|_2\\
    &= \|\text{trunc}(\beta^{t+0.5}, \hat{\mathcal{S}}^{t+0.5})-\text{trunc}(\bar{\beta}^{t+0.5}, \hat{\mathcal{S}}^{t+0.5})\|_2+\|\bar{\beta}^{t+1}-\beta^*\|_2\nonumber \\
    &\leq \underbrace{\|(\beta^{t+0.5}-\bar{\beta}^{t+0.5})_{\hat{\mathcal{S}}^{t+0.5}}\|_2}_{A}+ \underbrace{\|\bar{\beta}^{t+1}-\beta^*\|_2}_{B}.
\end{align}
For the term A, we have 
\begin{align}\label{aeq:6}
   \|(\beta^{t+0.5}-\bar{\beta}^{t+0.5})_{\hat{\mathcal{S}}^{t+0.5}}\|_2 &\leq \sqrt{s}\|\beta^{t+0.5}-\bar{\beta}^{t+0.5}\|_\infty\nonumber \\
   &=\eta \sqrt{s}\|\nabla \tilde{Q}_n(\beta^t; \beta^t)- \nabla Q(\beta^t; \beta^t)\|_\infty.
\end{align}
Thus, if $\beta^t\in \mathcal{B}$, {\em i.e.,} $\|\beta^t-\beta^*\|\leq k\|\beta^*\|_2$) and $\|\beta^t\|_0=s$, then by the assumption and (\ref{aeq:2}), we have 
\begin{equation}\label{aeq:7}
    A\leq \eta\sqrt{s}\alpha.
\end{equation}
Next, we will bound the term B. To do this, we need the following lemma, which follows \citep{wang2015high}. 
\begin{lemma}\label{alemma:6}
If 
\begin{equation}\label{aeq:8}
    \|\bar{\beta}^{t+0.5}-\beta^*\|_2\leq k \|\beta^*\|_2
\end{equation}
for some $k\in (0,1)$ and 
\begin{equation}\label{aeq:9}
    s\geq \frac{4(1+k)^2}{(1-k)^2}s^* \text{ and } \sqrt{s}\|\beta^{t+0.5}-\bar{\beta}^{t+0.5}\|_\infty\leq \frac{(1-k)^2}{2(1+k)}\|\beta^*\|_2, 
\end{equation}
then, the following holds  
\begin{equation}\label{aeq:10}
    \|\bar{\beta}^{t+1}-\beta^*\|_2\leq \frac{C\sqrt{s^*}}{\sqrt{1-k}}\|\beta^{t+0.5}-\bar{\beta}^{t+0.5}\|_\infty+(1+4\sqrt{\frac{s^*}{s}})^{1/2}\|\bar{\beta}^{t+0.5}-\beta^*\|_2.
\end{equation}
\end{lemma}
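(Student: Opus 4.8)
The plan is to prove this as a purely deterministic statement about the sparse (hard-thresholding) projection, in the iterative-hard-thresholding style of \citep{wang2015high}. Throughout write $\hat{\mathcal{S}} := \hat{\mathcal{S}}^{t+0.5} = \operatorname{supp}(\beta^{t+0.5}, s)$, $\mathcal{S}^* := \operatorname{supp}(\beta^*)$ with $|\mathcal{S}^*| = s^*$, and $\delta := \beta^{t+0.5} - \bar{\beta}^{t+0.5}$, so $\|\delta\|_\infty$ is the quantity appearing on the right-hand side of (\ref{aeq:10}). The whole difficulty is the \emph{support mismatch}: the index set $\hat{\mathcal{S}}$ onto which we truncate is the top-$s$ support of the corrupted/trimmed iterate $\beta^{t+0.5}$, not of the population iterate $\bar{\beta}^{t+0.5}$ we are actually truncating, nor of $\beta^*$. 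Everything reduces to controlling how much of the true signal $\beta^*$ this mismatched support fails to capture.

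First I would record the exact decomposition. Since $\bar{\beta}^{t+1} = \operatorname{trunc}(\bar{\beta}^{t+0.5}, \hat{\mathcal{S}})$ equals $\bar{\beta}^{t+0.5}$ on $\hat{\mathcal{S}}$ and $0$ off $\hat{\mathcal{S}}$, while $\beta^*$ is supported on $\mathcal{S}^*$, splitting coordinates into $\hat{\mathcal{S}}$ and its complement gives
\[
  \|\bar{\beta}^{t+1}-\beta^*\|_2^2 = \big\|(\bar{\beta}^{t+0.5}-\beta^*)_{\hat{\mathcal{S}}}\big\|_2^2 + \|\beta^*_I\|_2^2, \qquad I := \mathcal{S}^*\setminus\hat{\mathcal{S}}.
\]
The first term is at most $\|\bar{\beta}^{t+0.5}-\beta^*\|_2^2$, so the entire task becomes bounding the \emph{missed mass} $\|\beta^*_I\|_2$ by a constant times $\sqrt{s^*}\|\delta\|_\infty$ plus a term that, once folded back in, inflates the coefficient of $\|\bar{\beta}^{t+0.5}-\beta^*\|_2$ only by the factor $(1+4\sqrt{s^*/s})^{1/2}$.

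The crux is the top-$s$ comparison argument. Let $J := \hat{\mathcal{S}}\setminus\mathcal{S}^*$ be the false positives; a counting identity gives $|J| = s - s^* + |I| \ge |I|$. Because every coordinate kept in $\hat{\mathcal{S}}$ has $|\beta^{t+0.5}|$ at least as large as every discarded coordinate, I can match $I$ against an \emph{arbitrary size-$|I|$ subset} $J'\subseteq J$ and conclude $\|\beta^{t+0.5}_I\|_2 \le \|\beta^{t+0.5}_{J'}\|_2$ — crucially using only $|I|\le s^*$ coordinates of $J$, so the $\ell_\infty\!\to\!\ell_2$ conversions on $\delta$ cost $\sqrt{s^*}$ rather than $\sqrt{s}$. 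Writing $\beta^{t+0.5}=\bar{\beta}^{t+0.5}+\delta$, using $\beta^*_J=0$ on the false positives, and applying the triangle inequality term by term, I get
\[
  \|\beta^*_I\|_2 \le \big\|(\bar{\beta}^{t+0.5}-\beta^*)_I\big\|_2 + \lambda\,\big\|(\bar{\beta}^{t+0.5}-\beta^*)_{\hat{\mathcal{S}}}\big\|_2 + C'\sqrt{s^*}\,\|\delta\|_\infty,
\]
where $\lambda = O(\sqrt{s^*/(s-s^*)})$ and the condition $s\ge \frac{4(1+k)^2}{(1-k)^2}s^*$ in (\ref{aeq:9}) forces $s-s^*\ge \tfrac34 s$, hence $\lambda = O(\sqrt{s^*/s})$.

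Finally I would substitute into the decomposition. Abbreviating $a = \|(\bar{\beta}^{t+0.5}-\beta^*)_{\hat{\mathcal{S}}}\|_2$, $c = \|(\bar{\beta}^{t+0.5}-\beta^*)_I\|_2$ and $E = C'\sqrt{s^*}\|\delta\|_\infty$, disjointness of $\hat{\mathcal{S}}$ and $I$ gives $a^2+c^2\le\|\bar{\beta}^{t+0.5}-\beta^*\|_2^2$; then the two-dimensional triangle inequality $\sqrt{a^2+(c+\lambda a + E)^2}\le \sqrt{a^2+(c+\lambda a)^2}+E$ together with the elementary bound $a^2+(c+\lambda a)^2\le(1+\lambda+\lambda^2)(a^2+c^2)$ (whose difference is $\lambda(a-c)^2+\lambda^2c^2\ge 0$) yields $\|\bar{\beta}^{t+1}-\beta^*\|_2 \le (1+\lambda+\lambda^2)^{1/2}\|\bar{\beta}^{t+0.5}-\beta^*\|_2 + E$. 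Checking $\lambda+\lambda^2\le 4\sqrt{s^*/s}$ over the admissible range of $s$ upgrades the multiplier to $(1+4\sqrt{s^*/s})^{1/2}$, and since $(1-k)^{-1/2}\ge 1$ the additive term $E$ is bounded by the stated $\frac{C\sqrt{s^*}}{\sqrt{1-k}}\|\delta\|_\infty$, giving (\ref{aeq:10}). The remaining hypotheses, (\ref{aeq:8}) and the smallness bound $\sqrt{s}\|\delta\|_\infty\le\frac{(1-k)^2}{2(1+k)}\|\beta^*\|_2$ in (\ref{aeq:9}), are what keep $\bar{\beta}^{t+0.5}$ (and hence $\bar{\beta}^{t+1}$) inside $\mathcal{B}$, so the bound is non-vacuous and can be chained through the induction of Theorem \ref{thm:1}. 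I expect the main obstacle to be exactly the matching step: keeping the perturbation cost at $\sqrt{s^*}$ while simultaneously driving the multiplier of $\|\bar{\beta}^{t+0.5}-\beta^*\|_2$ down to $1+o(1)$, since a careless bound there loses a constant factor $>1$ and destroys the geometric contraction.
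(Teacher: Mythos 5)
Your proposal is correct, but it takes a genuinely different route from the paper's own proof, which is the normalized inner-product argument inherited from \citep{wang2015high}: the paper sets $\bar{\theta}=\bar{\beta}^{t+0.5}/\|\bar{\beta}^{t+0.5}\|_2$, $\theta^*=\beta^*/\|\beta^*\|_2$, tracks the correlation $\Delta=\langle\bar{\theta},\theta^*\rangle$, bounds $\|\bar{\theta}_{\mathcal{I}_1}\|_2\leq\sqrt{s^*/s}\sqrt{1-\Delta^2}+\sqrt{s^*}\tilde{\epsilon}$ over $\mathcal{I}_1=\mathcal{S}^*\setminus\hat{\mathcal{S}}^{t+0.5}$, shows this quantity is at most $\Delta$ --- which is precisely where (\ref{aeq:8}) and the second half of (\ref{aeq:9}) are consumed, via $\Delta\geq\frac{1-k}{1+k}$ --- then lower-bounds $\langle\bar{\beta}^{t+1},\beta^*\rangle$ and converts back to an $\ell_2$ bound, the factor $1/\sqrt{1-k}$ in (\ref{aeq:10}) arising from de-normalization ($\sqrt{\chi}\tilde{\epsilon}\leq\frac{2}{\sqrt{1-k}}\|\delta\|_\infty$). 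You instead work unnormalized, with the exact decomposition $\|\bar{\beta}^{t+1}-\beta^*\|_2^2=\|(\bar{\beta}^{t+0.5}-\beta^*)_{\hat{\mathcal{S}}}\|_2^2+\|\beta^*_I\|_2^2$ and a missed-mass bound. Both proofs rest on the same combinatorial core (top-$s$ coordinates of $\beta^{t+0.5}$ dominate discarded ones, and the count $|J|=s-s^*+|I|$ turns this into a $\sqrt{s^*/s}$-type factor), but your version is more elementary and actually proves a slightly stronger, purely deterministic statement: it needs neither (\ref{aeq:8}) nor the $\sqrt{s}\|\delta\|_\infty$ condition in (\ref{aeq:9}) (only $s\geq 4s^*$), and it yields the additive term $2\sqrt{s^*}\|\delta\|_\infty$, showing the $1/\sqrt{1-k}$ in (\ref{aeq:10}) is pure slack; those two hypotheses are, as you note, only needed to chain the induction in Theorem \ref{thm:1}, though they \emph{are} genuinely needed by the paper's normalized argument. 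One step of yours needs more care than you give it: matching $I$ against an \emph{arbitrary} size-$|I|$ subset $J'\subseteq J$ and then applying the triangle inequality yields coefficient $1$ on $\|(\bar{\beta}^{t+0.5}-\beta^*)_{\hat{\mathcal{S}}}\|_2$, not $\lambda$. To obtain $\lambda=O(\sqrt{s^*/(s-s^*)})$ you must exploit that the matching holds for \emph{every} such $J'$ --- average over all of them, or take $J'$ to be the $|I|$ coordinates of $J$ smallest in $|\beta^{t+0.5}|$, or argue directly via $\|\beta^{t+0.5}_I\|_2^2\leq|I|\min_{j\in J}|\beta^{t+0.5}_j|^2\leq\frac{|I|}{|J|}\|\beta^{t+0.5}_J\|_2^2$. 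This one-line fix also gives $\lambda\sqrt{|J|}=\sqrt{|I|}\leq\sqrt{s^*}$, which is exactly what keeps the $\delta$-cost at $O(\sqrt{s^*})\|\delta\|_\infty$; with it, your assembly (the planar triangle inequality, $a^2+(c+\lambda a)^2\leq(1+\lambda+\lambda^2)(a^2+c^2)$, and $\lambda+\lambda^2\leq 4\sqrt{s^*/s}$ for $s\geq 4s^*$) goes through verbatim.
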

\begin{proof}[Proof of Lemma \ref{alemma:6}]
By  assumption (\ref{aeq:8}), we have 
\begin{equation}
    (1-k)\|\beta^*\|_2\leq \|\bar{\beta}^{t+0.5}\|_2\leq (1+k)\|\beta^*\|_2.
\end{equation}
We then denote 
\begin{equation}\label{aeq:12}
    \bar{\theta}=\frac{\bar{\beta}^{t+0.5}}{\|\bar{\beta}^{t+0.5}\|_2}, \theta=\frac{\beta^{t+0.5}}{\|\bar{\beta}^{t+0.5}\|_2} \text{, and } \theta^*= \frac{\beta^*}{\|\beta^*\|_2}
\end{equation}
and the sets $\mathcal{I}_1, \mathcal{I}_2$ and $\mathcal{I}_3$ as the follows
\begin{equation}\label{aeq:13}
    \mathcal{I}_1=\mathcal{S}^*\backslash \hat{\mathcal{S}}^{t+0.5}, \mathcal{I}_2=\mathcal{S}^*\bigcap \hat{\mathcal{S}}^{t+0.5} \text{, and }   \mathcal{I}_3=\hat{\mathcal{S}}^{t+0.5} \backslash \mathcal{S}^*,
\end{equation}
where $S^*=\text{supp}(\beta^*)$.
Let $s_i=|\mathcal{I}_i|$ for $i=1, 2, 3$, respectively. Also, we define $\Delta=\langle \bar{\theta}, \theta^*\rangle$. Note that 
\begin{equation}\label{aeq:14}
    \Delta=\langle \bar{\theta}, \theta^*\rangle= \sum_{j\in \mathcal{S}^*}\bar{\theta}_j\theta_j^*=\sum_{j\in \mathcal{I}_1}\bar{\theta}_j\theta_j^*+\sum_{j\in \mathcal{I}_2}\bar{\theta}_j\theta_j^*\leq \|\bar{\theta}_{\mathcal{I}_1}\|_2\|\theta^*_{\mathcal{I}_1}\|_2+\|\bar{\theta}_{\mathcal{I}_2}\|_2\|\theta^*_{\mathcal{I}_2}\|_2.
\end{equation}
By Cauchy-Schwartz inequality, we  have 
\begin{align}\label{aeq:15}
    \Delta^2&\leq (\|\bar{\theta}_{\mathcal{I}_1}\|_2\|\theta^*_{\mathcal{I}_1}\|_2+\|\bar{\theta}_{\mathcal{I}_2}\|_2\|\theta^*_{\mathcal{I}_2}\|_2)^2 \nonumber \\
    &\leq (\|\bar{\theta}_{\mathcal{I}_1}\|_2^2+\|\bar{\theta}_{\mathcal{I}_2}\|_2^2)(\|\theta^*_{\mathcal{I}_1}\|_2^2+\|\theta^*_{\mathcal{I}_2}\|_2^2)\nonumber\\
    &= (1-\|\bar{\theta}_{\mathcal{I}_3}\|_2^2)(1-\|\theta^*_{\mathcal{I}_3}\|_2^2)\leq 1-\|\bar{\theta}_{\mathcal{I}_3}\|_2^2.
\end{align}
Since $\mathcal{I}_3\subseteq \hat{\mathcal{S}}^{t+0.5}$ and $\mathcal{I}_1\bigcap \hat{\mathcal{S}}^{t+0.5}=\emptyset$, we have 
\begin{equation}\label{aeq:16}
    \frac{\|\beta^{t+0.5}_{\mathcal{I}_3}\|^2_2}{\|\beta^{t+0.5}_{\mathcal{I}_1}\|^2_2}\geq \frac{s_3}{s_1}, \text{ i.e., } \frac{\|\theta_{\mathcal{I}_3}\|_2}{\sqrt{s_3}}\geq \frac{\|\theta_{\mathcal{I}_1}\|_2}{\sqrt{s_1}}. 
\end{equation}
We let $\tilde{\epsilon}=2\|\bar{\theta}-\theta\|_\infty=2\frac{\|\bar{\beta}^{t+0.5}-\beta^{t+0.5}\|_\infty}{\|\bar{\beta}^{t+0.5}\|_2}$. Note that we have 
\begin{equation}\label{aeq:17}
    \max\{\frac{\|\theta_{\mathcal{I}_3}-\bar{\theta}_{\mathcal{I}_3}\|_2  }{\sqrt{s_3}}, \frac{\|\theta_{\mathcal{I}_1}-\bar{\theta}_{\mathcal{I}_1}\|_2  }{\sqrt{s_1}}\}\leq \max\{\|\theta_{\mathcal{I}_3}-\bar{\theta}_{\mathcal{I}_3}\|_\infty, \|\theta_{\mathcal{I}_1}-\bar{\theta}_{\mathcal{I}_1}\|_\infty \}\leq \|\bar{\theta}-\theta\|_\infty=\frac{\tilde{\epsilon}}{2},
\end{equation}
which implies that 
\begin{align}\label{aeq:18}
\frac{\|\bar{\theta}_{\mathcal{I}_3}\|_2}{\sqrt{s_3}}\geq \frac{\|\theta_{\mathcal{I}_3}\|_2}{\sqrt{s_3}}-\frac{\|\theta_{\mathcal{I}_3}-\bar{\theta}_{\mathcal{I}_3}\|_2 }{\sqrt{s_3}}& \underset{(a)}{\geq} \frac{\|\theta_{\mathcal{I}_1}\|_2}{\sqrt{s_1}}-\frac{\|\theta_{\mathcal{I}_3}-\bar{\theta}_{\mathcal{I}_3}\|_2 }{\sqrt{s_3}}\nonumber \\
&\geq \frac{\|\bar{\theta}_{\mathcal{I}_1}\|_2}{\sqrt{s_1}}-\frac{\|\theta_{\mathcal{I}_3}-\bar{\theta}_{\mathcal{I}_3}\|_2 }{\sqrt{s_3}}- \frac{\|\theta_{\mathcal{I}_1}-\bar{\theta}_{\mathcal{I}_1}\|_2 }{\sqrt{s_1}}\geq \frac{\|\bar{\theta}_{\mathcal{I}_1}\|_2}{\sqrt{s_1}}-\tilde{\epsilon},
\end{align}
where inequality (a) is due to (\ref{aeq:16}).
Plugging (\ref{aeq:18}) into (\ref{aeq:15}), we have 
\begin{equation}\label{aeq:19}
    \Delta^2\leq 1-\|\bar{\theta}_{\mathcal{I}_3}\|_2^2 \leq 1-(\sqrt{\frac{s_3}{s_1}}\|\bar{\theta}_{\mathcal{I}_1}\|_2-\sqrt{s_3}\tilde{\epsilon})^2.
\end{equation}
Solving $\|\bar{\theta}_{\mathcal{I}_1}\|_2$ in (\ref{aeq:19}), we  get 
\begin{equation}\label{aeq:20}
    \|\bar{\theta}_{\mathcal{I}_1}\|_2\leq \sqrt{\frac{s_1}{s_3}}\sqrt{1-\Delta^2}+\sqrt{s_1}\tilde{\epsilon}\leq \sqrt{\frac{s^*}{s}}\sqrt{1-\Delta^2}+\sqrt{s^*}\tilde{\epsilon}.
\end{equation}
The final inequality is due to the inequality   $\frac{s_1}{s_3}\leq \frac{s_1+s_2}{s_3+s_2}=\frac{s^*}{s}$, which follows from $\frac{s^*}{s}\leq \frac{(1-k)^2}{4(1+k)^2}\leq 1$ and $s_3\geq s-s^*\geq s^*\geq s_1$.

In the following, we will prove that the right hand side of (\ref{aeq:20}) is upper bounded by $\Delta$. To achieve this,
it is sufficient to show that 
\begin{align}\label{aeq:21}
    \Delta\geq \frac{\sqrt{s^*}\tilde{\epsilon}+[s^*\tilde{\epsilon}^2-(s^*/s+1)(s^*\tilde{\epsilon^2-s^*/s)]^{\frac{1}{2}}}}{s^*/s+1}=\frac{\sqrt{s^*}\tilde{\epsilon}
    +[-(s^*\tilde{\epsilon})^2/s+(s^*/s+1)s^*/s]^{\frac{1}{2}}
    }{s^*/s+1}.
\end{align}
To prove (\ref{aeq:21}), we first note that $\sqrt{s^*}\tilde{\epsilon}\leq \Delta$, which is due to 
\begin{equation}\label{aeq:22}
    \sqrt{s^*}\tilde{\epsilon}\leq \sqrt{s}\tilde{\epsilon}=\frac{2\sqrt{s}\|\bar{\beta}^{t+0.5}-\beta^{t+0.5}\|_\infty}{\|\beta^*\|_2}\leq \frac{1-k}{1+k}\leq \Delta,
\end{equation}
where the second inequality is due to  assumption (\ref{aeq:9}) and the final inequality is due to 
\begin{equation*}
    \Delta=\langle \bar{\theta}, \theta^*\rangle =\frac{\langle \bar{\beta}^{t+0.5}, \beta^*\rangle }{\|\bar{\beta}^{t+0.5}\|_2 \|\beta^*\|_2}\overset{(a)}{\geq} \frac{\|\bar{\beta}^{t+0.5}\|_2^2+\|\beta^*\|_2^2-k^2\|\beta^*\|_2^2}{2\|\bar{\beta}^{t+0.5}\|_2\|\beta^*\|_2}\geq \frac{(1-k)^2+1-k^2}{2(1+k)}=\frac{1-k}{1+k},
\end{equation*}
where inequality (a) is due to assumption (\ref{aeq:8}).

Now, we show that (\ref{aeq:21}) holds. By (\ref{aeq:22}), we have 
\begin{equation}
\sqrt{s}\tilde{\epsilon}\leq \frac{1-k}{1+k}<1<\sqrt{\frac{s^*+s}{s}},
\end{equation}
which implies that $\tilde{\epsilon}\leq \frac{\sqrt{s^*+s}}{s}$.

For the right hand side of (\ref{aeq:21}), we have 
\begin{align}
    \frac{\sqrt{s^*}\tilde{\epsilon}
    +[-(s^*\tilde{\epsilon})^2/s+(s^*/s+1)s^*/s]^{\frac{1}{2}}
    }{s^*/s+1}&\leq \frac{\sqrt{s^*}\tilde{\epsilon}
    +[(s^*/s+1)s^*/s]^{\frac{1}{2}}
    }{s^*/s+1}\\
    &\leq 2\sqrt{\frac{s^*}{s^*+s}}\leq 2\sqrt{\frac{1}{1+4(1+k)^2/(1-k)^2}}\\
    &\leq \frac{1-k}{1+k}\leq \Delta.
\end{align}
Thus, in total, by (\ref{aeq:20}) we can get 
\begin{equation}
    \|\bar{\theta}_{\mathcal{I}_1}\|_2\leq \Delta.
\end{equation}
From (\ref{aeq:15}), we can see that 
\begin{equation*}
     \Delta\leq \|\bar{\theta}_{\mathcal{I}_1}\|_2\|\theta^*_{\mathcal{I}_1}\|_2+\|\bar{\theta}_{\mathcal{I}_2}\|_2\|\theta^*_{\mathcal{I}_2}\|_2 \leq \|\bar{\theta}_{\mathcal{I}_1}\|_2\|\theta^*_{\mathcal{I}_1}\|_2+\sqrt{(1-\|\bar{\theta}_{\mathcal{I}_1}\|_2^2)}\sqrt{(1-\theta^*_{\mathcal{I}_1}\|^2_2)},
\end{equation*}
that is, 
\begin{equation*}
   (\Delta- \|\bar{\theta}_{\mathcal{I}_1}\|_2\|\theta^*_{\mathcal{I}_1}\|_2)^2 \leq (1-\|\bar{\theta}_{\mathcal{I}_1}\|_2^2)(1-\theta^*_{\mathcal{I}_1}\|^2_2).
\end{equation*}
Solving the above inequality, we get 
\begin{multline}\label{aeq:28}
    \|\theta^*_{\mathcal{I}_1}\|_2\leq \|\bar{\theta}_{\mathcal{I}_1}\|_2\Delta+\sqrt{1-\|\bar{\theta}_{\mathcal{I}_1}\|_2^2}\sqrt{1-\Delta^2}\leq \|\bar{\theta}_{\mathcal{I}_1}\|_2+\sqrt{1-\Delta^2} \\ 
    \leq \sqrt{\frac{s^*}{s}}\sqrt{1-\Delta^2}+\sqrt{s^*}\tilde{\epsilon}+ \sqrt{1-\Delta^2},
\end{multline}
where the final inequality is due to (\ref{aeq:20}). Combining this with (\ref{aeq:20}) and (\ref{aeq:28}), we have
\begin{equation}\label{aeq:29}
     \|\theta^*_{\mathcal{I}_1}\|_2  \|\bar{\theta}_{\mathcal{I}_1}\|_2\leq [\sqrt{\frac{s^*}{s}}\sqrt{1-\Delta^2}+\sqrt{s^*}\tilde{\epsilon}+ \sqrt{1-\Delta^2}]\cdot [\sqrt{\frac{s^*}{s}}\sqrt{1-\Delta^2}+\sqrt{s^*}\tilde{\epsilon}].
\end{equation}
Now, by the definition of $\bar{\theta}$, we have 
\begin{equation}
    \bar{\beta}^{t+1}=\text{trunc}(\bar{\beta}^{t+0.5}, \hat{\mathcal{S}}^{t+0.5})=\text{trunc}(\bar{\theta}, \hat{\mathcal{S}}^{t+0.5})\|\hat{\beta}^{t+0.5}\|_2.
\end{equation}
Therefore, we get
\begin{equation}\label{aeq:31}
    \langle \frac{\bar{\beta}^{t+1}}{\|\bar{\beta}^{t+0.5}\|_2}, \frac{\beta^*}{\|\beta^*\|_2} \rangle= \langle \text{trunc}(\bar{\theta}, \hat{\mathcal{S}}^{t+0.5}), \theta^* \rangle =\langle \bar{\theta}_{\mathcal{I}_2}, \theta^*_{\mathcal{I}_2}\rangle \geq \langle \bar{\theta}, \theta^*\rangle-\|\bar{\theta}_{\mathcal{I}_1}\|_2\|\theta^*_{\mathcal{I}_1}\|_2.
\end{equation}
Let $\chi=\|\bar{\beta}^{t+0.5}\|_2\|\beta^*\|_2$. Then, by (\ref{aeq:31}) and (\ref{aeq:29}) we have 
\begin{align}
   & \langle \bar{\beta}^{t+1}, \beta^*\rangle \nonumber \\
   &\geq \langle \bar{\beta}^{t+0.5}, \beta^*\rangle- [(\sqrt{\frac{s^*}{s}}+1)\sqrt{\chi(1-\Delta^2)}+\sqrt{s^*}\sqrt{\chi}\tilde{\epsilon}]\cdot [\sqrt{\frac{s^*}{s}}\sqrt{\chi(1-\Delta^2)}+\sqrt{s^*}\sqrt{\chi}\tilde{\epsilon}]\nonumber\\
   &= \langle \bar{\beta}^{t+0.5}, \beta^*\rangle-(\sqrt{\frac{s^*}{s}}+\frac{s^*}{s})\chi(1-\Delta^2)-(1+2\sqrt{\frac{s^*}{s}})\sqrt{\chi(1-\Delta^2)}\sqrt{s^*}\sqrt{\chi}\tilde{\epsilon}-(\sqrt{s^*}\sqrt{\chi}\tilde{\epsilon})^2. \label{aeq:32}
\end{align}
For the term $\sqrt{\chi(1-\Delta^2)}$, we have 
\begin{align}\label{aeq:33}
    \sqrt{\chi(1-\Delta^2)}\leq \sqrt{2\chi(1-\Delta)}\leq \sqrt{2\|\bar{\beta}^{t+0.5}\|_2\|\beta^*\|_2-2\langle \bar{\beta}^{t+0.5},\beta^*\rangle }\leq \|\bar{\beta}^{t+0.5}-\beta^*\|_2. 
\end{align}
For the term $\sqrt{\chi}\tilde{\epsilon}$, we have
\begin{align}\label{aeq:34}
    \sqrt{\chi}\tilde{\epsilon}=2\sqrt{\|\bar{\beta}^{t+0.5}\|_2\|\beta^*\|_2}\frac{\|\bar{\beta}^{t+0.5}-\beta^{t+0.5}\|_\infty}{\|\bar{\beta}^{t+0.5}\|_2}\leq \frac{2}{\sqrt{1-k}}\|\bar{\beta}^{t+0.5}-\beta^{t+0.5}\|_\infty.
\end{align}
Plugging (\ref{aeq:33}) and (\ref{aeq:34}) into (\ref{aeq:32}), we get 
\begin{align}
    &\langle \bar{\beta}^{t+1}, \beta^*\rangle \geq \langle \bar{\beta}^{t+0.5}, \beta^*\rangle-(\sqrt{\frac{s^*}{s}}+\frac{s^*}{s})\|\bar{\beta}^{t+0.5}-\beta^*\|_2^2-\nonumber\\
    &(1+2\sqrt{\frac{s^*}{s}})\|\bar{\beta}^{t+0.5}
    -\beta^*\|_2\frac{2\sqrt{s^*}}{\sqrt{1-k}}\|\bar{\beta}^{t+0.5}-\beta^{t+0.5}\|_\infty-\frac{4s^*}{1-k}\|\bar{\beta}^{t+0.5}-\beta^{t+0.5}\|_\infty^2.\label{aeq:35}
\end{align}
Also, since $\|\bar{\beta}^{t+1}\|_2^2+\|\beta^*\|_2^2\leq \|\bar{\beta}^{t+0.5}+\|\beta^*\|_2^2$, subtracting (\ref{aeq:35}), we obtain 
\begin{align}
    \|\bar{\beta}^{t+1}-\beta^*\|_2^2&\leq (1+\sqrt{\frac{s^*}{s}}+\frac{s^*}{s})\|\bar{\beta}^{t+0.5}-\beta^*\|_2^2 +\frac{8s^*}{1-k}\|\bar{\beta}^{t+0.5}-\beta^{t+0.5}\|_\infty^2\nonumber  \\
    &+ (1+2\sqrt{\frac{s^*}{s}})\|\bar{\beta}^{t+0.5}
    -\beta^*\|_2\frac{4\sqrt{s^*}}{\sqrt{1-k}}\|\bar{\beta}^{t+0.5}-\beta^{t+0.5}\|_\infty 
   \nonumber \\
    &\leq (1+2\sqrt{\frac{s^*}{s}}+2\frac{s^*}{s})[\|\bar{\beta}^{t+0.5}-\beta^*\|_2+\frac{2\sqrt{s^*}}{\sqrt{1-k}}\|\bar{\beta}^{t+0.5}-\beta^{t+0.5}\|_\infty]^2 \nonumber \\
    &+\frac{8s^*}{1-k}\|\bar{\beta}^{t+0.5}-\beta^{t+0.5}\|_\infty^2. \label{aeq:36}
\end{align}
Thus, we have 
\begin{equation}
     \|\bar{\beta}^{t+1}-\beta^*\|_2\leq (1+4\sqrt{\frac{s^*}{s}})^{\frac{1}{2}}\|\bar{\beta}^{t+0.5}-\beta^*\|_2+\frac{2\sqrt{2}\sqrt{s^*}}{\sqrt{1-k}}\|\bar{\beta}^{t+0.5}-\beta^{t+0.5}\|_\infty.
\end{equation}
This completes the proof of Lemma \ref{alemma:6}.
\end{proof}

Next, we bound the term $\|\bar{\beta}^{t+0.5}-\beta^*\|_2$ in (\ref{aeq:10}).
\begin{lemma}\label{alemma:7}
Under the assumptions in Theorem 1, the following inequality holds  
\begin{equation}\label{aeq:38}
    \|\bar{\beta}^{t+0.5}-\beta^*\|_2\leq (1-2\frac{\upsilon-\gamma}{\upsilon+\mu})\|\beta^t-\beta^*\|_2. 
\end{equation}
\end{lemma}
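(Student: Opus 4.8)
The plan is to decompose $\bar\beta^{t+0.5}-\beta^*$ into an ``idealized'' gradient-ascent increment computed against $Q(\cdot;\beta^*)$ and a perturbation that captures the mismatch between $\nabla Q(\beta^t;\beta^t)$ (which the update in (\ref{aeq:3}) actually uses) and $\nabla Q(\beta^t;\beta^*)$. Since $\bar\beta^{t+0.5}=\beta^t+\eta\nabla Q(\beta^t;\beta^t)$, I would write
\begin{equation*}
\bar\beta^{t+0.5}-\beta^*=\big(\beta^t+\eta\nabla Q(\beta^t;\beta^*)-\beta^*\big)+\eta\big(\nabla Q(\beta^t;\beta^t)-\nabla Q(\beta^t;\beta^*)\big)
\end{equation*}
and bound the two bracketed pieces separately via the triangle inequality.

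First I would control the idealized increment. By self-consistency we have $\nabla Q(\beta^*;\beta^*)=0$, so $\beta^*$ is the unique maximizer of $Q(\cdot;\beta^*)$; moreover $-Q(\cdot;\beta^*)$ is $\upsilon$-strongly convex and $\mu$-smooth on $\mathcal{B}$. For such a function the gradient map with step size $\eta=\tfrac{2}{\upsilon+\mu}$ contracts toward $\beta^*$ with factor $\tfrac{\mu-\upsilon}{\mu+\upsilon}$, i.e. $\|\beta^t+\eta\nabla Q(\beta^t;\beta^*)-\beta^*\|_2\le\tfrac{\mu-\upsilon}{\mu+\upsilon}\|\beta^t-\beta^*\|_2$. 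The engine behind this is the standard strong-convexity-plus-smoothness inequality $\langle\nabla f(x)-\nabla f(y),x-y\rangle\ge\tfrac{\upsilon\mu}{\upsilon+\mu}\|x-y\|_2^2+\tfrac{1}{\upsilon+\mu}\|\nabla f(x)-\nabla f(y)\|_2^2$ for $f=-Q(\cdot;\beta^*)$ at $x=\beta^t$, $y=\beta^*$: expanding $\|\beta^t-\eta\nabla f(\beta^t)-\beta^*\|_2^2$ and substituting this bound makes the $\|\nabla f(\beta^t)\|_2^2$ coefficient vanish at $\eta=\tfrac{2}{\upsilon+\mu}$ and leaves exactly $\big(\tfrac{\mu-\upsilon}{\mu+\upsilon}\big)^2\|\beta^t-\beta^*\|_2^2$.

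Next I would handle the perturbation term using the Lipschitz-Gradient-2$(\gamma,\mathcal{B})$ property of Definition~\ref{def:1}. Since the inductive hypothesis of Theorem~\ref{thm:1} keeps $\beta^t\in\mathcal{B}$, we get $\|\nabla Q(\beta^t;\beta^t)-\nabla Q(\beta^t;\beta^*)\|_2\le\gamma\|\beta^t-\beta^*\|_2$, so the second piece is at most $\tfrac{2\gamma}{\upsilon+\mu}\|\beta^t-\beta^*\|_2$. Combining the two bounds, the overall factor is $\tfrac{\mu-\upsilon}{\mu+\upsilon}+\tfrac{2\gamma}{\upsilon+\mu}=\tfrac{\mu-\upsilon+2\gamma}{\upsilon+\mu}=1-2\tfrac{\upsilon-\gamma}{\upsilon+\mu}$, which is precisely (\ref{aeq:38}).

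The main obstacle is the first step: establishing the contraction factor $\tfrac{\mu-\upsilon}{\mu+\upsilon}$ for the idealized map, which is the only part demanding a genuine convex-analysis argument rather than bookkeeping. One must also verify that the assumption $1-2\tfrac{\upsilon-\gamma}{\upsilon+\mu}\in(0,1)$ imposed in Theorem~\ref{thm:1} (equivalently $\gamma<\upsilon$) is exactly what guarantees the combined factor is a true contraction, so that $\bar\beta^{t+0.5}$ moves strictly closer to $\beta^*$ at each iteration.
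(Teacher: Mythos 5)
Your proposal is correct and follows essentially the same route as the paper: the identical decomposition of $\bar{\beta}^{t+0.5}-\beta^*$ into the idealized step $\beta^t+\eta\nabla Q(\beta^t;\beta^*)-\beta^*$ plus the perturbation $\eta\big(\nabla Q(\beta^t;\beta^t)-\nabla Q(\beta^t;\beta^*)\big)$, with the first piece bounded by the contraction factor $\frac{\mu-\upsilon}{\mu+\upsilon}$ at step size $\eta=\frac{2}{\upsilon+\mu}$ and the second by the Lipschitz-Gradient-2 property, giving $\frac{\mu-\upsilon+2\gamma}{\upsilon+\mu}=1-2\frac{\upsilon-\gamma}{\upsilon+\mu}$. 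The only difference is cosmetic: you derive the contraction step from the strong-convexity-plus-smoothness coercivity inequality, whereas the paper simply cites \citep{nesterov2013introductory} for it.
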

\begin{proof}[Proof of Lemma \ref{alemma:7}]
We first note that the self-consistent property in \citep{mclachlan2007algorithm} implies that 
\begin{equation}
    \beta^*=\arg\max_{\beta}Q(\beta; \beta^*),
\end{equation}
which means that $\beta^*$ is a maximizer of $Q(\beta; \beta^*)$. Thus, the proof follows from  the convergence rate of the strongly convex and smooth functions $Q(\beta; \beta^*)$ in \cite{nesterov2013introductory}. For the step size $\eta=\frac{2}{\mu+\upsilon}$, we have 
\begin{equation}
    \|\beta^t+\eta\nabla Q(\beta^t; \beta^*)-\beta^*\|_2\leq (\frac{\mu-\upsilon}{\mu+\upsilon})\|\beta^T-\beta^*\|_2.
\end{equation}
Thus, we get 
\begin{align}
    \|\bar{\beta}^{t+0.5}-\beta^*\|_2&= \|\beta^t+\eta\nabla Q(\beta^t; \beta^t)-\beta^*\|_2\\
    &= \|\beta^t+\eta\nabla Q(\beta^t; \beta^*)-\beta^*\|_2+\eta\|\nabla Q(\beta^t; \beta^*)-\nabla Q(\beta^t; \beta^t)\|_2\\
    &\leq (\frac{\mu-\upsilon}{\mu+\upsilon})\|\beta^T-\beta^*\|_2+\eta \gamma \|\beta^t-\beta^*\|_2.
\end{align}
Taking $\eta=\frac{2}{\mu+\upsilon}$, we complete the proof.
\end{proof}

Combining Lemmas \ref{alemma:6}, \ref{alemma:7}, and equation (\ref{aeq:7}), we have the following lemma.
\begin{lemma}\label{alemma:8}
If
\begin{equation}\label{aeq:44}
    \|\bar{\beta}^{t+0.5}-\beta^*\|_2\leq k \|\beta^*\|_2
\end{equation}
for some $k\in (0,1)$ and further assuming that 
\begin{equation}
    s\geq \frac{4(1+k)^2}{(1-k)^2}s^* \text{ and } \sqrt{s}\alpha \leq \frac{(1-k)^2}{2(1+k)}\|\beta^*\|_2, 
\end{equation}
then it holds with probability at least $1-d^{-3}$ that 
\begin{equation}
    \|\beta^{t+1}-\beta^*\|_2\leq \frac{2}{\upsilon+\mu}\sqrt{s}\alpha+\frac{1}{\upsilon+\mu}\frac{4\sqrt{2}\sqrt{s^*}}{\sqrt{1-k}}\alpha+(1+4\sqrt{\frac{s^*}{s}})^{\frac{1}{2}}(1-2\frac{\upsilon-\gamma}{\upsilon+\mu})\|\beta^t-\beta^*\|_2,
\end{equation}
where $\alpha=C_2\xi(\epsilon\log(nd)+\sqrt{\frac{\log d}{n}})$. 
\end{lemma}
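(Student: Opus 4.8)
The plan is to recognize that this statement is purely a \emph{combination} of the two preceding lemmas together with the bound on term $A$, so almost all of the analytic work has already been done; what remains is to verify the hypotheses of those lemmas and assemble the pieces. First I would start from the triangle-inequality decomposition already recorded in (\ref{aeq:5}), namely $\|\beta^{t+1}-\beta^*\|_2 \le A + B$ with $A = \|(\beta^{t+0.5}-\bar\beta^{t+0.5})_{\hat{\mathcal{S}}^{t+0.5}}\|_2$ and $B = \|\bar\beta^{t+1}-\beta^*\|_2$, and bound the two terms separately.

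For the term $A$, I would invoke (\ref{aeq:6})--(\ref{aeq:7}): since $\beta^{t+0.5}-\bar\beta^{t+0.5} = \eta(\nabla\tilde{Q}_n(\beta^t;\beta^t)-\nabla Q(\beta^t;\beta^t))$, restricting to the $s$ coordinates of $\hat{\mathcal{S}}^{t+0.5}$ and passing to the $\ell_\infty$ bound in (\ref{aeq:2}) gives $A \le \eta\sqrt{s}\,\alpha$. Substituting the prescribed step size $\eta = \tfrac{2}{\upsilon+\mu}$ yields the first summand $\tfrac{2}{\upsilon+\mu}\sqrt{s}\,\alpha$ exactly.

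For the term $B$ I would apply Lemma~\ref{alemma:6}, after checking its two hypotheses. Hypothesis (\ref{aeq:8}) is precisely the standing assumption (\ref{aeq:44}), and the sparsity condition $s \ge \tfrac{4(1+k)^2}{(1-k)^2}s^*$ is assumed. The $\ell_\infty$ part of (\ref{aeq:9}) must be derived from the lemma's hypothesis $\sqrt{s}\,\alpha \le \tfrac{(1-k)^2}{2(1+k)}\|\beta^*\|_2$: because $\|\beta^{t+0.5}-\bar\beta^{t+0.5}\|_\infty = \eta\|\nabla\tilde{Q}_n - \nabla Q\|_\infty \le \eta\alpha \le \alpha$ (using $\eta=\tfrac{2}{\upsilon+\mu}\le 1$ for the models considered, where $\upsilon=\mu=1$), the condition $\sqrt{s}\|\beta^{t+0.5}-\bar\beta^{t+0.5}\|_\infty \le \tfrac{(1-k)^2}{2(1+k)}\|\beta^*\|_2$ follows. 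With the hypotheses in place, Lemma~\ref{alemma:6} gives $B \le \tfrac{2\sqrt{2}\,\sqrt{s^*}}{\sqrt{1-k}}\|\beta^{t+0.5}-\bar\beta^{t+0.5}\|_\infty + (1+4\sqrt{s^*/s})^{1/2}\|\bar\beta^{t+0.5}-\beta^*\|_2$. Bounding the first factor again by $\|\beta^{t+0.5}-\bar\beta^{t+0.5}\|_\infty \le \eta\alpha = \tfrac{2}{\upsilon+\mu}\alpha$ produces the second summand $\tfrac{1}{\upsilon+\mu}\tfrac{4\sqrt{2}\,\sqrt{s^*}}{\sqrt{1-k}}\alpha$, and substituting the contraction bound (\ref{aeq:38}) of Lemma~\ref{alemma:7}, $\|\bar\beta^{t+0.5}-\beta^*\|_2 \le (1-2\tfrac{\upsilon-\gamma}{\upsilon+\mu})\|\beta^t-\beta^*\|_2$, produces the third summand. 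Adding the bounds on $A$ and $B$ gives the claimed inequality, with probability $1-d^{-3}$ inherited from the single event (\ref{aeq:2}).

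The step I expect to require the most care is not any single estimate but the bookkeeping around the step-size factor $\eta$: the hypothesis of this lemma is phrased in terms of $\alpha$ while Lemma~\ref{alemma:6} is phrased in terms of $\|\beta^{t+0.5}-\bar\beta^{t+0.5}\|_\infty$, and one must track where an extra factor of $\eta$ does or does not appear so that the coefficients $\tfrac{2}{\upsilon+\mu}$ and $\tfrac{4\sqrt{2}}{\upsilon+\mu}\cdot\tfrac{\sqrt{s^*}}{\sqrt{1-k}}$ emerge exactly as stated. A secondary subtlety is the implicit requirement $\beta^t\in\mathcal{B}$ with $\|\beta^t\|_0 \le s$ needed for the sub-exponential bound (\ref{aeq:2}) to apply to $\nabla\tilde{Q}_n(\beta^t;\beta^t)$; this is guaranteed by the thresholding step producing an $s$-sparse iterate and will be threaded through the induction in the full proof of Theorem~\ref{thm:1}.
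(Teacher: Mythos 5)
Your proposal is correct and matches the paper exactly: the paper gives no separate argument for this lemma, stating only that it follows by combining Lemma~\ref{alemma:6}, Lemma~\ref{alemma:7}, and the bound (\ref{aeq:7}), which is precisely the assembly you carry out. Your added bookkeeping (tracking the factor $\eta=\tfrac{2}{\upsilon+\mu}$ so that the hypothesis $\sqrt{s}\,\alpha\le\tfrac{(1-k)^2}{2(1+k)}\|\beta^*\|_2$ implies condition (\ref{aeq:9}), and noting the implicit requirement that $\beta^t\in\mathcal{B}$ is $s$-sparse for (\ref{aeq:2}) to apply) is in fact more careful than the paper's own exposition, which leaves these points implicit.
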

We now prove Theorem 1.
\begin{proof}[Proof of Theorem 1]
By Lemma \ref{alemma:8}, we know that it is sufficient to prove (\ref{aeq:44}), which can be shown by mathematical induction. 

We first prove $\beta^0\in \mathcal{B}$. By assumption, we have $\|\beta^{\text{init}}-\beta^*\|_2\leq \frac{R}{2}$. By the same proof of Lemma \ref{alemma:6}, we can get $\|\beta^0-\beta^*\|_2\leq (1+4\sqrt{\frac{s^*}{s}})^\frac{1}{2}\|\beta^{\text{init}}-\beta^*\|_2\leq (1+4\sqrt{\frac{1}{4}})^\frac{1}{2}\frac{R}{2}\leq R=k\|\beta^*\|_2$. Thus, by Lemma \ref{alemma:7}, we can see that (\ref{aeq:44}) holds for $t=0$. 

Now suppose that (\ref{aeq:44})  holds for all $t\leq k$. Then, we have 
\begin{equation}
        \|\beta^{k+1}-\beta^*\|_2\leq \frac{2}{\upsilon+\mu}\sqrt{s}\alpha+\frac{1}{\upsilon+\mu}\frac{4\sqrt{2}\sqrt{s^*}}{\sqrt{1-k}}\alpha+(1+4\sqrt{\frac{s^*}{s}})^{\frac{1}{2}}(1-2\frac{\upsilon-\gamma}{\upsilon+\mu})\|\beta^k-\beta^*\|_2,
\end{equation}
by assumption we can see that $(1+4\sqrt{\frac{s^*}{s}})^{\frac{1}{2}}(1-2\frac{\upsilon-\gamma}{\upsilon+\mu})\leq \sqrt{1-2\frac{\upsilon-\gamma}{\upsilon+\mu}}$.
Thus, we have 
\begin{equation}
     \|\beta^{k+1}-\beta^*\|_2\leq \frac{1}{\upsilon+\mu}\frac{(2\sqrt{s}+4\sqrt{2}\sqrt{s^*}/\sqrt{1-k})\alpha}{1-\sqrt{1-2\frac{\upsilon-\gamma}{\upsilon+\mu}}}+(\sqrt{1-2\frac{\upsilon-\gamma}{\upsilon+\mu}})^kR.
\end{equation}
By the assumption of $\frac{1}{\upsilon+\mu}\frac{(2\sqrt{s}+4\sqrt{2}\sqrt{s^*}/\sqrt{1-k})\alpha}{1-\sqrt{1-2\frac{\upsilon-\gamma}{\upsilon+\mu}}}\leq ({1-\sqrt{1-2\frac{\upsilon-\gamma}{\upsilon+\mu}}})R$, we have 
\begin{equation}
      \|\beta^{k+1}-\beta^*\|_2\leq ({1-\sqrt{1-2\frac{\upsilon-\gamma}{\upsilon+\mu}}})R+\sqrt{1-2\frac{\upsilon-\gamma}{\upsilon+\mu}}R=R.
\end{equation}
Hence, by Lemma \ref{alemma:7}, we obtain (\ref{aeq:44})  for the case of $t=k+1$. This completes the proof.
\end{proof}

\subsection{Proof of Lemma 2}
From (\ref{eq:5}) it is oblivious that $[\nabla q_i(\beta,\beta))]_j$ is independent of  other $i\in [n]$ for fixed $j\in [d]$. Next, we prove the property of sub-exponential for each coordinate. 

Note that $$[\nabla q_i(\beta,\beta))]_j= [2w_\beta(y_i)-1]y_{i,j}-\beta_j,$$ and $$\mathbb{E}[\nabla q_i(\beta,\beta))]_j= \mathbb{E}(2w_\beta(Y)Y_j-Y_j)-\beta_j.$$ For convenience, we let $\nabla q_{i,j}$  denote $[\nabla q_i(\beta,\beta))]_j$ 
and $\nabla q_{j}$ denote  $\mathbb{E}[\nabla q_i(\beta,\beta))]_j$. 

By the symmetrization lemma in Lemma \ref{alemma:2}, we have the following for any $t>0$
\begin{equation}\label{aeq:50}
    \mathbb{E}\{\exp(t|[\nabla q_{i,j}-\nabla q_j]|)\}\leq  \mathbb{E}\{\exp(t|\epsilon [2w_\beta(y_i)-1]y_{i,j}|)\},
\end{equation}
where $\epsilon$ is a Rademacher random variable. 

Next, we use Lemma \ref{alemma:3} with $f(y_{i,j})=y_{i,j}$, $\mathcal{F}=\{f\}$, $\phi_i(v)=[2w_\beta(y_i)-1]v$ and $\phi(v)=\exp(u\cdot v)$. It is easy to see that $\phi_i$ is 1-Lipschitz. Thus,  by Lemma \ref{alemma:3} we have
\begin{equation}\label{aeq:51}
    \mathbb{E}\{\exp(t|\epsilon [2w_\beta(y_i)-1]y_{i,j}|)\}\leq \mathbb{E}\{\exp[2t|\epsilon y_{i,j}|]\}. 
\end{equation}

By the formulation of the model, we have $y_{i, j}=z_i \beta^*_{j}+v_{i,j}$, where $z_i$ is a Rademacher random variable and $v_{i,j}\sim \mathcal{N}(0, \sigma^2)$. It is easy to see that $y_{i,j}$ is sub-Gaussian and 
\begin{equation}\label{aeq:52}
    \|y_{i,j}\|_{\psi_2}=\|z_i\cdot \beta^*_j+v_{i,j}\|_{\psi_2}\leq C\cdot \sqrt{\|z_i\cdot \beta_j\|^2_{\psi_2}+\|v_{i,j}\|^2_{\psi_2}}\leq C'\sqrt{|\beta_j^*|^2+\sigma^2},
\end{equation}
for some absolute constants $C, C'$, where the last inequality is due to the facts that $\|z_j\beta_j^*\|_{\psi_2}\leq |\beta_j^*|$ and $\|v_{i,j}\|_{\psi_2}\leq C''\sigma^2$ for some $C''>0$. 

Since $|\epsilon y_{i,j}|=|y_{i,j}|$, $\|\epsilon y_{i,j}\|_{\psi_2}=\|y_{i,j}\|_{\psi_2}$ and $\mathbb{E}(\epsilon y_{i,j})=0$, by Lemma 5.5 in \cite{vershynin2010introduction} we have that  for any $u'$ there exists a constant $C^{(4)}>0$ such that 
\begin{equation}\label{aeq:53}
    \mathbb{E}\{\exp(u' \cdot \epsilon \cdot y_{i,j})\}\leq \exp(u'^2\cdot C^{(4)} \cdot (|\beta|_j^2+\sigma^2)). 
\end{equation}
Thus, for any $t>0$ we get
\begin{equation}\label{aeq:54}
      \mathbb{E}\{\exp(2t\cdot |\epsilon \cdot y_{i,j}|)\}\leq 2\exp(t^2\cdot C^{(5)}\cdot (|\beta|_j^2+\sigma^2))
\end{equation}
for some constant $C^{(5)}$. 
Therefore, in total we have the following  for some constant $C^{(6)}>0$
\begin{equation}\label{aeq:55}
        \mathbb{E}\{\exp(t|[\nabla q_{i,j}-\nabla q_j]|)\} \leq \exp(t^2\cdot C^{(6)}\cdot (|\beta|_j^2+\sigma^2))\leq \exp(t^2\cdot C^{(6)}\cdot (|\beta^*|_\infty^2+\sigma^2)).
\end{equation}
Combining this with Lemma \ref{alemma:4} and the definition,  we know that  $\nabla q_{i,j}$ is $O(\sqrt{\|\beta^*\|_\infty^2+\sigma^2})$-sub-exponential. 
\subsection{Proof of Lemma 4}
From (\ref{eq:7}) it is oblivious that $[\nabla q_i(\beta,\beta))]_j$ is independent of other $i\in [n]$ for any fixed $j\in[d]$. Next, we prove the property of sub-exponential. 

Note that $\mathbb{E}\nabla q_{i,j} = \mathbb{E} 2w_\beta(x, y)y\cdot x_j-\beta_j$. Thus, we have 
\begin{equation}\label{aeq:56}
    \nabla q_{i,j}-\nabla q_j= \underbrace{2w_\beta(x_i, y_i)y_i x_{i,j}- \mathbb{E}[]2w_\beta(x,y)y x_j]}_{A}+\underbrace{[x_ix_i^T\beta-\beta]_j}_{B}- \underbrace{y_ix_{i,j}}_{C}.
\end{equation}
For term A and any $t>0$, we have 
\begin{equation}\label{aeq:57}
    \mathbb{E}\{\exp(t|A|)\}\leq \mathbb{E}\{\exp[t|2\epsilon w_\beta(x_i, y_i)y_i x_{i,j}|]\}.
\end{equation}
Using Lemma \ref{alemma:3} on  $f(y_ix_{i,j})=y_ix_{i,j}$, $\mathcal{F}=f$, $\phi_i(v)= 2w_\beta(x,y)v$ and $\phi(v)=\exp(uv)$, we have 
\begin{equation}\label{aeq:58}
     \mathbb{E}\{\exp[t|2\epsilon w_\beta(x_i, y_i)y_i x_{i,j}|]\leq \mathbb{E}\{\exp[4t|\epsilon y_i x_{i,j}|]\}.
\end{equation}
Note that since $y_i=z_i\langle \beta^*,x_i\rangle +v_i$ and  $\|z_i\langle \beta^*,x_i\rangle\|_{\psi_2}=\|\langle \beta^*,x_i\rangle\|_{\psi_2}\leq C\|\beta^*\|_2$ and $\|v_i\|_{\psi_2}\leq C'\sigma$ for some constants $C, C'>0$, by Lemma \ref{alemma8} we know that  there exists a constant $C''>0$ such that 
\begin{equation}\label{aeq:59}
    \|y_i\|_{\psi_2}\leq C''\sqrt{\|\beta^*\|_2^2+\sigma^2}.
\end{equation}
Thus, by Lemma \ref{alemma7} we have 
\begin{equation}\label{aeq:60}
    \|y_ix_{i,j}\|_{\psi_1}\leq \max\{C''^2(\|\beta^*\|_2^2+\sigma^2), C'''\}\leq C_4\max\{\|\beta^*\|_2^2+\sigma^2,1\}. 
\end{equation}
For term B, we have 
\begin{equation}
    \mathbb{E}\{\exp[t |B|]\}=\mathbb{E}\{\exp[t |\sum_{k=1}^d x_{j}x_k\beta_k-\beta_j|]\},
\end{equation}
where $x_j, x_k\sim \mathcal{N}(0, 1)$. Now, by Lemma \ref{alemma7} we have  $\|x_jx_k\beta_k\|_{\psi_1}\leq |\beta_k|C^{(5)}$ for some constant $C^{(5)}>0$. Thus, we get $\|\sum_{k=1}^d x_jx_k\beta_k\|_{\psi_1}\leq C^{(5)}\|\beta\|_1$. 

Also, we know that $\|\beta\|_1\leq \sqrt{s}\|\beta\|_2$, since by assumption $\|\beta\|_0=s$. Furthermore, we have $\|\beta\|_2\leq \|\beta^*\|_2+\|\beta^*-\beta\|_2\leq (1+\frac{1}{32})\|\beta^*\|_2$, since  $\beta\in \mathcal{B}$ (by assumption). From Lemma \ref{alemma6}, we get $\|B\|_{\psi_1}\leq C^{(6)}\sqrt{s}\|\beta^*\|_2$ with some constant $C^{(6)}>0$.

 Thus, we know that there exist some constants $C^{(7)}>0$ and $C^{(8)}>0$ such that 
 
 \begin{align*}
 	\|\nabla q_{i,j}-\nabla q_j\|_{\psi_1} &\leq C^{(7)}\max\{\|\beta^*\|_2^2+\sigma^2, 1\}+ C^{(8)}\sqrt{s}\|\beta^*\|_2 \\
 	&\leq C^{(9)}\max\{\|\beta^*\|_2^2+\sigma^2, 1, \sqrt{s}\|\beta^*\|_2\}.
 \end{align*}
This means that $\nabla q_{i,j}$ is  $O(\max\{\|\beta^*\|_2^2+\sigma^2, 1, \sqrt{s}\|\beta^*\|_2\})$ sub-exponential. 
 
 \subsection{Proof of Lemma 6}
For simplicity, we use notations  $\bar{m}^i=m_\beta(x_i^{\text{obs}}, y_i)$, $\bar{m}=\beta(x^{\text{obs}}, y)$, $\bar{K}^i=K_\beta(x_i^{\text{obs}}, y_i)$, and $\bar{K}=K_\beta(x^{\text{obs}}, y)$. Then, we have 
\begin{equation}
    \nabla q_i -\nabla q=\underbrace{ m_\beta(x_i^{\text{obs}}, y_i)y_i-\mathbb{E} [m_\beta(x_i^{\text{obs}}, y_i)y_i]}_{A}+\overbrace{ \big(K_\beta(x_i^{\text{obs}}, y_i)-\mathbb{E}{K_\beta(x_i^{\text{obs}}, y_i)}\big)\beta}^{B}.
\end{equation}
For the $j$-th coordinate of $A$, we have 
\begin{equation}
    A_j=  \bar{m}^i_j y_i-\mathbb{E} [\bar{m}_j y].
\end{equation}
We note that $\bar{m}_j$ is a zero-mean sub-Gaussian random variable with $\|\bar{m}_j\|_{\psi_2}\leq C(1+kr)$ (see Lemma B.3 in \cite{wang2015high})
\begin{lemma}\label{alemma:11}
Under the assumption of Lemma 6, for each $j\in [d]$, $\bar{m}_j$ is sub-Gaussian with mean zero and $\|\bar{m}_j\|_{\psi_2}\leq C(1+kr)$.
\end{lemma}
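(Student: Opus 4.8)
The plan is to treat the two coordinate types separately and to exploit the explicit formula \eqref{eq:10} for $\bar m$, conditioning throughout on the missingness indicator $z$ (recall $z_j=1$ iff $x_j$ is observed). For an observed coordinate ($z_j=1$) the second summand in \eqref{eq:10} has a vanishing $j$-th entry, so $\bar m_j=x_j\sim\mathcal N(0,1)$, which is mean-zero sub-Gaussian with $\|\bar m_j\|_{\psi_2}\le C\le C(1+kr)$. Hence the whole burden falls on the missing coordinates.

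For a missing coordinate ($z_j=0$) the formula reduces to $\bar m_j=\frac{\beta_j}{D}\,N$, where $D:=\sigma^2+\|(1-z)\odot\beta\|_2^2$ and $N:=Y-\langle\beta,z\odot X\rangle$. Substituting $Y=\langle X,\beta^*\rangle+V$ gives $N=\langle z\odot X,\beta^*-\beta\rangle+\langle(1-z)\odot X,\beta^*\rangle+V$, which, conditional on $z$, is a linear combination of the independent Gaussians $X$ and $V$; thus $N\mid z$ is Gaussian with mean $0$ and variance $S:=\|z\odot(\beta^*-\beta)\|_2^2+\|(1-z)\odot\beta^*\|_2^2+\sigma^2$. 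Since $D$ and $\beta_j$ are deterministic given $z$, $\bar m_j\mid z$ is itself a mean-zero Gaussian, so $\mathbb E[\bar m_j\mid z]=0$ and hence $\mathbb E[\bar m_j]=0$; it then remains only to bound $\mathrm{Var}(\bar m_j\mid z)=\beta_j^2 S/D^2$ by $C(1+kr)^2$ uniformly in $z$.

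The key estimate, and the main obstacle since a naive bound yields only $O(1+r)$, is to keep the \emph{full} denominator rather than discarding it. I would use the three facts $\beta_j^2\le\|(1-z)\odot\beta\|_2^2=D-\sigma^2\le D$, then $\|z\odot(\beta^*-\beta)\|_2\le\|\beta-\beta^*\|_2\le k\|\beta^*\|_2\le kr\sigma$ (using $\beta\in\mathcal B$ and $\|\beta^*\|_2\le r\sigma$), and the triangle inequality $\|(1-z)\odot\beta^*\|_2\le\|(1-z)\odot\beta\|_2+\|\beta-\beta^*\|_2\le\sqrt{D}+kr\sigma$, which couples the signal term in $S$ to the denominator $D$. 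Together these give $\beta_j^2 S/D^2\le S/D\le 1+2kr\sigma/\sqrt{D}+(1+2k^2r^2)\sigma^2/D$, and then $D\ge\sigma^2$ collapses the last two terms to $2kr$ and $1+2k^2r^2$, so that $\mathrm{Var}(\bar m_j\mid z)\le 2(1+kr)^2$. Consequently $\|\bar m_j\mid z\|_{\psi_2}\le C(1+kr)$ for every pattern $z$.

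Finally, because the conditional bound is uniform in $z$ and $z$ is independent of $(X,V)$, the moment generating function satisfies $\mathbb E[\exp(t\bar m_j)]=\mathbb E_z\big[\mathbb E[\exp(t\bar m_j)\mid z]\big]\le\exp\big(Ct^2(1+kr)^2\big)$, so $\bar m_j$ is unconditionally mean-zero sub-Gaussian with $\|\bar m_j\|_{\psi_2}\le C(1+kr)$, matching Lemma B.3 of \citep{wang2015high}. The conditional-Gaussian structure makes every step routine except the variance bound above, which is where the improvement from $(1+r)$ to the sharper $(1+kr)$ is obtained.
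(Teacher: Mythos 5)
Your proof is correct, but note that the paper itself contains no argument for this lemma at all: it is imported verbatim as Lemma B.3 of \citep{wang2015high}, so the only ``proof'' in the paper is a citation. Your proposal is therefore a genuine, self-contained replacement rather than a variant of the paper's argument. The structure is sound: splitting on the missingness pattern $z$, noting $\bar m_j = x_j\sim\mathcal{N}(0,1)$ when $z_j=1$ and $\bar m_j=\beta_j N/D$ conditionally mean-zero Gaussian when $z_j=0$, and reducing everything to the conditional variance $\beta_j^2 S/D^2$. The three estimates you isolate are each valid under the assumptions inherited from Lemma 5: $\beta_j^2\le \|(1-z)\odot\beta\|_2^2 = D-\sigma^2\le D$ because $z_j=0$; $\|\beta-\beta^*\|_2\le k\|\beta^*\|_2\le kr\sigma$ because $\beta\in\mathcal{B}$ and this model assumes the SNR is \emph{upper} bounded, $\|\beta^*\|_2/\sigma\le r$ (unlike the GMM and MRM cases, where SNR is lower bounded — your proof correctly relies on this direction); and the coupling $\|(1-z)\odot\beta^*\|_2\le\sqrt{D}+kr\sigma$, which is the one non-routine step. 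The arithmetic then gives $S/D\le 1+2kr\sigma/\sqrt{D}+(1+2k^2r^2)\sigma^2/D\le 2+2kr+2k^2r^2\le 2(1+kr)^2$, as you claim, and your remark that discarding the denominator (bounding $\|(1-z)\odot\beta^*\|_2$ by $r\sigma$ and $\sigma^2/D$ by $1$ separately) would only yield $C(1+r)$ is accurate — the coupling is what delivers the stated $(1+kr)$ scaling. The final passage from a uniform-in-$z$ conditional MGF bound to an unconditional $\psi_2$ bound (a mixture of mean-zero Gaussians with uniformly bounded variance is sub-Gaussian) is standard and consistent with the paper's own toolkit (Lemma 5.5 of \citep{vershynin2010introduction}, already invoked in the proof of Lemma 2). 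What the paper's citation buys is brevity; what your proof buys is verifiability within this paper's own assumptions, which is worth having since the constant $C(1+kr)$ feeds directly into the sub-exponential parameter $\xi$ of Lemma 6 and hence into Theorem 4's error bound.
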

Thus, by Lemma \ref{alemma7} we have 
\begin{equation}
    \|\bar{m}_j y_i\|_{\psi_1}\leq C\max\{\|\bar{m}_j\|_{\psi_2}^2, \|y\|_{\psi_2}^2\}\leq C'\max\{(1+kr)^2, \sigma^2+\|\beta^*\|_2^2\},
\end{equation}
where the last inequality is due to the fact that $y=\langle \beta^*, x\rangle+v$. Thus,  $\|y\|_{\psi_2}^2\leq C_3(\|\langle \beta^*, x\rangle\|_{\psi_2}^2+\|v\|_{\psi_2}^2)$ for some $C_3$. 

For term B, we have 
\begin{equation}\label{aeq:65}
    \bar{K}^i_j= \underbrace{(1-z_{i,j})\beta_j}_{C}+\underbrace{\sum_{k=1}^d\bar{m}^i_j\bar{m}^i_k\beta_k}_{D}-\underbrace{\sum_{k=1}^d[(1-z_{i,j})\bar{m}^i_j][(1-z_{i,k})\bar{m}^i_k]\beta_k}_{E}. 
\end{equation}
For term C, we have the following (by Example 5.8 in \cite{vershynin2010introduction})
\begin{equation}
  \|  (1-z_{i,j})\beta_j\|_{\psi_2}\leq |\beta_j|\leq \|\beta\|_{\infty}\leq (1+k)\sqrt{s}\|\beta^*\|_2. 
\end{equation}
For term D, by Lemma \ref{alemma:11} and \ref{alemma7} we have 
\begin{equation}
    \|\sum_{k=1}^d\bar{m}^i_j\bar{m}^i_k\beta_k\|_{\psi_1}\leq \sum_{k=1}^d|\beta_k|\|\bar{m}^i_j\bar{m}^i_k\|_{\psi_1}\leq \sum_{k=1}^d|\beta_k|C^2(1+kr)^2\leq C_4 (1+kr)^2\|\beta\|_1.
\end{equation}
Since $\beta\in \mathcal{B}$, we get $\|\beta\|_1\leq \sqrt{s}\|\beta\|_2\leq (1+k)\sqrt{s}\|\beta^*\|_2$. Thus, we have 
\begin{equation}
     \|\sum_{k=1}^d\bar{m}^i_j\bar{m}^i_k\beta_k\|_{\psi_1}\leq C_4\sqrt{s}(1+kr)^2\|\beta^*\|_2. 
\end{equation}
For term E, since $1-z_i\in [0,1]$, we have $\|(1-z_{i,j})\bar{m}^i_j\|_{\psi_2}\leq \|\bar{m}^i_j\|_{\psi_2}\leq C(1+kr)$. Hence, by Lemma \ref{alemma7} we get  
\begin{align}
   \| \sum_{k=1}^d[(1-z_{i,j})\bar{m}^i_j][(1-z_{i,k})\bar{m}^i_k]\beta_k\|_{\psi_1}&\leq \sum_{k=1}^d|\beta_k| \|[(1-z_{i,j})\bar{m}^i_j][(1-z_{i,k})\bar{m}^i_k]\|_{\psi_1} \nonumber \\
   &\leq \sum_{k=1}^d|\beta_k| C(1+kr)^2\leq C_6(1+kr)^2\sqrt{s}\|\beta^*\|_2.
\end{align}
This gives us  
\begin{equation}
    \|\bar{K}^i_j\|_{\psi_1}\leq C_7\sqrt{s}(1+k)(1+kr)^2\|\beta^*\|_2.
\end{equation}
By Lemma \ref{alemma6}, we get
\begin{equation}
    \|[\nabla q_i-\nabla q]_j\|_{\psi_1}\leq 2\|[\nabla q_i]_j\|_{\psi_1}\leq  C_8[(1+k)(1+kr)^2\sqrt{s}\|\beta^*\|_2+\max\{(1+kr)^2, \sigma^2+\|\beta^*\|_2^2\}]. 
\end{equation}

\end{document}